\DeclareMathOperator*{\argmin}{argmin}
\newtheorem{theorem}{Theorem}[section]
\newtheorem{lemma}[theorem]{Lemma}
\newtheorem{definition}[theorem]{Definition}
\newcommand{\minisection}[1]{\noindent{\textbf{#1}}}
\title{Simple Transferability Estimation for Regression Tasks}
\author{\href{mailto:<cnguy049@cs.fiu.edu>?Subject=Your UAI 2023 paper}{Cuong N.~Nguyen}$^1$ \qquad Phong Tran$^{2,3}$ \qquad Lam Si Tung Ho$^4$ \qquad Vu Dinh$^5$ {\vskip 0.1cm} Anh T.~Tran$^2$ \qquad Tal Hassner$^6$ \qquad Cuong V.~Nguyen}
\affil{Florida International University, USA \qquad $^2$VinAI Research, Vietnam \qquad $^3$MBZUAI, UAE {\vskip 0.1cm} $^4$Dalhousie University, Canada \qquad
$^5$University of Delaware, USA \qquad $^6$Meta AI, USA
}
\begin{document}
\maketitle

\begin{abstract}
We consider transferability estimation, the problem of estimating how well deep learning models transfer from a source to a target task. We focus on regression tasks, which received little previous attention, and propose two simple and computationally efficient approaches that estimate transferability based on the negative regularized mean squared error of a linear regression model. We prove novel theoretical results connecting our approaches to the actual transferability of the optimal target models obtained from the transfer learning process. Despite their simplicity, our approaches significantly outperform existing state-of-the-art regression transferability estimators in both accuracy and efficiency. On two large-scale keypoint regression benchmarks, our approaches yield 12\% to 36\% better results on average while being at least 27\% faster than previous state-of-the-art methods.
\end{abstract}

\section{Introduction}

Transferability estimation~\citep{bao2019information, tran2019transferability, nguyen2020leep} aims to develop computationally efficient metrics to predict the effectiveness of transferring a deep learning model from a source to a target task. This problem has recently gained attention as a means for model and task selection~\citep{bao2019information, tran2019transferability, nguyen2020leep, bolya2021scalable, you2021logme} that can potentially improve the performance and reduce the cost of transfer learning, especially for expensive deep learning models. In recent years, new transferability estimators were also developed and used in applications such as checkpoint ranking~\citep{huang2021exploiting, li2021ranking} and few-shot learning~\citep{tong2021mathematical}.

Nearly all existing methods consider only the transferability between classification tasks~\citep{bao2019information, tran2019transferability, nguyen2020leep, deshpande2021linearized, li2021ranking, tan2021otce, huang2022frustratingly}, with very few designed for regression~\citep{you2021logme, huang2022frustratingly}, despite the importance of regression problems in a wide range of applications such as landmark detection~\citep{fard2021asmnet, poster2021visible}, object detection and localization~\citep{cai2020deep, bu2021gaia}, pose estimation~\citep{schwarz2015rgb, doersch2019sim2real}, or image generation~\citep{ramesh2021zero, razavi2019generating}. Moreover, those few methods are often a byproduct of a classification transferability estimator and were never tested against regression transferability estimation baselines.

In this paper, we explicitly consider transferability estimation for regression tasks and formulate a novel definition for this problem. Our formulation is based on the practical usage of transferability estimation: to compare the actual transferability between different tasks~\citep{bao2019information, tran2019transferability, nguyen2020leep, you2021logme}. We then propose two \emph{simple, efficient, and theoretically grounded} approaches for this problem that estimate transferability using the negative regularized mean squared error (MSE) of a linear regression model computed from the source and target training sets. The first approach, \emph{Linear MSE}, uses the linear regression model between features extracted from the source model (a model trained on the source task) and true labels of the target training set. The second approach, \emph{Label MSE}, estimates transferability by regressing between the \emph{dummy} labels, obtained from the source model, and true labels of the target data. In special cases where the source and target data share the inputs, the Label MSE estimators can be computed even more efficiently from the true labels without a source model.

In addition to their simplicity, we show our transferability estimators to have theoretical properties relating them to the actual transferability of the transferred target model. In particular, we prove that the transferability of the target model obtained from transfer learning is lower bounded by the Label MSE minus a complexity term, which depends on the target dataset size and the model architecture. Similar theoretical results can also be proven for the case where the source and target tasks share the inputs.

We conduct extensive experiments on two real-world keypoint detection datasets, CUB-200-2011~\citep{wah2011caltech} and OpenMonkey~\citep{yao2021openmonkeychallenge}, as well as the dSprites shape regression dataset~\citep{matthey2017dsprites} to show the advantages of our approaches. The results clearly demonstrate that despite their simplicity, our approaches outperform recently published, state-of-the-art (SotA) regression transferability estimators, such as LogME~\citep{you2021logme} and TransRate~\citep{huang2022frustratingly}, in both effectiveness and efficiency. In particular, our approaches can improve SotA results from 12\% to 36\% on average, while being at least 27\% faster.

\minisection{Summary of contributions.} (1) We formulate a new definition for the transferability estimation problem that can be used for comparing the actual transferability (\S\ref{settings}). (2) We propose Linear MSE and Label MSE, two simple yet effective  transferability estimators for regression tasks (\S\ref{sec:estimators}). (3) We prove novel theoretical results for these estimators to connect them with the actual task transferability (\S\ref{sec:theory}). (4) We rigorously test our approaches in various settings and challenging benchmarks, showing their advantages compared to SotA regression transferability methods (\S\ref{sec:experiment}).\footnote{Implementations of our methods are available at: \url{https://github.com/CuongNN218/regression_transferability}.}

\section{Related work}

Our paper is one of the recent attempts to develop efficient and effective transferability estimators for deep transfer learning~\citep{bao2019information, tran2019transferability, nguyen2020leep, deshpande2021linearized, li2021ranking, tan2021otce, you2021logme, huang2022frustratingly, nguyen2022generalization}, which is closely related to the generalization estimation problem~\citep{chuang2020estimating, deng2021labels}. Most of the existing work for transferability estimation focuses on classification~\citep{bao2019information, tran2019transferability, nguyen2020leep, deshpande2021linearized, li2021ranking, tan2021otce, nguyen2022generalization}, while we are only aware of two methods developed for regression~\citep{you2021logme, huang2022frustratingly}.

One regression transferability method, called LogME~\citep{you2021logme}, takes a Bayesian approach and uses the maximum log evidence of the target data as the transferability estimator. While this method can be sped up using matrix decomposition, its scalability is still limited since the required memory is large. In contrast, our proposed approaches are simpler, faster, and more effective. We also provide novel theoretical properties for our methods that were not available for LogME. Another approach for transferability estimation between regression tasks, called TransRate~\citep{huang2022frustratingly}, is to divide the real-valued outputs into different bins and apply a classification transferability estimator. In our experiments, we will show that this approach is less accurate than both LogME and our approaches.

Transferability can also be inferred from a task taxonomy~\citep{zamir2018taskonomy, dwivedi2019representation, dwivedi2020duality} or a task space representation~\citep{achille2019task2vec}, which embeds tasks as vectors on a vector space. A popular task taxonomy, Taskonomy~\citep{zamir2018taskonomy}, exploits the underlying structure of visual tasks by computing a task affinity matrix that can be used for estimating transferability. Constructing the Taskonomy requires training a small classification head, which resembles the training of the regularized linear regression models in our approaches. However, they investigate the global taxonomy of classification tasks, while our paper studies regression tasks with a focus on estimating their transferability efficiently.

Our paper is also related to transfer learning with kernel methods~\citep{radhakrishnan2022transfer} and with deep models~\citep{tan2018survey}, which has been successful in real-world regression problems such as object detection and localization~\citep{cai2020deep, bu2021gaia}, landmark detection~\citep{fard2021asmnet, poster2021visible}, or pose estimation~\citep{schwarz2015rgb, doersch2019sim2real}. Several previous works have investigated theoretical bounds for transfer learning~\citep{ben2003exploiting, blitzer2007learning, mansour2009domain, azizzadenesheli2018regularized, wang2019transfer, tripuraneni2020theory}; however, these bounds are hard to compute in practice and thus unsuitable for transferability estimation. Some previous transferability estimators have theoretical bounds on the empirical loss of the transferred model~\citep{tran2019transferability, nguyen2020leep}, but these bounds were for classification and did not relate directly to transferability. Our bounds, on the other hand, focus on regression and connect our approaches directly to the notion of transferability.

\section{Transferability between regression tasks}
\label{settings}

In this section, we describe the transfer learning setting that will be used in our subsequent analysis. We then propose a definition of transferability for regression tasks and a new formulation for the transferability estimation problem.

\subsection{Transfer learning for regression}
\label{sec:transfer_learning}

Consider a source training set $\mathcal{D}_s = \{ (x^s_i, y^s_i) \}_{i=1}^{n_s}$ and a target training set $\mathcal{D}_t = \{ (x^t_i, y^t_i) \}_{i=1}^{n_t}$ consisting of $n_s$ and $n_t$ examples respectively, where $x^s_i, x^t_i \in \mathbb{R}^d$ are $d$-dimensional input vectors, $y^s_i \in \mathbb{R}^{d_s}$ is a $d_s$-dimensional source label vector, and $y^t_i \in \mathbb{R}^{d_t}$ is a $d_t$-dimensional target label vector. Here we allow multi-output regression tasks (with $d_s, d_t \ge 1$) where the source and target labels may have different dimensions ($d_s \neq d_t$). In the simplest case, the source and target tasks are both single-output regression tasks where $d_s = d_t = 1$.

In this paper, we will refer to a model (such as $w$, $w^*$, $h$, $h^*$, $k$, or $k^*$) and its parameters interchangeably. Using the source dataset $\mathcal{D}_s$, we train a deep learning model $(w^*, h^*)$ consisting of an optimal feature extractor $w^*$ and an optimal regression head $h^*$ that minimizes the empirical MSE loss:\footnote{Here we assume $(w^*, h^*)$ is a global minimum of Eq.~\eqref{eq:source}. However, practical optimization algorithms often only return a local minimum for this problem. The same is also true for Eq.~\eqref{eq:target}.}
\begin{equation}
\textstyle w^*, h^* = \argmin_{w, h} \mathcal{L} (w, h; \mathcal{D}_s),
\label{eq:source}
\end{equation}
where $w : \mathbb{R}^d \rightarrow \mathbb{R}^{d_r}$ is a feature extractor network that transforms a $d$-dimensional input vector into a $d_r$-dimensional feature vector, $h : \mathbb{R}^{d_r} \rightarrow \mathbb{R}^{d_s}$ is a source regression head network that transforms a $d_r$-dimensional feature vector into a $d_s$-dimensional output vector, and $\mathcal{L} (w, h; \mathcal{D}_s)$ is the empirical MSE loss of the whole model $(w, h)$ on the dataset $\mathcal{D}_s$:
\begin{equation}
\mathcal{L} (w, h; \mathcal{D}_s) = \frac{1}{n_s} \sum_{i=1}^{n_s} \| y^s_i - h(w(x^s_i)) \|^2,
\end{equation}
with $\| \cdot \|$ being the $\ell_2$ norm. In practice, we usually consider a source model (e.g.,~a ResNet~\citep{he2016deep}) as a whole and use its first $l$ layers from the input (for some chosen number $l$) as the feature extractor $w$. The regression head $h$ is the remaining part of the model from the $l$-th layer to the output layer, and the prediction for any input $x$ is $h(w(x))$.

After training the optimal source model $(w^*, h^*)$, we perform transfer learning to the target task by freezing the optimal feature extractor $w^*$ and re-training a new regression head $k^*$ using the target dataset $\mathcal{D}_t$, also by minimizing the empirical MSE loss:
\begin{align}
    k^* &= \textstyle \argmin_k \mathcal{L} (w^*, k; \mathcal{D}_t) \notag\\
    &= {\textstyle \argmin_k} \Big\{ \frac{1}{n_t} \sum_{i=1}^{n_t} \| y^t_i - k(w^*(x^t_i)) \|^2 \Big\}
    \label{eq:target},
\end{align}
where $k : \mathbb{R}^{d_r} \rightarrow \mathbb{R}^{d_t}$ is a target regression head network that may have a different architecture than that of $h$. In general, the regression heads $h$ and $k$ may contain multiple layers and are not necessarily linear. 

This transfer learning algorithm, usually called \emph{head re-training}, has been widely used for deep learning models~\citep{donahue2014decaf, oquab2014learning, sharif2014cnn, whatmough2019fixynn} and will be used for our theoretical analysis. In practice and in our experiments, we also consider another transfer learning algorithm, widely known as \emph{fine-tuning}, where we fine-tune the trained feature extractor $w^*$ on the target set, and then train a new target regression head $k^*$ with this fine-tuned feature extractor~\citep{agrawal2014analyzing, girshick2014rich, chatfield2014return, dhillon2019baseline}.

\subsection{Transferability estimation}

As our first contribution, we propose a definition of transferability for regression tasks and a new formulation for the transferability estimation problem. For this purpose, we make the standard assumption that the target data $\mathcal{D}_t$ are drawn iid from the true but unknown distribution $\mathbb{P}_t := \mathbb{P}(X^t, Y^t)$; that is, $(x^t_i, y^t_i) \stackrel{\mathrm{iid}}{\sim} \mathbb{P}_t$. We do not make any assumption on the distribution of the source data $\mathcal{D}_s$, but we assume a source model $(w^*, h^*)$ is pre-trained on $\mathcal{D}_s$ and then transferred to a target model $(w^*, k^*)$ using the procedure in Section~\ref{sec:transfer_learning}. 

We now define the transferability between the source dataset $\mathcal{D}_s$ and the target task represented by $\mathbb{P}_t$. In our Definition~\ref{def:transferability} below, the transferability is the expected negative $\ell_2$ loss of the target model $(w^*, k^*)$ on a random example drawn from $\mathbb{P}_t$. From this definition, the lower the loss of $(w^*, k^*)$, the higher the transferability.

\begin{definition}
The \textbf{transferability} between a source dataset $\mathcal{D}_s$ and a target task $\mathbb{P}_t$ is defined as: $\mathrm{Tr}(\mathcal{D}_s, \mathbb{P}_t) := \mathbb{E}_{(x^t, y^t) \sim \mathbb{P}_t} \left \{ - \|y^t - k^*(w^*(x^t)) \|^2 \right \}$.
\label{def:transferability}
\end{definition}

In the above definition, transferability is also equivalent to the negative expected (true) risk of $(w^*, k^*)$. Next, we formulate the transferability estimation problem. Previous work~\citep{tran2019transferability, huang2022frustratingly} defined this problem as estimating $\mathrm{Tr}(\mathcal{D}_s, \mathbb{P}_t)$ from the training sets $(\mathcal{D}_s, \mathcal{D}_t)$, i.e., to derive a real-valued metric $\mathcal{T}(\mathcal{D}_s, \mathcal{D}_t) \in \mathbb{R}$ such that ${ \mathcal{T}(\mathcal{D}_s, \mathcal{D}_t) \approx \mathrm{Tr}(\mathcal{D}_s, \mathbb{P}_t) }$. However, in most applications of transferability estimation such as task selection~\citep{tran2019transferability, huang2022frustratingly, you2021logme} or model ranking~\citep{huang2021exploiting, li2021ranking}, an accurate approximation of $\mathrm{Tr}(\mathcal{D}_s, \mathbb{P}_t)$ is usually not required since $\mathcal{T}(\mathcal{D}_s, \mathcal{D}_t)$ is only used for comparing tasks or models. Thus, we propose below an \emph{alternative definition} for this problem that better aligns with its practical usage.

\begin{definition}
\textbf{Transferability estimation} aims to find a computationally efficient real-valued metric ${ \mathcal{T}(\mathcal{D}_s, \mathcal{D}_t) \in \mathbb{R} }$ for any pair of training datasets $(\mathcal{D}_s, \mathcal{D}_t)$ such that:
$\mathcal{T}(\mathcal{D}_s, \mathcal{D}_t) \le \mathcal{T}(\mathcal{D}'_s, \mathcal{D}'_t)$ if and only if $\mathrm{Tr}(\mathcal{D}_s, \mathbb{P}_t) \leq \mathrm{Tr}(\mathcal{D}'_s, \mathbb{P}'_t)$, where $\mathbb{P}_t$ and $\mathbb{P}'_t$ are the tasks corresponding with the datasets $\mathcal{D}_t$ and $\mathcal{D}'_t$ respectively.
\label{def:trans_est}
\end{definition}

In our new definition, a transferability estimator $\mathcal{T}(\mathcal{D}_s, \mathcal{D}_t)$ is a function of $(\mathcal{D}_s, \mathcal{D}_t)$ that can be used for comparing or ranking transferability. It does \emph{not} need to be an approximation of $\mathrm{Tr}(\mathcal{D}_s, \mathbb{P}_t)$. This is a generalization of previous definitions~\citep{nguyen2020leep, huang2022frustratingly} and can be used for source task selection (when $\mathbb{P}_t = \mathbb{P}'_t$ and ${\mathcal{D}_t = \mathcal{D}'_t}$) as well as target task selection (when $\mathcal{D}_s = \mathcal{D}'_s$). It is consistent with the usage of transferability estimators and the way they are evaluated in the literature by correlation analysis~\citep{tran2019transferability, nguyen2020leep, you2021logme, huang2022frustratingly}.

\section{Simple transferability estimators for regression}
\label{sec:estimators}

In theory, we can use $-\mathcal{L} (w^*, k^*; \mathcal{D}_t)$, the negative MSE of the transferred target model $(w^*, k^*)$, as a transferability estimator, since it is an empirical estimation of $\mathrm{Tr}(\mathcal{D}_s, \mathbb{P}_t)$ using the dataset $\mathcal{D}_t$. However, this method requires us to run the actual transfer learning process, which could be expensive if the network architecture of the target regression heads (e.g., $k$ and $k^*$) is deep and complex. This violates a crucial requirement for a transferability estimator in Definition~\ref{def:trans_est}: the estimator must be \emph{computationally efficient} since it will be computed several times for task comparison. In this section, we propose two simple regression transferability estimators to address this problem.

\subsection{Linear MSE estimator}

To reduce the cost of computing $\mathcal{L} (w^*, k^*; \mathcal{D}_t)$, a simple idea is to approximate it with an $\ell_2$-regularized linear regression (Ridge regression) head. This leads to our first simple transferability estimator, Linear MSE, which is defined as the negative regularized MSE of this Ridge regression head. In this definition, $\| \cdot \|_F$ is the Frobenius norm.

\begin{definition}
The \textbf{Linear MSE} transferability estimator with a regularization parameter $\lambda \ge 0$ is: $\mathcal{T}^{\mathrm{lin}}_{\lambda}(\mathcal{D}_s, \mathcal{D}_t) := - \min_{A, b} \big\{ \frac{1}{n_t} \sum_{i=1}^{n_t} {\| y^t_i - A w^*(x^t_i) - b \|^2} + \lambda \|A \|_F^2 \big\}$, where $A \in \mathbb{R}^{d_r \times d_t}$ is a $d_r \times d_t$ real-valued matrix and $b \in \mathbb{R}^{d_t}$ is a $d_t$-dimensional real-valued vector.
\label{def:linmse}
\end{definition}

Here we add a regularizer to avoid overfitting when the target dataset $\mathcal{D}_t$ is small. Previous work such as LogME~\citep{you2021logme} proposed to prevent overfitting by taking a Bayesian approach, which is more complicated and expensive. We will show empirically in our experiments (Section~\ref{sec:exp_small}) that our simple regularization approach can tackle the issue more effectively and efficiently.

Given a pre-trained feature extractor $w^*$ and a target set $\mathcal{D}_t$, we can compute $\mathcal{T}^{\mathrm{lin}}_{\lambda}(\mathcal{D}_s, \mathcal{D}_t)$ efficiently using the closed form solution for Ridge regression or using second-order optimization~\citep{bishop2006pattern}. If the target regression head $k^*$ is a linear regression model, $\mathcal{T}^{\mathrm{lin}}_0(\mathcal{D}_s, \mathcal{D}_t)$ with $\lambda = 0$ is the negative MSE of the transferred target model $(w^*, k^*)$ on $\mathcal{D}_t$. If $k^*$ has more than one layer with a non-linear activation, $\mathcal{T}^{\mathrm{lin}}_{\lambda}(\mathcal{D}_s, \mathcal{D}_t)$ can be regarded as using a regularized linear model to approximate this non-linear head.

\subsection{Label MSE estimator}
\label{sec:labelmse}

Although the Linear MSE transferability score above can be computed efficiently, this computation may still be relatively expensive if the feature vectors $w^*(x^t_i)$ are high-dimensional. To further reduce the costs, we propose another transferability estimator, Label MSE, which replaces $w^*(x^t_i)$ by the ``dummy'' source label $z_i = h^*(w^*(x^t_i))$. Using dummy labels from the pre-trained source model $(w^*, h^*)$ is a technique previously used to compute the LEEP transferability score for classification~\citep{nguyen2020leep}. We define our Label MSE estimator below.

\begin{definition}
The \textbf{Label MSE} transferability estimator with a regularization parameter $\lambda \ge 0$ is: $\mathcal{T}^{\mathrm{lab}}_{\lambda}(\mathcal{D}_s, \mathcal{D}_t) := - \min_{A, b} \big\{ \frac{1}{n_t} \sum_{i=1}^{n_t} \| y^t_i - A z_i - b \|^2 + \lambda \|A \|_F^2 \big\}$, where ${ A \in \mathbb{R}^{d_s \times d_t} }$ is a $d_s \times d_t$ real-valued matrix, $b \in \mathbb{R}^{d_t}$ is a $d_t$-dimensional real-valued vector, and $z_i = h^*(w^*(x^t_i))$.
\label{def:labmse}
\end{definition}

In practice, since the size of $z_i$ is usually much smaller than that of $w^*(x^t_i)$ (i.e., $d_s \ll d_r$), computing the Label MSE is usually faster than computing the Linear MSE.

\minisection{$\bullet$ Special case with shared inputs.}
When the source and target datasets have the same inputs, i.e., ${ \mathcal{D}_s = \{ (x_i, y^s_i) \}_{i=1}^n }$ and $\mathcal{D}_t = \{ (x_i, y^t_i) \}_{i=1}^n$, we can compute the Label MSE even faster using only the true labels. Particularly, we can consider the following version of the Label MSE.

\begin{definition}
The \textbf{Shared Inputs Label MSE} transferability estimator with a regularization parameter $\lambda \ge 0$ is: $\widehat{\mathcal{T}}^{\mathrm{lab}}_{\lambda}(\mathcal{D}_s, \mathcal{D}_t) := - \min_{A, b} \Big\{ { \frac{1}{n} \sum_{i=1}^{n} \| y^t_i - A y^s_i - b \|^2 } +  \lambda \|A \|_F^2 \Big\}$, where $A \in \mathbb{R}^{d_s \times d_t}$ and $b \in \mathbb{R}^{d_t}$.
\label{def:shared_labmse}
\end{definition}

In this definition, the Shared Inputs Label MSE is computed by training a Ridge regression model directly from the true label pairs $(y^s_i, y^t_i)$, which is \emph{less expensive} than the original Label MSE since we do not need to train the source model $(w^*, h^*)$ or compute the dummy labels.

Intuitively, our estimators use a weaker version of the actual target model that helps trade off the estimators’ accuracy for computational speed. Our estimators can also be viewed as instances of the kernel Ridge regression approach~\citep{smale2007learning, hastie2009elements}. While the Linear MSE can be interpreted as a linear approximation to $- \mathcal{L} (w^*, k^*; \mathcal{D}_t)$, properties of the Label MSE and Shared Inputs Label MSE are not well understood. In the next section, we shall prove novel theoretical properties for these estimators.

\section{Theoretical properties}
\label{sec:theory}

We now prove some theoretical properties for the Label MSE with ReLU feed-forward neural networks. These properties are in the form of generalization bounds relating $\mathcal{T}^{\mathrm{lab}}_{\lambda}(\mathcal{D}_s, \mathcal{D}_t)$ with the transferability $\mathrm{Tr}(\mathcal{D}_s, \mathbb{P}_t)$. Throughout this section, we assume the space of all target regression heads $k$, which may have more than one layer, is a superset of all the linear regression models. This assumption is generally true for ReLU networks~\citep{arora2018understanding}.

First, we show in Lemma~\ref{lemma:empirical} below a relationship between the negative MSE loss $- \mathcal{L} (w^*, k^*; \mathcal{D}_t)$ of $(w^*, k^*)$ and the Label MSE. This lemma states that the negative MSE loss $- \mathcal{L} (w^*, k^*; \mathcal{D}_t)$ upper bounds the Label MSE. The proof for this lemma is in the Appendix~\ref{proof:lemma:empirical}.

\begin{lemma}
For any $\lambda \ge 0$, we have:
$\mathcal{T}^{\mathrm{lab}}_{\lambda}(\mathcal{D}_s, \mathcal{D}_t) \le - \mathcal{L} (w^*, k^*; \mathcal{D}_t)$.
\label{lemma:empirical}
\end{lemma}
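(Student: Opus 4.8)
The plan is to recast the claimed inequality as a comparison between two minimization problems. Writing $M_\lambda := \min_{A,b} \big\{ \frac{1}{n_t} \sum_{i=1}^{n_t} \|y^t_i - A z_i - b\|^2 + \lambda \|A\|_F^2 \big\}$ and $L^* := \mathcal{L}(w^*, k^*; \mathcal{D}_t)$, Definition~\ref{def:labmse} gives $\mathcal{T}^{\mathrm{lab}}_{\lambda}(\mathcal{D}_s, \mathcal{D}_t) = -M_\lambda$, so the assertion $\mathcal{T}^{\mathrm{lab}}_{\lambda} \le -\mathcal{L}(w^*, k^*; \mathcal{D}_t)$ is equivalent to $M_\lambda \ge L^*$. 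I would establish this in two moves.

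First I would dispose of the regularizer. Since $\lambda \ge 0$ and $\|A\|_F^2 \ge 0$, the penalty only inflates the objective, so $M_\lambda \ge M_0$, where $M_0 := \min_{A,b} \frac{1}{n_t} \sum_{i=1}^{n_t} \|y^t_i - A z_i - b\|^2$ is the unregularized Label MSE loss. Hence it suffices to prove $M_0 \ge L^*$, which reduces the entire lemma to the case $\lambda = 0$.

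The heart of the argument is to exhibit, for each admissible pair $(A,b)$, a genuine target head that reproduces the affine-in-dummy-label prediction. Recalling $z_i = h^*(w^*(x^t_i))$, I would define the candidate head $k_{A,b}(u) := A\,h^*(u) + b$ for $u \in \mathbb{R}^{d_r}$, so that $k_{A,b}(w^*(x^t_i)) = A z_i + b$ exactly. Because $k^*$ minimizes $\mathcal{L}(w^*, k; \mathcal{D}_t)$ over all valid heads (Eq.~\eqref{eq:target}), substituting $k_{A,b}$ as a feasible point gives $L^* \le \frac{1}{n_t} \sum_{i=1}^{n_t} \|y^t_i - k_{A,b}(w^*(x^t_i))\|^2 = \frac{1}{n_t} \sum_{i=1}^{n_t} \|y^t_i - A z_i - b\|^2$ for every $(A,b)$. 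Taking the infimum over $(A,b)$ on the right yields $L^* \le M_0$, and chaining with the first step gives $L^* \le M_0 \le M_\lambda$, which is precisely the claim after negation.

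The step I expect to be the main obstacle is the membership claim $k_{A,b} \in$ (target head space): once any affine function of the dummy labels $z_i$ is realizable as $k(w^*(\cdot))$ for some admissible head $k$, the inequality follows from nothing more than monotonicity of the minimum and substitution of a feasible point. To justify realizability I would observe that $k_{A,b}$ is simply the source head $h^*$ with one additional affine output layer; since the target heads are ReLU feed-forward networks that may have more than one layer, this composition is again such a network and is therefore admissible. This is exactly where the richness assumption stated at the start of Section~\ref{sec:theory} is used, and I would make it precise by appealing to the fact that appending a linear layer to $h^*$ keeps the map within the ReLU feed-forward family defining the head class. Everything else is routine.
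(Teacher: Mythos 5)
Your proof is correct and rests on exactly the same key idea as the paper's: substituting the feasible head $k(\cdot) = A\,h^*(\cdot) + b$ (realizable by appending an affine layer to $h^*$) into the minimality of $k^*$ from Eq.~\eqref{eq:target}, and discarding the nonnegative penalty $\lambda \|A\|_F^2$. The only difference is cosmetic: the paper routes the argument through a triangle inequality on root-mean-square errors and then kills the discrepancy term by choosing $k(\cdot) = A^* h^*(\cdot) + b^*$ at the minimizer $(A^*, b^*)$, whereas you substitute directly at every $(A,b)$ and take the infimum --- a slightly cleaner presentation of the same proof.
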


Using this lemma, we can prove our main theoretical result in Theorem~\ref{thm:generalization} below. In this theorem, $L$ is the number of layers of the ReLU feed-forward neural network $(w^*, k^*)$, and we assume the number of hidden nodes and parameters in each layer are upper bounded by $H$ and $M \ge 1$ respectively. Without loss of generality, we also assume all input and output data are upper bounded by $1$ in $\ell_\infty$-norm. This assumption can easily be satisfied by a pre-processing step that scales them to $[0, 1]$ in $\ell_\infty$-norm.

\begin{theorem}
For any source dataset $\mathcal{D}_s$, $\lambda \ge 0$ and $\delta > 0$, with probability at least $1 - \delta$ over the randomness of $\mathcal{D}_t$, we have:
$\mathrm{Tr}(\mathcal{D}_s, \mathbb{P}_t) \ge \mathcal{T}^{\mathrm{lab}}_{\lambda}(\mathcal{D}_s, \mathcal{D}_t) - C(d, d_t, M, H, L, \delta)/\sqrt{n_t}$, where
$C(d, d_t, M, H, L, \delta) = 16 M^{2L+2} H^{2L} [ d_t^2 d \sqrt{L+1+ \ln d} + d_t d^2 \sqrt{2 \ln(4/\delta)} ]$.
\label{thm:generalization}
\end{theorem}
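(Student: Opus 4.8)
The plan is to combine Lemma~\ref{lemma:empirical} with a uniform generalization bound that lets us pass from the empirical MSE $\mathcal{L}(w^*, k^*; \mathcal{D}_t)$ to the true risk $-\mathrm{Tr}(\mathcal{D}_s, \mathbb{P}_t)$. Since Lemma~\ref{lemma:empirical} already gives $\mathcal{T}^{\mathrm{lab}}_{\lambda}(\mathcal{D}_s, \mathcal{D}_t) \le -\mathcal{L}(w^*, k^*; \mathcal{D}_t)$, it suffices to show that with probability at least $1-\delta$,
\begin{equation}
-\mathcal{L}(w^*, k^*; \mathcal{D}_t) \ge \mathrm{Tr}(\mathcal{D}_s, \mathbb{P}_t) - C/\sqrt{n_t},
\notag
\end{equation}
i.e.\ that the true risk exceeds the empirical risk by at most $C/\sqrt{n_t}$. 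Chaining the two inequalities then yields the theorem. The key subtlety is that $k^*$ itself depends on $\mathcal{D}_t$ (it is the empirical risk minimizer of Eq.~\eqref{eq:target}), so a single-hypothesis concentration inequality is not enough; I need a bound that holds uniformly over the whole class of ReLU heads $k$ composed with the fixed feature extractor $w^*$.

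First I would set up the uniform deviation bound. Define the loss class $\mathcal{F} = \{ (x,y) \mapsto \|y - k(w^*(x))\|^2 : k \in \mathcal{K}\}$, where $\mathcal{K}$ ranges over the ReLU heads of depth at most $L$, width at most $H$, and at most $M$ parameters per layer. The standard route is to bound $\sup_{k}\big(\mathbb{E}_{\mathbb{P}_t}[\ell_k] - \frac{1}{n_t}\sum_i \ell_k\big)$ by the Rademacher complexity of $\mathcal{F}$ plus a $\sqrt{2\ln(4/\delta)/n_t}$ concentration term from McDiarmid's inequality. Because all inputs and outputs lie in $[0,1]$ (in $\ell_\infty$-norm) and $M \ge 1$, the network outputs $k(w^*(x))$ are uniformly bounded, so the squared loss is bounded and Lipschitz on the relevant range; this controls the bounded-differences constant and lets me apply the Ledoux--Talagrand contraction to strip the square off the loss. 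The resulting $d_t d^2 \sqrt{2\ln(4/\delta)}$ term in $C$ matches the output-dimension and range factors one picks up here.

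Next I would bound the Rademacher complexity of the ReLU network class. The plan is to use the now-standard layer-peeling argument for feed-forward ReLU networks: contraction removes each ReLU nonlinearity, and at each of the $L$ layers the weight norm contributes a multiplicative factor governed by $M$ and $H$, while summing over the $d_t$ output coordinates and $d$ input dimensions produces the polynomial prefactors. This is exactly where the $M^{2L+2} H^{2L}$ growth and the $d_t^2 d \sqrt{L+1+\ln d}$ factor emerge — the $\sqrt{L+1+\ln d}$ is the tell-tale signature of a covering-number / Dudley-integral estimate on the depth-$L$ class, and the exponents $2L+2$ and $2L$ come from tracking the per-layer parameter and width bounds through both the network and the squaring of the loss. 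I would state a lemma (deferred to the appendix) giving this Rademacher bound and then plug it in.

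The main obstacle is the Rademacher/covering estimate for the deep ReLU class with the precise constants claimed in $C$: getting the exponential-in-$L$ dependence and the $\sqrt{L+1+\ln d}$ factor to come out cleanly requires carefully tracking norms through the layer-peeling recursion and choosing the covering resolution in the Dudley integral correctly, rather than any conceptual difficulty. Everything else — invoking Lemma~\ref{lemma:empirical}, McDiarmid, and contraction — is routine once the boundedness and Lipschitz constants are pinned down using the $[0,1]$ range and $M \ge 1$ assumptions. I would therefore front-load the statement of the generalization/Rademacher lemma, verify the hypotheses of McDiarmid and contraction, and relegate the constant-chasing in the layer-peeling recursion to the appendix, since that is where the real work lives.
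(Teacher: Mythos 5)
Your proposal follows essentially the same route as the paper's proof in the appendix: chain Lemma~\ref{lemma:empirical} with a uniform deviation bound $|\mathcal{R}(w,k) - \mathcal{L}(w,k;\mathcal{D}_t)| \le C/\sqrt{n_t}$ over the whole ReLU class (correctly motivated by the data-dependence of $k^*$), proved via Rademacher complexity with the contraction lemma stripping the square off the loss, a sum over the $d_t$ output coordinates, and a McDiarmid-type term supplying the $d_t d^2 \sqrt{2\ln(4/\delta)}$ piece --- which is exactly the paper's combination of its Lemma on the empirical loss, the contraction lemma, and Theorem 26.5 of Shalev-Shwartz and Ben-David. The only cosmetic divergence is that the paper obtains the $\sqrt{L+1+\ln d}$ factor directly from the norm-based layer-peeling bound of Golowich et al.\ (their Theorem 2) rather than from a covering-number/Dudley estimate as you suggest, but this affects only the provenance of the constant, not the structure of the argument.
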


The proof for this theorem is in the Appendix~\ref{proof:thm:generalization} The theorem shows that the transferability $\mathrm{Tr}(\mathcal{D}_s, \mathbb{P}_t)$ is lower bounded by the Label MSE $\mathcal{T}^{\mathrm{lab}}_{\lambda}(\mathcal{D}_s, \mathcal{D}_t)$ minus a complexity term $C(d, d_t, M, H, L, \delta)/\sqrt{n_t}$ that depends on the target dataset (specifically, the input and output dimensions, as well as the dataset size) and the architecture of the target network. When this complexity term is small (e.g., when $n_t$ is large enough), the bound in Theorem~\ref{thm:generalization} will be tighter. In this case, a higher Label MSE score will likely lead to better transferability.

\minisection{$\bullet$ Shared inputs case.}
We can also derive similar bounds for the Shared Inputs Label MSE $\widehat{\mathcal{T}}^{\mathrm{lab}}_{\lambda}(\mathcal{D}_s, \mathcal{D}_t)$.
Denote
${ A^*_\lambda, b^*_\lambda := \argmin_{A, b} \big\{ \frac{1}{n} \sum_i \| y^t_i - A y^s_i - b \|^2 + \lambda \|A \|_F^2 \big\} }$.
We first show the following lemma relating $\widehat{\mathcal{T}}^{\mathrm{lab}}_{\lambda}(\mathcal{D}_s, \mathcal{D}_t)$ and the losses of the source and target models.

\begin{lemma}
For any $\lambda \geq 0$, we have:
$
\widehat{\mathcal{T}}^{\mathrm{lab}}_{\lambda}(\mathcal{D}_s, \mathcal{D}_t) \leq - \mathcal{L} (w^*, k^*; \mathcal{D}_t)/2 + \|A^*_\lambda\|_F^2 \mathcal{L} (w^*, h^*; \mathcal{D}_s).
$
\label{lemma:empirical_same_input}
\end{lemma}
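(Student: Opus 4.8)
The plan is to mirror the argument behind Lemma~\ref{lemma:empirical}, while carefully accounting for the mismatch between the true source labels $y^s_i$ used by $\widehat{\mathcal{T}}^{\mathrm{lab}}_{\lambda}$ and the dummy labels $z_i := h^*(w^*(x_i))$ that link to the source model. The first observation I would record is that, because the inputs are shared, the source empirical loss can be rewritten purely in terms of the dummy labels: $\mathcal{L}(w^*, h^*; \mathcal{D}_s) = \frac{1}{n} \sum_i \| y^s_i - z_i \|^2$. I would also peel off the nonnegative regularizer at the optimizer, so that $\widehat{\mathcal{T}}^{\mathrm{lab}}_{\lambda}(\mathcal{D}_s, \mathcal{D}_t) \le - \frac{1}{n} \sum_i \| y^t_i - A^*_\lambda y^s_i - b^*_\lambda \|^2$; it therefore suffices to lower bound this averaged true-label residual.

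Next I would introduce the competitor head $k(f) := A^*_\lambda h^*(f) + b^*_\lambda$, namely the source head $h^*$ composed with the affine map learned by the Shared Inputs Label MSE. Under the standing assumption that the target head class is rich enough to realize such compositions (the same assumption invoked for Lemma~\ref{lemma:empirical}), $k$ is a feasible target head, so optimality of $k^*$ gives $\mathcal{L}(w^*, k^*; \mathcal{D}_t) \le \frac{1}{n} \sum_i \| y^t_i - A^*_\lambda z_i - b^*_\lambda \|^2$, using $k(w^*(x_i)) = A^*_\lambda z_i + b^*_\lambda$.

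The heart of the argument is to transfer this dummy-label bound back to the true-label residual. I would split each residual as $y^t_i - A^*_\lambda z_i - b^*_\lambda = (y^t_i - A^*_\lambda y^s_i - b^*_\lambda) + A^*_\lambda (y^s_i - z_i)$ and apply $\| a + b \|^2 \le 2 \| a \|^2 + 2 \| b \|^2$. Summing and bounding $\| A^*_\lambda (y^s_i - z_i) \|^2 \le \| A^*_\lambda \|_F^2 \, \| y^s_i - z_i \|^2$ (operator norm dominated by Frobenius norm) turns the second group of terms into $2 \| A^*_\lambda \|_F^2 \, \mathcal{L}(w^*, h^*; \mathcal{D}_s)$ via the first-step rewriting. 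Rearranging then yields $\frac{1}{n} \sum_i \| y^t_i - A^*_\lambda y^s_i - b^*_\lambda \|^2 \ge \frac{1}{2} \mathcal{L}(w^*, k^*; \mathcal{D}_t) - \| A^*_\lambda \|_F^2 \, \mathcal{L}(w^*, h^*; \mathcal{D}_s)$, which combined with the first step is exactly the claimed inequality.

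I expect the main obstacle to be conceptual rather than computational: verifying that the composed head $k = A^*_\lambda h^* + b^*_\lambda$ is admissible as a target head (so that the optimality of $k^*$ can be invoked), and noting that the crude splitting inequality $\| a + b \|^2 \le 2 \| a \|^2 + 2 \| b \|^2$ is precisely what forces both the factor $\tfrac{1}{2}$ in front of $\mathcal{L}(w^*, k^*; \mathcal{D}_t)$ and the $\| A^*_\lambda \|_F^2$ weight on the source loss; a tighter split would change the constants but not the structure. A minor bookkeeping point is the dimension convention for $A y^s_i$, which I would fix so that $A$ acts as a linear map $\mathbb{R}^{d_s} \to \mathbb{R}^{d_t}$ consistent with the loss.
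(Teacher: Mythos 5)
Your proposal is correct and takes essentially the same route as the paper's proof: the same competitor head $k(\cdot) = A^*_\lambda h^*(\cdot) + b^*_\lambda$ (admissible under the paper's standing assumption on the target head class), the same dropping of the nonnegative regularizer at the optimizer, and the same bound $\|A^*_\lambda v\| \le \|A^*_\lambda\|_F \|v\|$, with the factor $1/2$ and the $\|A^*_\lambda\|_F^2$ weight arising from the identical crude squaring step. The only cosmetic difference is that you apply $\|a+b\|^2 \le 2\|a\|^2 + 2\|b\|^2$ pointwise to each residual before averaging, whereas the paper applies the triangle (Minkowski) inequality to the empirical root-mean-square quantities and then squares via the fact that $a \le b + c$ implies $a^2 \le 2b^2 + 2c^2$; the two yield the same constants and the same final inequality.
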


Using this lemma, we can prove the following theorem for this shared inputs setting. The proofs for these results are in the Appendix~\ref{proof:lemma:empirical_same_input}.

\begin{theorem}
For any source dataset $\mathcal{D}_s$, $\lambda \geq 0$ and $\delta > 0$, with probability at least $1 - \delta$ over the randomness of $\mathcal{D}_t$, we have:
$
\mathrm{Tr}(\mathcal{D}_s, \mathbb{P}_t) \geq 2 \widehat{\mathcal{T}}^{\mathrm{lab}}_{\lambda}(\mathcal{D}_s, \mathcal{D}_t) - 2 \|A^*_\lambda\|_F^2 \mathcal{L} (w^*, h^*; \mathcal{D}_s) - C(d, d_t, M, H, L, \delta)/\sqrt{n}  
$.
\label{thm:generalization_same_input}
\end{theorem}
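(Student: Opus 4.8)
The plan is to prove Theorem~\ref{thm:generalization_same_input} by combining the algebraic inequality of Lemma~\ref{lemma:empirical_same_input} with exactly the same generalization argument that underlies Theorem~\ref{thm:generalization}. The crucial observation is that both theorems bound the \emph{same} quantity, namely the true transferability $\mathrm{Tr}(\mathcal{D}_s, \mathbb{P}_t)$ of the target model $(w^*, k^*)$, in terms of its empirical MSE $-\mathcal{L}(w^*, k^*; \mathcal{D}_t)$ on $\mathcal{D}_t$. The only difference between the two settings lies in how this empirical MSE is further lower-bounded by the respective estimator: in the non-shared case Lemma~\ref{lemma:empirical} gives $\mathcal{T}^{\mathrm{lab}}_\lambda \le -\mathcal{L}(w^*, k^*; \mathcal{D}_t)$, whereas here Lemma~\ref{lemma:empirical_same_input} supplies the analogous but weaker inequality carrying a factor of $2$ and a source-loss correction term. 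Consequently the generalization step can be reused verbatim, and the proof reduces to a short substitution.

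First I would isolate the generalization component of Theorem~\ref{thm:generalization}. Its proof establishes that, with probability at least $1-\delta$ over the draw $\mathcal{D}_t \stackrel{\mathrm{iid}}{\sim} \mathbb{P}_t$,
\begin{equation*}
\mathrm{Tr}(\mathcal{D}_s, \mathbb{P}_t) \;\ge\; -\,\mathcal{L}(w^*, k^*; \mathcal{D}_t) \;-\; C(d, d_t, M, H, L, \delta)/\sqrt{n},
\end{equation*}
where we set $n_t = n$ since the shared-inputs setting uses a common sample size. This inequality follows from a uniform-convergence argument over the hypothesis class of $L$-layer ReLU networks $(w^*, k)$: the assumption that all inputs and outputs lie in $[0,1]$ in $\ell_\infty$-norm bounds the range and Lipschitz constant of the squared $\ell_2$ loss, the Rademacher complexity of this network class produces the $M^{2L+2} H^{2L}$ factor together with the $\sqrt{L+1+\ln d}$ term, and a bounded-difference (McDiarmid) concentration step contributes the $\sqrt{2\ln(4/\delta)}$ term. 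Because the target model and the randomness of $\mathcal{D}_t$ are identical across both theorems, this bound transfers here without modification.

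Next I would rearrange Lemma~\ref{lemma:empirical_same_input}: multiplying through by $2$ and moving the source-loss term yields
\begin{equation*}
-\,\mathcal{L}(w^*, k^*; \mathcal{D}_t) \;\ge\; 2\,\widehat{\mathcal{T}}^{\mathrm{lab}}_\lambda(\mathcal{D}_s, \mathcal{D}_t) \;-\; 2\,\|A^*_\lambda\|_F^2\, \mathcal{L}(w^*, h^*; \mathcal{D}_s).
\end{equation*}
Substituting this lower bound on $-\mathcal{L}(w^*, k^*; \mathcal{D}_t)$ into the displayed generalization inequality gives precisely
\begin{equation*}
\mathrm{Tr}(\mathcal{D}_s, \mathbb{P}_t) \;\ge\; 2\,\widehat{\mathcal{T}}^{\mathrm{lab}}_\lambda(\mathcal{D}_s, \mathcal{D}_t) \;-\; 2\,\|A^*_\lambda\|_F^2\, \mathcal{L}(w^*, h^*; \mathcal{D}_s) \;-\; C(d, d_t, M, H, L, \delta)/\sqrt{n},
\end{equation*}
which is the claimed bound, and the high-probability statement is inherited directly from the generalization step.

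The genuinely new content relative to Theorem~\ref{thm:generalization} is confined to Lemma~\ref{lemma:empirical_same_input}; once that lemma is granted, the remainder is a one-line substitution. The main obstacle therefore lies not in this theorem but in its supporting ingredients: establishing the precise constant $C$ requires the careful Rademacher-complexity computation for deep ReLU networks under squared loss, where controlling the product of per-layer Lipschitz constants to obtain the $M^{2L+2}H^{2L}$ growth is the most delicate part. Since that analysis is carried out once in the proof of Theorem~\ref{thm:generalization} and applies unchanged to the same target-model class, no additional difficulty arises here.
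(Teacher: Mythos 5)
Your proposal is correct and matches the paper's own proof essentially step for step: the paper likewise applies the uniform generalization bound (Lemma~\ref{lemma:uniform}, proved for Theorem~\ref{thm:generalization}) to the transferred model $(w^*, k^*)$ to get $\mathcal{R}(w^*,k^*) \le \mathcal{L}(w^*,k^*;\mathcal{D}_t) + C(d,d_t,M,H,L,\delta)/\sqrt{n}$, then chains this with the rearranged Lemma~\ref{lemma:empirical_same_input} and uses $\mathrm{Tr}(\mathcal{D}_s,\mathbb{P}_t) = -\mathcal{R}(w^*,k^*)$. Your rearrangement of the lemma (multiplying by $2$ and isolating $-\mathcal{L}(w^*,k^*;\mathcal{D}_t)$) and your observation that all new content resides in Lemma~\ref{lemma:empirical_same_input} are both accurate.
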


From the theorem, $\widehat{\mathcal{T}}^{\mathrm{lab}}_{\lambda}(\mathcal{D}_s, \mathcal{D}_t)$ can indirectly tell us information about the transferability $\mathrm{Tr}(\mathcal{D}_s, \mathbb{P}_t)$ without actually training $w^*$, $h^*$, and $k^*$. This bound becomes tighter when $n$ is large or $\mathcal{L} (w^*, h^*; \mathcal{D}_s)$ is small (e.g., when the source model is expressive enough to fit the source data).
An experiment to investigate the usefulness of our theoretical bounds in this section is available in the Appendix~\ref{proof:thm:generalization_same_input}.

\section{Experiments}
\label{sec:experiment}

In this section, we conduct experiments to evaluate our approaches on the keypoint (or landmark) regression tasks using the following two large-scale public datasets:

$\bullet$ \textbf{CUB-200-2011}~\citep{wah2011caltech}. This dataset contains 11,788 bird images with 15 labeled keypoints indicating 15 different parts of a bird body. We use 9,788 images for training and 2,000 images for testing. Since the annotations for occluded keypoints are highly inaccurate, we remove all occluded keypoints during the training for both source and target tasks.

$\bullet$ \textbf{OpenMonkey}~\citep{yao2021openmonkeychallenge}. This is a benchmark for the non-human pose tracking problem. It offers over 100,000 monkey images in natural contexts, annotated with 17 body landmarks. We use the original train-test split, which contains 66,917 training images and 22,306 testing images.

In our experiments, we use ResNet34~\citep{he2016deep} as the backbone since it provides good performance as a source model. Following previous work~\citep{tran2019transferability, nguyen2020leep, huang2022frustratingly, nguyen2022generalization}, we investigate how well our transferability estimators correlate (using Pearson correlation) with the \emph{negative test MSE} of the target model obtained from actual transfer learning. This correlation analysis is a good method to measure how well transferability estimators satisfy our Definition~\ref{def:trans_est}. In the Table~\ref{tab:kendall} and~\ref{tab:spearman} in the Appendix~\ref{appendix:high_dim_exp_1}, we provide additional results for other non-linear correlation measures, including Kendall’s $\tau$ and Spearman correlations. The conclusions in our paper remain the same when comparing these correlations.

We consider three standard transfer learning algorithms: (1) \textbf{head re-training}~\citep{donahue2014decaf, sharif2014cnn}: We fix all layers of the source model up until the penultimate layer and re-train the last fully-connected (FC) layer using the target training set; (2) \textbf{half fine-tuning}~\citep{donahue2014decaf, sharif2014cnn}: We fine-tune the last convolutional block and all the FC layers of the source model, while keeping all other layers fixed; and (3) \textbf{full fine-tuning}~\citep{agrawal2014analyzing, girshick2014rich}: We fine-tune the whole source model using the target training set. Among these settings, head re-training resembles the transfer scenario in Section~\ref{sec:transfer_learning}, while half and full fine-tuning are more commonly used in practice. For half fine-tuning, around half of the parameters in the network will be fine-tuned ($\sim$13M parameters). More details of our experiment settings are in the Appendix~\ref{appendix:experiment_settings_1}.

We compare our transferability estimators, Linear MSE and Label MSE, with two recent SotA baselines for regression: LogME~\citep{you2021logme} and TransRate~\citep{huang2022frustratingly}. For our methods, we consider $\lambda = 0$ (named \textbf{LinMSE0} and \textbf{LabMSE0}) for the estimators without regularization, and $\lambda = 1$ (named \textbf{LinMSE1} and \textbf{LabMSE1}) for the estimators with the default $\lambda$ value. The effects of $\lambda$ on our algorithms are investigated in Section~\ref{sec:lambda_exp}. 

For the baselines, besides the usual versions (\textbf{LogME} and \textbf{TransRate}) that are computed from the extracted features and the target labels, we also consider the versions where they are computed from the dummy labels and the target labels (named \textbf{LabLogME} and \textbf{LabTransRate}). As in previous work~\citep{huang2022frustratingly}, we divide the target label values into equal-sized bins (five bins in our case) to compute TransRate and LabTransRate.

\subsection{General transfer between two different domains}
\label{exp:different_input}

\begin{table*}[t]
\caption{{\bf Correlation coefficients when transferring from OpenMonkey to CUB-200-2011}. Bold numbers indicate best results in each row. Asterisks (*) indicate best results among the corresponding label-based or feature-based methods. Detailed correlation plots are in the Appendix~\ref{appendix:high_dim_exp_1}. Our estimators improve up to 25.9\% in comparison with SotA (LogME) while being 12.9\% better on average.}
{\vskip -0.2cm}
\centering
\resizebox{\textwidth}{!}{%
\begin{tabular}{ccccccccc}
\toprule
\multirow{2}{*}{Transfer setting} & \multicolumn{4}{c}{Label-based method} & \multicolumn{4}{c}{Feature-based method} \\
\cmidrule(lr){2-5} \cmidrule(lr){6-9}
& LabLogME & LabTransRate & LabMSE0 & LabMSE1 & LogME & TransRate & LinMSE0 & LinMSE1 \\
\midrule
Head re-training & 0.824 & 0.165 & 0.991 & \textbf{0.995}* & 0.969 & 0.121 & 0.982 & \textbf{0.995*}\\
Half fine-tuning & 0.706 & 0.392 & 0.881 &  \textbf{0.885}* & 0.870 & 0.304 & 0.866 & \textbf{0.885*}\\
Full fine-tuning & 0.691 & 0.410 & ~~{\bf 0.870}* & 0.869~~ & 0.861 & 0.311 & 0.855 & 0.869* \\
\bottomrule
\end{tabular}
}
\label{tab:different_input}
\end{table*}

This experiment considers the general case where source models are trained on one dataset (OpenMonkey) and then transferred to another (CUB-200-2011). Specifically, we train a source model for each of the 17 keypoints of the OpenMonkey dataset and transfer them to each of the 15 keypoints of the CUB-200-2011 dataset, resulting in a total of 255 final models. Since each keypoint consists of x and y positions, all source and target tasks in this experiment have two dimensional labels. The actual MSEs of these models are computed on the respective test sets and then used to calculate the Pearson correlation coefficients with the transferability estimators. In this experiment, LabMSE0, LabMSE1, LabLogME, and LabTransRate are computed from the dummy source labels and the actual target labels.

Results for this experiment are in Table~\ref{tab:different_input}. In this setting, TransRate and LabTransRate perform poorly, while our methods are equal or better than LogME and LabLogME in most cases, especially when using $\lambda=1$ (LinMSE1) or dummy labels (LabMSE0 and LabMSE1). The results show our approaches improve up to 25.9\% in comparison with SotA (LogME) while being 12.9\% better on average.

It is interesting to observe that LabMSE0 and LabMSE1 provide competitive or even better correlations than LinMSE0 and LinMSE1 in this experiment. This shows that the dummy labels (i.e., body parts of monkeys) can provide as much information about the target labels (i.e., body parts of birds) as the extracted features.

In the Appendix~\ref{appendix:high_dim_exp_1}, we also report additional results where both source and target tasks have 10-dimensional labels (i.e., each task predicts 5 keypoints simultaneously). We also achieve better correlations than the baselines in this case.

\subsection{Transfer with shared-inputs tasks}
\label{sec:exp_shared_inputs}

\begin{table*}[t]
\caption{{\bf Correlation coefficients when transferring between tasks with shared inputs}. Bold numbers indicate best results in each row. Asterisks (*) indicate best results among the corresponding label-based or feature-based methods. Detailed correlation plots are in the Appendix~\ref{appendix:high_dim_exp_2}. Our estimators improve up to 113\% in comparison with SotA (LogME) while being 36.6\% better on average.}
{\vskip -0.2cm}
\centering
\resizebox{\textwidth}{!}{%
\begin{tabular}{cccccccccc}
\toprule
\multirow{3}{*}{Dataset} & \multirow{3}{*}{Transfer setting} & \multicolumn{4}{c}{Label-based method} & \multicolumn{4}{c}{Feature-based method} \\
\cmidrule(lr){3-6} \cmidrule(lr){7-10}
& & LabLogME & LabTransRate & LabMSE0 & LabMSE1 & LogME & TransRate & LinMSE0 & LinMSE1 \\
\midrule
\multirow{3}{*}{\parbox{0.9cm}{CUB-200-2011}} & Head re-training & 0.547 & 0.019 & 0.916 & 0.946* & 0.890 & 0.029 & 0.921 & ~~{\bf 0.960}*\\
& Half fine-tuning & 0.401 & 0.006 & 0.536 & 0.565* & 0.560 & 0.064 & ~~{\bf 0.628}* & 0.619\\
& Full fine-tuning & ~~{\bf 0.128}* & 0.041 & 0.056 & 0.057~~ & 0.100 & ~~0.109* & 0.097 & 0.082\\
\midrule
\multirow{3}{*}{\parbox{0.9cm}{Open Monkey}} & Head re-training & 0.890 & 0.666 & ~~0.973* & 0.773~~ & 0.695 & 0.711 & 0.946 & ~~{\bf 0.975}*\\
& Half fine-tuning & 0.615 & 0.340 & 0.754 & 0.890* & 0.446 & 0.488 & ~~{\bf 0.899}* & 0.801\\
& Full fine-tuning & 0.569 & 0.269 & 0.705 & \textbf{0.882*}  & 0.403 & 0.439 & ~~0.859* & 0.761\\
\bottomrule
\end{tabular}
}
\label{tab:shared_input}
\end{table*}

In this experiment, we consider the setting where the source and target tasks have the same inputs (the special setting in Section~\ref{sec:labelmse}). Since images in our datasets contain multiple labels (15 keypoints for CUB-200-2011 and 17 keypoints for OpenMonkey), we can use any two different keypoints on the same dataset as source and target tasks. In total, we construct 210 source-target pairs for CUB-200-2011 and 272 pairs for OpenMonkey that all have the same source and target inputs but different labels. The labels for all tasks are also two dimensional real values.

We repeat the experiment in Section~\ref{exp:different_input} with these source-target pairs for CUB-200-2011 and OpenMonkey separately. The main difference in this experiment is that we use the \emph{true} source labels (instead of dummy labels) when computing LabLogME, LabTransRate, LabMSE0, and LabMSE1. Under this setting, the LabMSE estimators here are the Shared Inputs Label MSE estimators in Definition~\ref{def:shared_labmse}. These estimators can be computed without any source models, and thus incurring very low computational costs in this setting.

Results for these experiments are in Table~\ref{tab:shared_input}. In the results, both versions of TransRate perform poorly on CUB-200-2011, while TransRate is slightly better than LogME on OpenMonkey. In most settings, LabMSE0 and LabMSE1 both outperform LabLogME and LabTransRate, while LinMSE0 and LinMSE1 both outperform LogME and TransRate. In the setting where we transfer by full fine-tuning on the CUB-200-2011 dataset, all methods perform poorly. From these results, our approaches improve up to 113\% in comparison with SotA (LogME) while being 36.6\% better on average.

We also report in the Appendix~\ref{appendix:high_dim_exp_2} additional results for each individual source task. The results show that our methods are consistently better than LogME, LabLogME, TransRate, and LabTransRate for most source tasks on both datasets. Furthermore, our methods are also better than these baselines when transferring to higher dimensional target tasks (tasks that predict 5 keypoints simultaneously and have 10-dimensional labels). These additional results further confirm the effectiveness of our approaches.

\subsection{Evaluations on small target sets}
\label{sec:exp_small}

\begin{figure*}[t]
\centering
    \begin{subfigure}[b]{0.27\textwidth}
    \includegraphics[width=\textwidth]{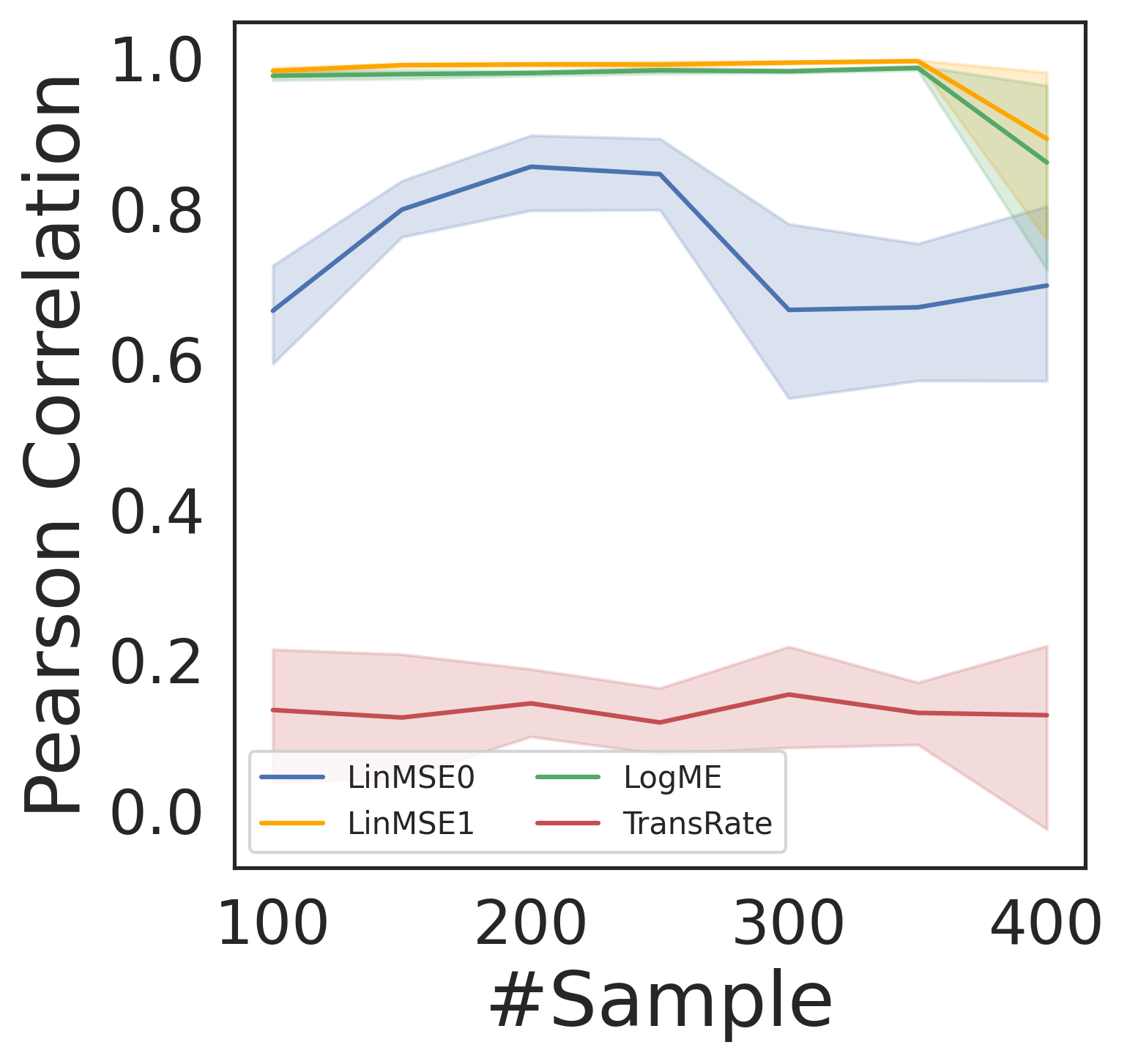}
    \end{subfigure}
    {\hskip 0.7cm}
    \begin{subfigure}[b]{0.27\textwidth}
    \includegraphics[width=\textwidth]{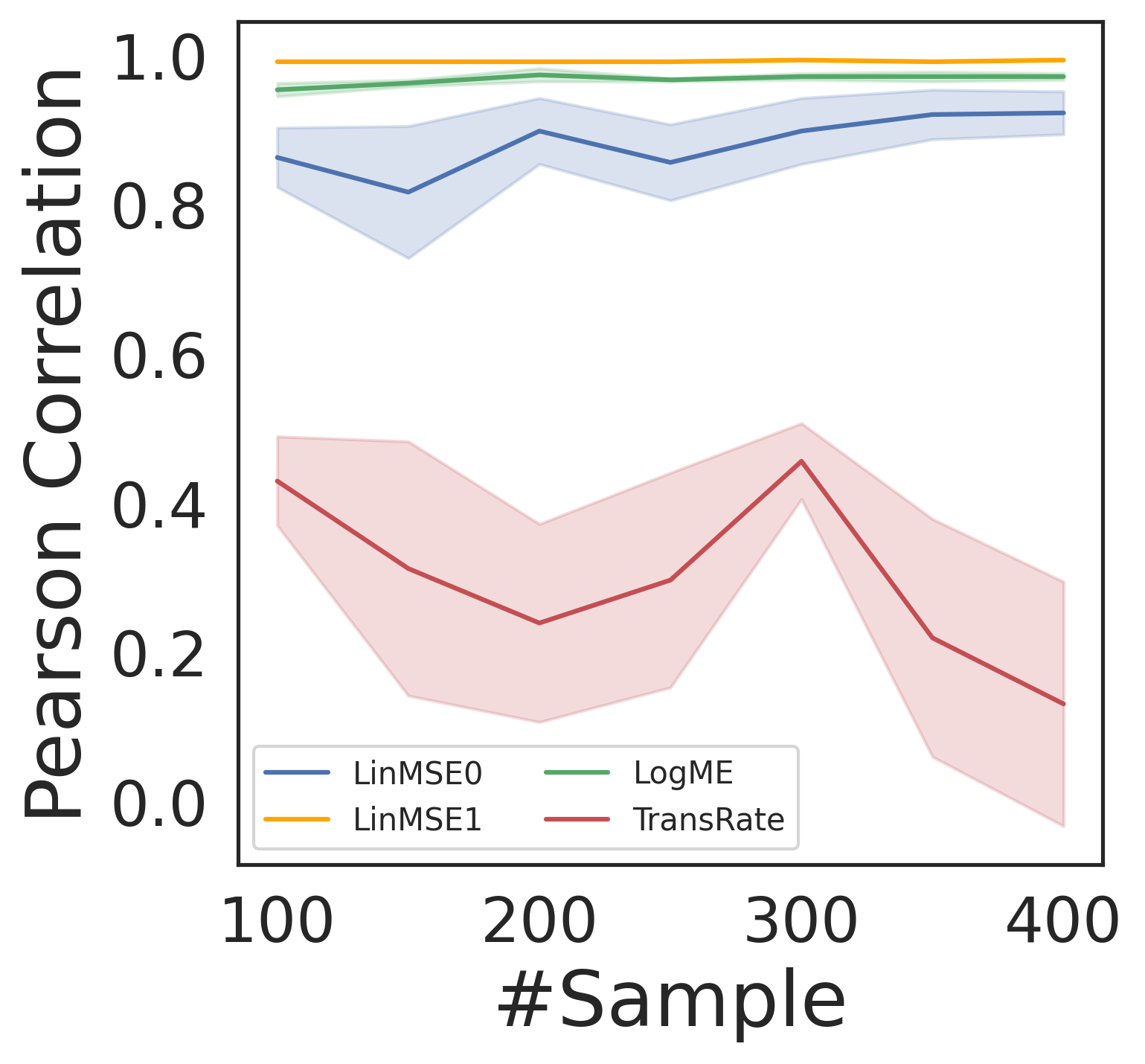}
    \end{subfigure}
    {\vskip -0.2cm}
    \caption{\textbf{Correlation coefficients with small target training sets} on CUB-200-2011 (left) and OpenMonkey (right). LinMSE1 and LogME are designed to avoid overfitting, but LinMSE1 is better than LogME in both datasets.}
    \label{fig:small_data}

    {\vskip 0.5cm}

    \begin{subfigure}[b]{0.27\textwidth}
    \includegraphics[width=\textwidth]{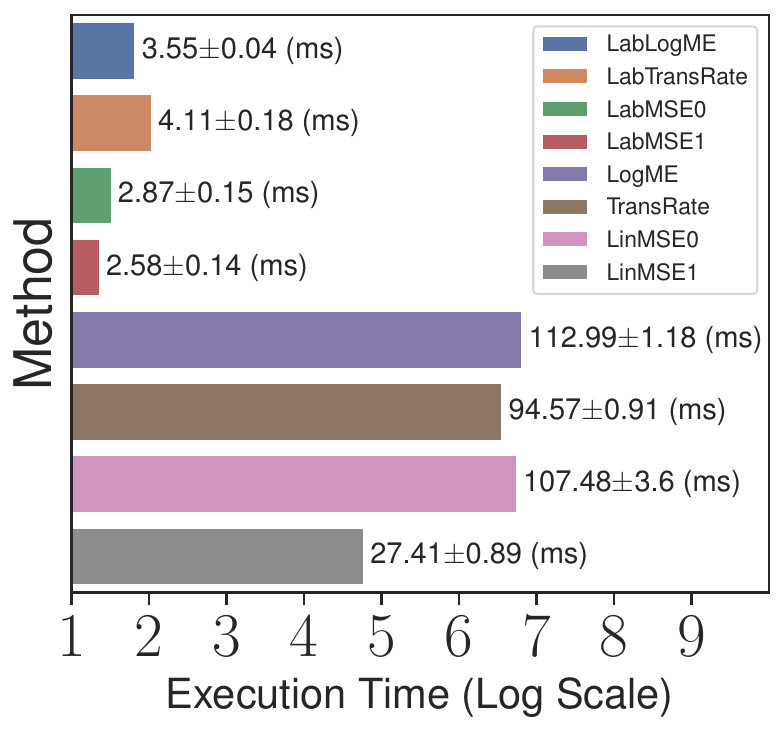}
    \end{subfigure}
    {\hskip 0.7cm}
    \begin{subfigure}[b]{0.27\textwidth}
    \includegraphics[width=\textwidth]{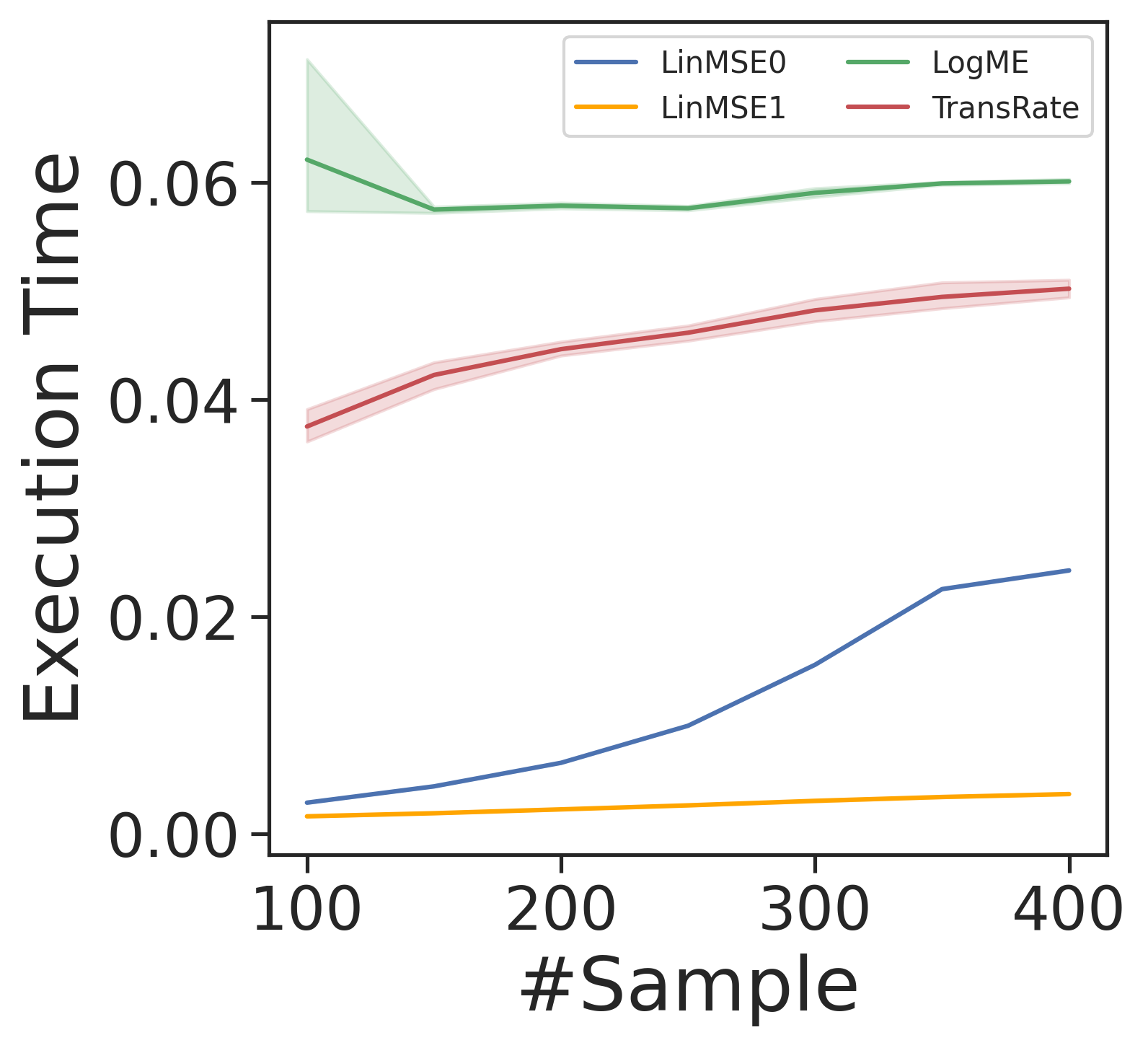}
    \end{subfigure}
    {\vskip -0.2cm}
    \caption{\textbf{Average running time} (in milliseconds) for the experiments in Sections~\ref{sec:exp_shared_inputs} (left) and~\ref{sec:exp_small} (right).}
    \label{fig:processing_time}

    {\vskip 0.5cm}
    
    \begin{subfigure}[b]{0.21\textwidth}
    \includegraphics[width=\textwidth]{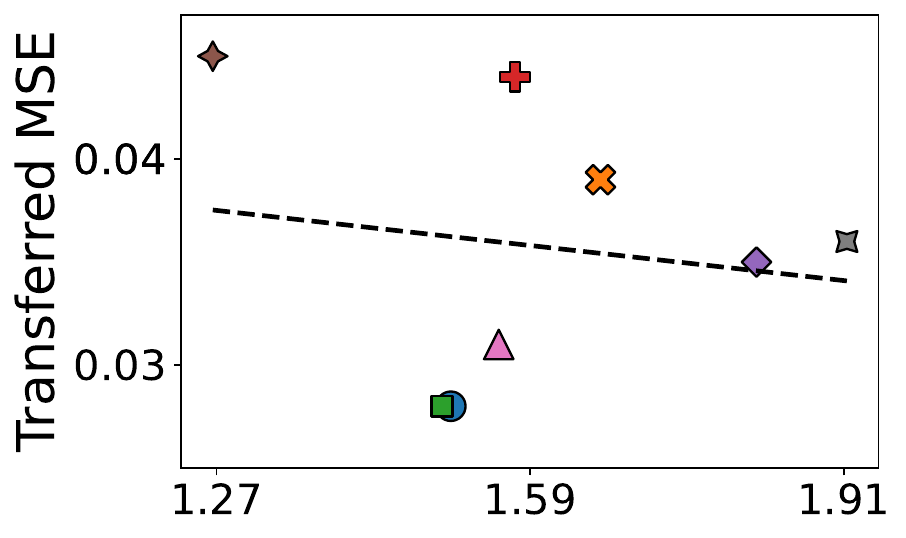}
    \caption{LogME}
    \end{subfigure}
    \begin{subfigure}[b]{0.21\textwidth}
    \includegraphics[width=\textwidth]{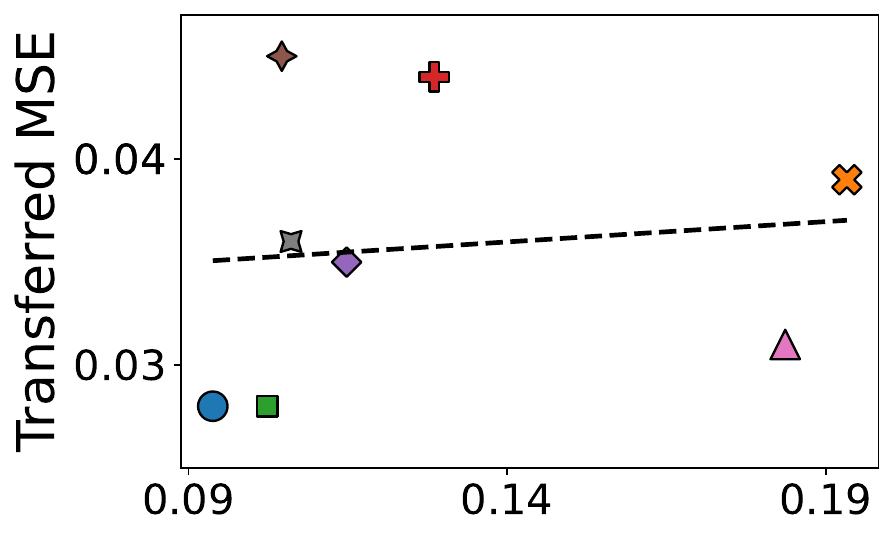}
    \caption{TransRate}
    \end{subfigure}
    \begin{subfigure}[b]{0.22\textwidth}
    \includegraphics[width=\textwidth]{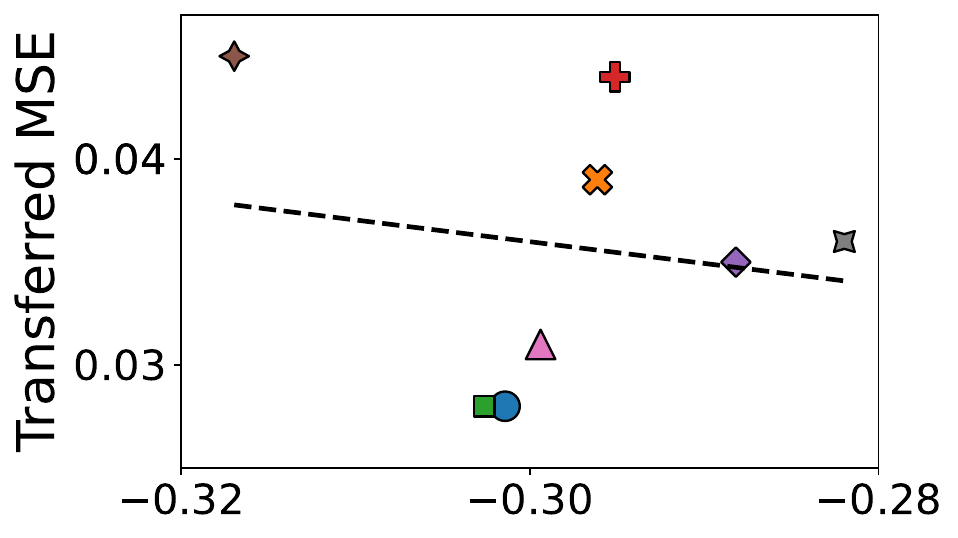}
    \caption{LinMSE0}
    \end{subfigure}
    \begin{subfigure}[b]{0.31\textwidth}
    \includegraphics[width=\textwidth]{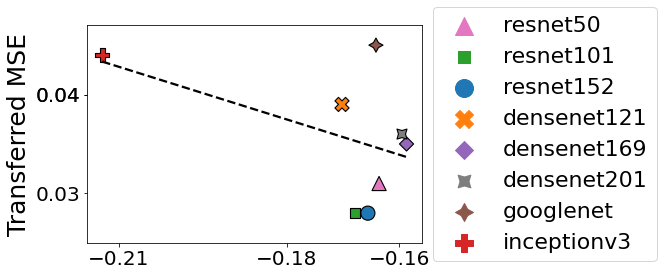}
    \caption{LinMSE1 \qquad \qquad \qquad}
    \end{subfigure}
    {\vskip -0.1cm}
    \caption{\textbf{Test MSEs vs.~transferability scores} when transferring from pre-trained classification models to a target regression task. The $x$-axis represents the transferability scores. A linear regression model (dashed line) is fitted to the points in each plot. Our methods give better fits than the baselines.}
    \label{fig:from_pretrained}
\end{figure*}

In many real-world transfer learning scenarios, the target set is usually small. This experiment will evaluate the effectiveness of the feature-based transferability estimators (LogME, TransRate, LinMSE0, and LinMSE1) in this small data regime where the number of samples is smaller than the feature dimension. For this experiment, we fix a source task (\emph{Belly} for CUB-200-2011 and \emph{Right eye} for OpenMonkey) and transfer to all other tasks in the corresponding dataset using head re-training. These source tasks are chosen since they have fewer missing labels and thus can be used to train reasonably good source models for transfer learning. For each target task, instead of using the full data, we randomly select a small subset of 100 to 400 images to perform transfer learning and to compute the transferability scores. The actual MSEs of the transferred models are still computed using the full target test sets.

Figure~\ref{fig:small_data} compares the correlations of the 4 methods on different target set sizes between 100 and 400. The results are averaged over 10 runs with 10 different random seeds. From the figure, LogME and LinMSE1 are better than TransRate and LinMSE0. This is expected since LogME and LinMSE1 are designed to avoid overfitting on small data. Both LogME and LinMSE1 are also more stable, but LinMSE1 is slightly better than LogME on all dataset sizes.

\subsection{Efficiency of our estimators}
\label{sec:efficiency}

One of the main strengths of our methods is their efficiency due to the simplicity of training the Ridge regression head. In this experiment, we first use the settings in Section~\ref{sec:exp_shared_inputs} to compare the running time of our methods with that of the baselines on the CUB-200-2011 dataset. Figure~\ref{fig:processing_time} (left) reports the results (averaged over 5 runs with different random seeds) for this experiment. From these results, our methods, LabMSE0, LabMSE1, LinMSE0, and LinMSE1, are all faster than the corresponding label-based or feature-based baselines. The figure also shows that LabMSE1 and LinMSE1 achieve the best running time among the label-based and feature-based methods respectively.

In Figure~\ref{fig:processing_time} (right), we also compare the average running time of the 4 transferability estimators using the CUB-200-2011 experiment in Section~\ref{sec:exp_small}. This figure clearly shows that our methods, LinMSE0 and LinMSE1, are more computationally efficient than LogME and TransRate. Both results in Figure~\ref{fig:processing_time} show that LinMSE1 and LabMSE1 are significantly faster than other corresponding feature-based and label-based methods. In these experiments, LinMSE1 and LabMSE1 converge faster than LinMSE0 and LabMSE0 respectively, and thus are more efficient.

\subsection{Source task selection}
\label{sec:source_selection}

Source task selection is important for applying transfer learning since the right source task can improve transfer learning performance~\citep{nguyen2020leep}. In this experiment, we examine the application of our transferability estimation methods for selecting source tasks on the CUB-200-2011 dataset. We use the head re-training setting similar to Section~\ref{sec:exp_shared_inputs}, but fix one of the tasks as the target and choose the best source task from the rest of the task pool. We repeat this process for all 15 target tasks and measure the top-$k$ matching rate of each transferability estimator. 

The top-$k$ matching rate is defined as $m_{\text{match}}/m_{\text{target}}$, where $m_{\text{target}}$ is the total number of target tasks (15 in our case), and $m_{\text{match}}$ is the number of times the selected source task gives a target model within the best $k$ models. Here the best $k$ models are determined by the actual test MSE on the target task.

Results for this experiment are in Table~\ref{tab:source_selection}. From the results, our methods are better than the baselines in terms of top-$3$ and top-$5$ matching rates. When comparing top-$1$ matching rates, our methods are competitive with LogME and LabLogME for the feature-based and label-based approaches respectively. This experiment shows that our transferability estimators are useful for source task selection.

\begin{table*}[t]
\caption{{\bf Top-$k$ matching rates for source task selection} on CUB-200-2011. Bold numbers indicate best results in each column. Asterisks (*) indicate best results among the corresponding label-based or feature-based methods.}
\centering
\small
\begin{tabular}{ccccccccc}
\toprule
\multirow{2}{*}{$k$} & \multicolumn{4}{c}{Label-based method} & \multicolumn{4}{c}{Feature-based method} \\
\cmidrule(lr){2-5} \cmidrule(lr){6-9} \cmidrule(lr){6-7}
& LabLogME & LabTransRate & LabMSE0 & LabMSE1 & LogME & TransRate & LinMSE0 & LinMSE1\\
\midrule
1 & 6/15* & 4/15  & 6/15* & 2/15 & \textbf{11/15}* & 2/15 & 9/15 & 10/15 \\
3 & 9/15  & 9/15  & 10/15* & 9/15 & 12/15 & 6/15 & 12/15 & \textbf{13/15}* \\
5 &	10/15 & 12/15 & \textbf{14/15}* & \textbf{14/15}* & 12/15 & 6/15 & 12/15 & 13/15* \\
\bottomrule
\end{tabular}
\label{tab:source_selection}
\end{table*}

{\vskip -0.2cm}
\subsection{Effects of $\lambda$}
\label{sec:lambda_exp}
{\vskip -0.2cm}

\begin{table}[t]
\setlength{\tabcolsep}{3pt}
\caption{{\bf Correlation coefficients for different values of $\lambda$} on CUB-200-2011. Bold numbers indicate best results in each column. Results of the baselines are given in the last 2 rows for comparison. When there are meaningful correlations (head re-training and half fine-tuning), our methods are better than the corresponding baselines for all $\lambda$ values.}
\resizebox{\linewidth}{!}{%
\centering
\begin{tabular}{@{\hskip -1pt}ccccccc@{\hskip -1pt}}
\toprule
\multirow{2}{*}{$\lambda$} & \multicolumn{2}{c}{Head re-training} & \multicolumn{2}{c}{Half fine-tuning} & \multicolumn{2}{c}{Full fine-tuning}\\
\cmidrule(lr){2-3} \cmidrule(lr){4-5} \cmidrule(lr){6-7}
& LabMSE & LinMSE & LabMSE & LinMSE & LabMSE & LinMSE\\
\midrule
0 &	0.916 & 0.921 & 0.536 & 0.628 & 0.056 & \textbf{0.097} \\
0.001 & 0.921 & 0.933 & 0.562 & \textbf{0.645} & 0.051 & 0.091 \\ 
0.01 & 0.922 & 0.943 & 0.560 & 0.643 & 0.048 & 0.089 \\
0.1	& 0.935	& 0.954	& 0.552	& 0.639	& 0.043	& 0.089 \\ 
0.5	& 0.945	& \textbf{0.960} & 0.562 & 0.629	& 0.053	& 0.085 \\ 
1 & \textbf{0.946} & \textbf{0.960} & 0.565 & 0.619 & 0.057	& 0.082 \\ 
2 & 0.945 & 0.958 & 0.567 & 0.607 & 0.059 & 0.077 \\ 
5 & 0.945 & 0.954 & \textbf{0.568} & 0.594 & \textbf{0.061} & 0.072 \\ 
10 & 0.945 & 0.951 & \textbf{0.568} & 0.586 & \textbf{0.061} & 0.069 \\
15 & 0.945 & 0.950 & \textbf{0.568} & 0.582 & \textbf{0.061} & 0.067 \\ 
20 & 0.945 & 0.949 & \textbf{0.568} & 0.580 & \textbf{0.061} & 0.066 \\
\midrule
(Lab)LogME & 0.547 & 0.889 & 0.400 & 0.560  & 0.120 & 0.099 \\
(Lab)TransRate & 0.008 & 0.029 & 0.006 & 0.006 & 0.001 & 0.100 \\
\bottomrule
\end{tabular}
}
\label{tab:all_lambdas}
\end{table}

In this experiment, we investigate the effects of $\lambda$ on our proposed transferability estimators. We use the setting in Section~\ref{sec:exp_shared_inputs} with the CUB-200-2011 dataset and vary the value of $\lambda$ in [0, 20] for both LabMSE and LinMSE. Table~\ref{tab:all_lambdas} reports the results for all three transfer learning settings.

For head re-training, we observe that the best correlations are achieved at $\lambda = 1$ for both LabMSE and LinMSE. For half fine-tuning, $\lambda \ge 5$ gives the best result for LabMSE, while $\lambda = 0.001$ gives the best result for LinMSE. For full fine-tuning, we do not observe significant correlations for both transferability estimators.

Notably, from the results in Table~\ref{tab:all_lambdas} for the head re-training and half fine-tuning settings (where we have significant correlations for at least one transferability estimator), LabMSE with any tested $\lambda$ value in [0, 20] is better than LabLogME and LabTransRate, while LinMSE with any tested $\lambda$ value in this range is better than LogME and TransRate. These results show that our methods are better than the baselines for a wide range of $\lambda$ values.

\subsection{Beyond regression}
\label{sec:beyond_regression}

Although our paper mainly focuses on regression tasks, the main idea of using the negative regularized MSE of a Ridge regression model for transferability estimation goes beyond regression. In principle, this idea can be applied for transferring between classification tasks (in this case, we should train a linear classifier and use its regularized log-likelihood as the transferability estimator) or between a classification and a regression task.

In this section, we demonstrate that our idea can be applied for transferability estimation between a classification and a regression task. Particularly, we use 8 source models pre-trained on ImageNet~\citep{deng2009imagenet} and transfer to a target regression task on the \emph{dSprite} dataset~\citep{matthey2017dsprites} using full fine-tuning. This setting is similar to~\citet{you2021logme} where the target is a regression task with 4-dimensional labels: x and y positions, scale, and orientation. We compute the transferability scores from the extracted features and the labels of the target training set. More details about this experiment are in the Appendix~\ref{appendix:experiment_settings_2}.

From the results in Figure~\ref{fig:from_pretrained}, the trends for LogME, LinMSE0, and LinMSE1 are correct (i.e., transferability scores have negative correlations with actual MSEs), while that of TransRate is incorrect. Note that there is a discrepancy between the ranges of the transferability and the transferred MSE because of two reasons: (1) The transferability estimators are computed from the target training set, while the transferred MSEs are computed from the target test set, and (2) there is a mismatch between the source task (ImageNet classification) and the target task (dSprite shape regression). 

To compare the transferability estimation methods, we fit a linear regression to the points in each plot and compute its RMSE to these points, where we obtain: $6.12 \times 10^{-3}$ (LogME), $6.16 \times 10^{-3}$ (TransRate), $6.10 \times 10^{-3}$ (LinMSE0), and $\textbf{5.46} \times 10^{-3}$ (LinMSE1). These results show that LinMSE0 and LinMSE1 are better than LogME and TransRate.

\section{Conclusion}

We formulated transferability estimation for regression tasks and proposed the Linear MSE and Label MSE estimators, two simple but effective approaches for this problem. We proved novel theoretical results for these estimators, showing their relationship with the actual task transferability. Our extensive experiments demonstrated that the proposed approaches are superior to recent, relevant SotA methods in terms of efficiency and effectiveness. Our proposed ideas can also be extended to mixed cases where one of the tasks is a classification problem.

\begin{acknowledgements}
LSTH was supported by the Canada Research Chairs program, the NSERC Discovery Grant RGPIN-2018-05447, and the NSERC Discovery Launch Supplement DGECR-2018-00181.
VD was supported by the University of Delaware Research Foundation (UDRF) Strategic Initiatives Grant, and the National Science Foundation Grant DMS-1951474.
\end{acknowledgements}

\bibliography{nguyen_247}

\newpage
\appendix
\onecolumn
\counterwithin{table}{section}
\counterwithin{figure}{section}

\title{Simple Transferability Estimation for Regression Tasks\\(Supplementary Material)}
\maketitle

\vspace{0.5cm}

The contents of this supplementary include:
\begin{enumerate}
    \item \textbf{Appendix~\ref{proof:lemma:empirical}}: Proof of Lemma~\ref{lemma:empirical} in the main paper.
    \item \textbf{Appendix~\ref{proof:thm:generalization}}:  Proof of Theorem~\ref{thm:generalization} in the main paper.
    \item \textbf{Appendix~\ref{proof:lemma:empirical_same_input}}: Proof of Lemma~\ref{lemma:empirical_same_input} in the main paper.
    \item \textbf{Appendix~\ref{proof:thm:generalization_same_input}}: Proof of Theorem~\ref{thm:generalization_same_input} in the main paper.
    \item \textbf{Appendix~\ref{appendix:experiment_settings_1}}: More details for the experiment settings in Sections~\ref{exp:different_input}--\ref{sec:lambda_exp} of the main paper.
    \item \textbf{Appendix~\ref{appendix:experiment_settings_2}}: More details for the experiment setting in Section~\ref{sec:beyond_regression} of the main paper.
    \item \textbf{Appendix~\ref{appendix:tightness_bounds}}: An additional experiment to show the usefulness of our theoretical bounds.
    \item \textbf{Appendix~\ref{appendix:high_dim_exp_1}}: Additional experiment results for Section~\ref{exp:different_input} of the main paper.
    \item \textbf{Appendix~\ref{appendix:high_dim_exp_2}}: Additional experiment results for Section~\ref{sec:exp_shared_inputs} of the main paper.
\end{enumerate}

\section{Mathematical proofs}

\subsection{Proof of Lemma~\ref{lemma:empirical}}
\label{proof:lemma:empirical}

Denote 
$\displaystyle A^*, b^* = \argmin_{A, b} \left\{ \frac{1}{n_t} \sum_{i=1}^{n_t} {\| y^t_i - A z_i - b \|^2} + \lambda \|A \|_F^2 \right\}.$

For all $k$, we have:
\begin{align*}
  \sqrt{\mathcal{L} (w^*, k^*; \mathcal{D}_t)} &\le \sqrt{\mathcal{L} (w^*, k; \mathcal{D}_t)} \tag{definition of $k^*$} \\
  &= \left[ \frac{1}{n_t} \sum_{i=1}^{n_t} \| y^t_i - k(w^*(x^t_i))\|^2 \right]^{1/2} \tag{definition of $\mathcal{L}$} \\
  &\le \left[ \frac{1}{n_t} \sum_{i=1}^{n_t} \| y^t_i - A^* z_i - b^*\|^2 \right]^{1/2} + \left[ \frac{1}{n_t} \sum_{i=1}^{n_t} \| A^* z_i + b^* - k(w^*(x^t_i))\|^2 \right]^{1/2} \tag{triangle inequality} \\
  &\le \sqrt{- \mathcal{T}^{\mathrm{lab}}_{\lambda}(\mathcal{D}_s, \mathcal{D}_t)} + \left[ \frac{1}{n_t} \sum_{i=1}^{n_t} \| A^* z_i + b^* - k(w^*(x^t_i)) \|^2 \right]^{1/2} \\
  &= \sqrt{- \mathcal{T}^{\mathrm{lab}}_{\lambda}(\mathcal{D}_s, \mathcal{D}_t)} + \left[ \frac{1}{n_t}\sum_{i=1}^{n_t} \| A^* h^*(w^*(x^t_i)) + b^* - k(w^*(x^t_i))\|^2 \right]^{1/2} \tag{definition of $z_i$}.\\
\end{align*}

By choosing $k (\cdot) = A^*h^*(\cdot) + b^*$, the second term in the above inequality becomes 0. This implies $\sqrt{\mathcal{L} (w^*, k^*; \mathcal{D}_t)} \le \sqrt{- \mathcal{T}^{\mathrm{lab}}_{\lambda}(\mathcal{D}_s, \mathcal{D}_t)}$ and thus the lemma.

\subsection{Proof of Theorem~\ref{thm:generalization}}
\label{proof:thm:generalization}

First, we need to define the notion of expected (true) risk. Given any model $(w, k)$ for the target task, the expected risk of $(w, k)$ is defined as:
\begin{equation}
  \mathcal{R}(w, k) := \mathbb{E}_{(x^t, y^t) \sim \mathbb{P}_t} \left \{ \|y^t - k(w(x^t))\|^2 \right \}.
\end{equation}

Note that $\mathrm{Tr}(\mathcal{D}_s, \mathbb{P}_t) = - \mathcal{R}(w^*, k^*)$. We prove the uniform bound in Lemma~\ref{lemma:uniform} below that can help us prove Theorem~\ref{thm:generalization}.

\begin{lemma}
For any $\delta >0$, with probability at least ${ 1-\delta }$, for all ReLU feed-forward neural network $(w, k)$ of the target task, we have:
\[ |\mathcal{R} (w, k) - \mathcal{L} (w, k; \mathcal{D}_t)| ~\le~ C(d, d_t, M, H, L, \delta)/\sqrt{n_t}. \]
\label{lemma:uniform}
\end{lemma}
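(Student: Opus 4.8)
The plan is to control $\sup_{(w,k)} |\mathcal{R}(w,k) - \mathcal{L}(w,k;\mathcal{D}_t)|$ over the entire target ReLU class by the classical two-part decomposition: an expected-supremum term handled by symmetrization and Rademacher complexity, plus a high-probability deviation term handled by McDiarmid's bounded-differences inequality. This matches the structure of $C$: the bracketed term $d_t^2 d\sqrt{L+1+\ln d}$ is the complexity contribution, $d_t d^2 \sqrt{2\ln(4/\delta)}$ is the confidence contribution, and the shared prefactor $16\,M^{2L+2}H^{2L}$ records the growth of the network's outputs and parameter-sensitivities through its $L$ layers.

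First I would establish the deterministic ingredients from the boundedness assumptions. Since inputs and target labels are bounded by $1$ in $\ell_\infty$-norm, each layer has at most $H$ hidden nodes and weights bounded by $M$, one linear-plus-ReLU layer maps an input bounded by $B$ to an output bounded by roughly $MHB$ (ReLU is $1$-Lipschitz and preserves bounds). Iterating this through $L$ layers bounds $\|k(w(x))\|$, and hence the per-example loss $\ell(w,k;x,y) := \|y-k(w(x))\|^2$, by a quantity of order $d_t (MH)^{2L}$; this is the origin of the $M^{2L+2}H^{2L}$ factor (the square coming from the squared loss, and the extra $M^2$ from the final layer and the parameter-perturbation analysis below). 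The same layer-by-layer propagation gives a bound on how much $\ell$ changes when a single example $(x_i^t, y_i^t)$ is replaced, i.e.\ the bounded-differences constant, and a Lipschitz bound of the network output with respect to its parameters on the bounded parameter domain.

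Next, set $\Phi(\mathcal{D}_t) := \sup_{(w,k)} |\mathcal{R}(w,k) - \mathcal{L}(w,k;\mathcal{D}_t)|$. Replacing one of the $n_t$ examples changes $\Phi$ by at most $(\sup \ell)/n_t$, so McDiarmid's inequality yields, with probability at least $1-\delta$, that $\Phi(\mathcal{D}_t) \le \mathbb{E}[\Phi(\mathcal{D}_t)] + (\sup \ell)\sqrt{2\ln(4/\delta)}/\sqrt{n_t}$, which produces the confidence term. I would then bound $\mathbb{E}[\Phi(\mathcal{D}_t)]$ by standard symmetrization, $\mathbb{E}[\Phi] \le 2\,\mathrm{Rad}_{n_t}(\ell \circ \mathcal{F})$, where $\mathcal{F}$ is the class of network output maps $x \mapsto k(w(x))$; peel off the squared loss using the Ledoux--Talagrand contraction lemma, which is legitimate because $\ell$ is Lipschitz on the bounded output range established in the previous step; and finally bound the Rademacher complexity of $\mathcal{F}$ by a covering-number (Dudley / layer-peeling) argument. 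Because $\mathcal{F}$ is $d_t$-valued, contraction and the Rademacher bound are applied coordinate-wise, contributing the $d_t$ factors, while the Massart-type $\sqrt{\ln N}$ estimate of the covering number $N$ — with $\ln N$ absorbing the $L+1$ layers and the input dimension $d$ — produces the $\sqrt{L+1+\ln d}$ factor.

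The main obstacle is the Rademacher/covering bound for the deep ReLU class combined with the correct multiplicative propagation of the $M$ and $H$ bounds through all $L$ layers: obtaining the precise $M^{2L+2}H^{2L}$ dependence requires tracking how both the output magnitude and the sensitivity to parameter perturbations compound from layer to layer, and the non-Lipschitz squared loss can only be handled after the uniform output bound is in place. Once those deterministic constants are secured, the symmetrization and McDiarmid steps are routine applications of standard tools.
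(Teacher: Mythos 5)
Your overall skeleton matches the paper's proof closely: a deterministic layer-by-layer propagation bound on the network output (the paper shows $\| k(w(x)) \|_{\infty} \le d M^{L+1} H^{L}$ for $\|x\|_\infty \le 1$), contraction to peel off the squared loss (the paper applies Lemma 26.9 of Shalev-Shwartz and Ben-David with Lipschitz constant $4 d M^{L+1} H^{L}$ on that bounded range, exactly the "legitimate only after the uniform output bound" point you make), coordinate-wise treatment of the $d_t$-valued output via $\widehat R_{\mathcal{D}_t}(\mathcal{F}) \le \sum_{j=1}^{d_t} \widehat R_{\mathcal{D}_t}(\mathcal{F}_j)$, and a symmetrization-plus-McDiarmid concentration step (the paper packages this by citing Theorem 26.5 of Shalev-Shwartz and Ben-David, with the loss bound of order $d_t d^2 M^{2L+2} H^{2L}$ producing the $d_t d^2 \sqrt{2\ln(4/\delta)}$ confidence term, matching your account). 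Note that no parameter-perturbation or parameter-Lipschitz analysis appears anywhere in the paper's argument; you only need it because of the step discussed next.

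The genuine gap is precisely in the step you flagged as the main obstacle: bounding the Rademacher complexity of the deep ReLU class. You propose Dudley plus a parameter-space covering bound, with "$\ln N$ absorbing the $L+1$ layers and the input dimension $d$" to produce the $\sqrt{L+1+\ln d}$ factor. That attribution does not work: a cover of the parameter domain has $\ln N$ proportional to the total number of parameters, which here is of order $L H^2$ (times logarithms of $M$, the cover granularity, and the layer-wise Lipschitz constants), so Dudley places a factor polynomial in $H$ and super-$\sqrt{L}$ \emph{inside} the square root. It therefore cannot yield the stated $C(d, d_t, M, H, L, \delta)$, in which all $M, H$ dependence sits \emph{outside} the radical as the norm product $M^{2L+2} H^{2L}$ and only $\sqrt{L+1+\ln d}$ remains inside. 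The paper gets this size-independent-in-the-radical form by invoking Theorem 2 of \citet{golowich2018size}, giving $\widehat R_{\mathcal{D}^x_t}(\Phi_j) \le 2 d_t M^{L+1} H^{L} \sqrt{(L+1+\ln d)/n_t}$; that result is proved by a norm-based peeling/moment argument, not by covering numbers. To complete your proof as stated you would need to either cite such a norm-based Rademacher bound or reprove it; the covering route you describe would establish a uniform bound of the right general shape but with a materially worse complexity term, and hence not the lemma with the paper's exact constant.
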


\begin{proof}
We recall the definition of Rademacher complexity. Given a real-valued function class $\mathcal{G}$ and a set of data points $\mathcal{D} = \{ u_i \}_{i=1}^n$, the (empirical) Rademacher complexity $\widehat R_{\mathcal{D}}(\mathcal{G})$ is defined as:
\[
\widehat R_{\mathcal{D}}(\mathcal{G}) = \mathbb{E}_{\epsilon} \left[ \sup_{g \in \mathcal{G}} \frac{1}{n} \sum_{i=1}^n{\epsilon_i g(u_i)} \right],
\]
where $\epsilon = (\epsilon_1, \epsilon_2, \ldots, \epsilon_n)$ is a vector uniformly distributed in $\{ - 1, +1\}^n$ .

In our setting, the hypothesis space $\Phi$ is the class of $L$-layer ReLU feed-forward neural networks whose number of hidden nodes and parameters in each layer are bounded from above by $H$ and $M \ge 1$ respectively.
For all $(w, k) \in \Phi$ and $x$ such that $\|x \|_\infty \leq 1$,  we have:
\[ 
\| k(w(x)) \|_{\infty} \le d M^{L+1} H^L. 
\]
Define $f_{w, k}(x, y) = y - k(w(x))$ and note that $f_{w, k}(x, y) \in \mathbb{R}^{d_t}$. For any $j = 1, 2, \ldots, d_t$, let $[\cdot]_j$ be the projection map to the $j$-th coordinate. We consider the following real-valued function classes:
\begin{align*}
\mathcal{F} &= \{\|f_{w, k}\|^2: (w, k) \in \Phi\}, \\   
\mathcal{F}_j &= \{[f_{w, k}]_j: (w, k) \in \Phi\}, \\
\Phi_j &= \{[k(w(\cdot)]_j: (w, k) \in \Phi\},
\end{align*}
where each element of $\mathcal{F}$ or $\mathcal{F}_j$ is a function with variables $(x, y)$, and each element of $\Phi_j$ is a function with variable $x$.
Let $\mathcal{D}^x_t = \{ x^t_i \}_{i=1}^{n_t}$ be the set of target inputs.
By Theorem 2 of~\cite{golowich2018size}, for all $j = 1, 2, \ldots, d_t$, we have:
\[
\widehat R_{\mathcal{D}^x_t}(\Phi_j) \leq 2 d_t M^{L+1} H^L \sqrt{\frac{L+1+ \ln d}{n_t}}.
\]
We note that for any $i = 1, 2, \ldots, n_t$, the function $r_i(a) = (a - y_i^t)^2$ mapping from ${ a \in [- d M^{L+1} H^L, d M^{L+1} H^L]}$ to $\mathbb{R}$ is Lipschitz with constant $4 d M^{L+1} H^L$. Thus, applying the Contraction Lemma (Lemma 26.9 in~\cite{shalev2014understanding}), we obtain:
\[
\widehat R_{\mathcal{D}_t}(\mathcal{F}_j) \le 4 d M^{L+1} H^L \widehat R_{\mathcal{D}^x_t}(\Phi_j) \le 8 d d_t M^{2L+2} H^{2L} \sqrt{\frac{L+1+ \ln d}{n_t}}.
\]

Therefore,
\[
\widehat R_{\mathcal{D}_t}(\mathcal{F}) \le \sum_{j=1}^{d_t} \widehat R_{\mathcal{D}_t}(\mathcal{F}_j) \le 8 d d_t^2 M^{2L+2} H^{2L} \sqrt{\frac{L+1+ \ln d}{n_t}}.
\]

Using this inequality, the result of Lemma~\ref{lemma:uniform} follows from Theorem 26.5 in~\cite{shalev2014understanding}.
\end{proof}

To prove Theorem~\ref{thm:generalization}, we apply Lemma~\ref{lemma:empirical} in the main paper and Lemma~\ref{lemma:uniform} above for the transferred target model $(w^*, k^*)$. Thus, for any $\lambda \ge 0$ and $\delta > 0$, with probability at least $1 - \delta$, we have:
\begin{align*}
\mathcal{T}^{\mathrm{lab}}_{\lambda}(\mathcal{D}_s, \mathcal{D}_t) &\le - \mathcal{L} (w^*, k^*; \mathcal{D}_t)\\
&\le - \mathcal{R}(w^*, k^*) + C(d, d_t, M, H, L, \delta)/\sqrt{n_t}\\
&= \mathrm{Tr}(\mathcal{D}_s, \mathbb{P}_t) + C(d, d_t, M, H, L, \delta)/\sqrt{n_t}.
\end{align*}

Therefore, Theorem~\ref{thm:generalization} holds.

\subsection{Proof of Lemma~\ref{lemma:empirical_same_input}}
\label{proof:lemma:empirical_same_input}

Note that
$\displaystyle A^*_\lambda, b^*_\lambda = \argmin_{A, b} \left\{ \frac{1}{n} \sum_{i=1}^n \| y^t_i - A y^s_i - b \|^2 + \lambda \|A \|_F^2 \right\}.$

For all $k$, we have:
\begin{align*}
  \sqrt{\mathcal{L} (w^*, k^*; \mathcal{D}_t)} 
  &\le \sqrt{\mathcal{L} (w^*, k; \mathcal{D}_t)} \tag{definition of $k^*$} \\
  &= \left[ \frac{1}{n} \sum_{i=1}^n \| y^t_i - k(w^*(x_i)) \|^2 \right]^{1/2} \tag{definition of $\mathcal{L}$} \\
  &\le \left[ \frac{1}{n} \sum_{i=1}^n \| y^t_i - A^*_\lambda y^s_i - b^*_\lambda \|^2 \right]^{1/2} + \left[ \frac{1}{n}\sum_{i=1}^n \| A^*_\lambda y^s_i + b^*_\lambda - k(w^*(x_i))\|^2 \right ]^{1/2} \tag{triangle inequality} \\
  &\le \sqrt{- \widehat{\mathcal{T}}^{\mathrm{lab}}_{\lambda}(\mathcal{D}_s, \mathcal{D}_t)} + \left[ \frac{1}{n} \sum_{i=1}^n \| A^*_\lambda y^s_i + b^*_\lambda - k(w^*(x_i)) \|^2 \right ]^{1/2}. \tag{definition of $\widehat{\mathcal{T}}^{\mathrm{lab}}_{\lambda}$}
\end{align*}

Picking $k(\cdot) = A^*_\lambda h^*(\cdot) + b^*_\lambda$, this inequality becomes:
\begin{align*}
  \sqrt{\mathcal{L} (w^*, k^*; \mathcal{D}_t)} 
  &\le \sqrt{- \widehat{\mathcal{T}}^{\mathrm{lab}}_{\lambda}(\mathcal{D}_s, \mathcal{D}_t)} + \left[ \frac{1}{n} \sum_{i=1}^n \| A^*_\lambda [y^s_i - h^*(w^*(x_i))]\|^2\right]^{1/2} \\
  &\le \sqrt{- \widehat{\mathcal{T}}^{\mathrm{lab}}_{\lambda}(\mathcal{D}_s, \mathcal{D}_t)} + \| A^*_\lambda\|_F \left[ \frac{1}{n}\sum_{i=1}^n \|y^s_i - h^*(w^*(x_i))\|^2\right]^{1/2} \\
  &= \sqrt{- \widehat{\mathcal{T}}^{\mathrm{lab}}_{\lambda}(\mathcal{D}_s, \mathcal{D}_t)} + \|A^*_\lambda\|_F \sqrt{\mathcal{L} (w^*, h^*; \mathcal{D}_s)}.
\end{align*}

Note that if $a \le b + c$, then $a^2 \le 2b^2 + 2c^2$. Applying this fact to the above inequaility, we have:
\[ \mathcal{L} (w^*, k^*; \mathcal{D}_t) \le - 2 \widehat{\mathcal{T}}^{\mathrm{lab}}_{\lambda}(\mathcal{D}_s, \mathcal{D}_t) + 2 \|A^*_\lambda\|^2_F \mathcal{L} (w^*, h^*; \mathcal{D}_s). \]

Thus, Lemma~\ref{lemma:empirical_same_input} holds.

\subsection{Proof of Theorem~\ref{thm:generalization_same_input}}
\label{proof:thm:generalization_same_input}

For any $\lambda \geq 0$ and $\delta > 0$, applying Lemma~\ref{lemma:uniform} for $(w^*, k^*)$ and Lemma~\ref{lemma:empirical_same_input}, with probability at least $1 - \delta$:
\begin{align*}
  \mathcal{R} (w^*, k^*) &\le \mathcal{L} (w^*, k^*; \mathcal{D}_t) + C(d, d_t, M, H, L, \delta)/\sqrt{n} \\
  &\le - 2 \widehat{\mathcal{T}}^{\mathrm{lab}}_{\lambda}(\mathcal{D}_s, \mathcal{D}_t) + 2 \|A^*_\lambda\|^2_F ~ \mathcal{L} (w^*, h^*; \mathcal{D}_s) + C(d, d_t, M, H, L, \delta)/\sqrt{n}.
\end{align*}

Since $\mathrm{Tr}(\mathcal{D}_s, \mathbb{P}_t) = -\mathcal{R} (w^*, k^*)$, Theorem~\ref{thm:generalization_same_input} holds.

\section{More details for experiment settings}

\subsection{More details for Sections~\ref{exp:different_input}--\ref{sec:lambda_exp}}
\label{appendix:experiment_settings_1}

For these experiments, we train our source models from scratch using the MSE loss with the AdamW optimizer~\citep{loshchilov2018decoupled}, which we run for 40 epochs with batch size of 64 and the cosine learning rate scheduler. To obtain good source models, we resize all input images to 256$\times$256 and apply basic image augmentations without horizontal flipping (i.e., affine transformation, Gaussian blur, and color jitter). We also scale all labels into $[0, 1]$ using the width and height of the input images.

For the transfer learning setting with head re-training, we freeze the trained feature extractor and re-train the regression head on the target dataset using the same setting above, except that we run 15 epochs on the CUB-200-2011 dataset and 30 epochs on the OpenMonkey dataset. For half fine-tuning, we unfreeze the last convolution layer and the head classifier since the number of trainable parameters is around half of the total number of parameters. For full fine-tuning, we unfreeze the whole network. In these two fine-tuning settings, we fine-tune for 15 epochs on both datasets. We use PyTorch~\citep{paszke2019pytorch} for implementation.

\subsection{More details for Section~\ref{sec:beyond_regression}}
\label{appendix:experiment_settings_2}

For this experiment, we use the following 8 ImageNet pre-trained models as the source models: ResNet50, ResNet101, ResNet152~\citep{he2016deep}, DenseNet121, DenseNet169, DenseNet201~\citep{huang2017densely}, GoogleNet~\citep{szegedy2015going}, and Inceptionv3~\citep{szegedy2016rethinking}. These models are taken from the PyTorch Model Zoo. 

We use the dSprites dataset~\citep{matthey2017dsprites} for the target task. This dataset contains 737,280 images with 4 outputs for regression: x and y positions, scale, and orientation. The train-test split is similar to the settings in~\cite{you2021logme}: 60\% for training, 20\% for validation, and 20\% for testing. The transferred MSE is computed on the test set. We train our models with 10 epochs using the AdamW optimizer. The initial learning rate is $10^{-3}$, which is divided by 10 every 3 epochs.

\section{Additional experiment results}

\subsection{Usefulness of theoretical bounds}
\label{appendix:tightness_bounds}

Although the theoretical bounds in Section~\ref{sec:theory} show the relationships between the transferability of the optimal transferred model and our transferability estimators, these bounds could be loose in practice unless the number of samples is large. This is in fact a limitation of this type of generalization bounds. To show the usefulness of our bounds in practice, we conduct an experiment to investigate the generalization gap using the head re-training setting in Section~\ref{exp:different_input}. 

The generalization gap is defined as the \emph{difference between our transferability score and the negative MSE (the transferability) of the transferred model}. According to our theorems, this generalization gap is bounded above by the complexity term. We will compare the generalization gap with the absolute value of our transferability score and also inspect whether it has any significant correlation with the actual transferred MSE.

From this experiment, the ratios between the absolute value of transferability score and the generalization gap for our transferability estimators are: 1.6 (LinMSE0), 2.0 (LinMSE1), 2.3 (LabMSE0), and 2.3 (LabMSE1). These results show that the transferability scores dominate the generalization gap in practice. More importantly, there is \emph{no significant correlation} between the generalization gap and the actual transferred MSE. These findings indicate that the complexity term in our bounds may have little effects for transferability estimation, as opposed to the transferability score term that has a strong effect (shown by the high correlations in our main experiments).

\begin{table*}[t]
\caption{{\bf Kendall's-$\tau$ correlation coefficients when transferring from OpenMonkey to CUB-200-2011}. Bold numbers indicate best results in each row. Asterisks (*) indicate best results among the corresponding label-based or feature-based methods. Our estimators improve up to 28.4\% in comparison with SotA (LogME) while being 13\% better on average.}
\centering
\small
\begin{tabular}{ccccccccc}
\toprule
\multirow{2}{*}{Transfer setting} & \multicolumn{4}{c}{Label-based method} & \multicolumn{4}{c}{Feature-based method} \\
\cmidrule(lr){2-5} \cmidrule(lr){6-9}
& LabLogME & LabTransRate & LabMSE0 & LabMSE1 & LogME & TransRate & LinMSE0 & LinMSE1 \\
\midrule
Head re-training & 0.728 & 0.028 & ~~\textbf{0.935}* & 0.924 & 0.906 & 0.104 & 0.896 & 0.922*\\
Half fine-tuning & 0.525 & 0.392 & 0.644 & ~~0.646* & 0.651 & 0.291 & ~~\textbf{0.667}* & 0.646\\
Full fine-tuning & 0.497 & 0.289 & ~~0.606* & 0.594 & 0.611 & 0.328 & ~~{\bf 0.616}* & 0.594 \\
\bottomrule
\end{tabular}
\label{tab:kendall}
\end{table*}

\begin{table*}[t]
\caption{{\bf Spearman correlation coefficients when transferring from OpenMonkey to CUB-200-2011}. Bold numbers indicate best results in each row. Asterisks (*) indicate best results among the corresponding label-based or feature-based methods. Our estimators improve up to 19.9\% in comparison with SotA (LogME) while being 9.7\% better on average.}
\centering
\small
\begin{tabular}{ccccccccc}
\toprule
\multirow{2}{*}{Transfer setting} & \multicolumn{4}{c}{Label-based method} & \multicolumn{4}{c}{Feature-based method} \\
\cmidrule(lr){2-5} \cmidrule(lr){6-9}
& LabLogME & LabTransRate & LabMSE0 & LabMSE1 & LogME & TransRate & LinMSE0 & LinMSE1 \\
\midrule
Head re-training & 0.857 & 0.102 & ~~\textbf{0.994}* & 0.991 & 0.988 & 0.215 & 0.984 & ~~0.990*\\
Half fine-tuning & 0.726 & 0.409 & 0.857 & ~~0.858* & 0.857 & 0.437 & ~~\textbf{0.865}* & 0.858\\
Full fine-tuning & 0.689 & 0.433 & ~~0.826* & 0.823 & ~~\textbf{0.827}* & 0.474 & ~~\textbf{0.827}* & 0.823 \\
\bottomrule
\end{tabular}
\label{tab:spearman}
\end{table*}

\begin{table*}[t]
\caption{{\bf Correlation coefficients when transferring between 10d-output tasks from OpenMonkey to CUB-200-2011}. Bold numbers indicate best results in each row. Asterisks (*) indicate best results among the corresponding label-based or feature-based methods. All correlations are statistically significant with $p<0.001$. Our estimators with both $\lambda$ values are better than SotA (LogME).}
\centering
\small
\begin{tabular}{ccccccccc}
\toprule
\multirow{2}{*}{Transfer setting} & \multicolumn{4}{c}{Label-based method} & \multicolumn{4}{c}{Feature-based method} \\
\cmidrule(lr){2-5} \cmidrule(lr){6-9}
& LabLogME & LabTransRate & LabMSE0 & LabMSE1 & LogME & TransRate & LinMSE0  & LinMSE1 \\
\midrule
Head re-training & 0.970 & 0.719 & 0.991* & 0.989 & 0.968 & 0.656 & 0.990 & ~~\textbf{0.995}*\\
Half fine-tuning & 0.944 & 0.742 & 0.963* & 0.943 & 0.954 & 0.684 & ~~\textbf{0.980}* & 0.958\\
Full fine-tuning & 0.878 & 0.736 & 0.892* & 0.863 & 0.892 & 0.669 & ~~\textbf{0.916}* & 0.881\\
\bottomrule
\end{tabular}
\label{tab:different_input_high_dim}
\end{table*}

\subsection{Additional results for Section~\ref{exp:different_input}}
\label{appendix:high_dim_exp_1}

\minisection{Detailed correlation plots for Table~\ref{tab:different_input}.}
In Figures~\ref{fig:different_input_head_rt},~\ref{fig:different_input_half_ft}, and~\ref{fig:different_input_full_ft}, we show the detailed correlation plots and $p$-values for our experiment results reported in Table~\ref{tab:different_input} of the main paper. From these plots, all correlations are statistically significant with $p < 0.001$, except for TransRate and LabTransRate with head re-training.

\minisection{Additional results with non-linear correlation metrics.}
In Tables~\ref{tab:kendall} and~\ref{tab:spearman}, we report the Kendall's-$\tau$ and Spearman correlation coefficients to complement the results in Table~\ref{tab:different_input} of the main paper. These coefficients, as described in~\cite{bolya2021scalable}, are used to assess the ranking associations or the monotonic relationships between the transferability measures and the model performance. Based on the findings presented in these tables, our proposed scores are generally on par with or outperform the current state-of-the-art (SotA) approach, LogME~\citep{you2021logme}, with an average correlation improvement of 9.7\% and 13\% for Spearman and Kendall's-$\tau$ coefficients, respectively. This serves as a strong evidence illustrating the effectiveness of our proposed measures, not only in the linear relationship assessment, but also in the non-linear one.

\minisection{Additional result with high-dimensional labels.}
Using the setting in Section~\ref{exp:different_input}, we also conducted an additional experiment where both source and target tasks have 10-dimensional labels. In particular, we train a source model to predict five OpenMonkey keypoints: \emph{right eye}, \emph{left eye}, \emph{nose}, \emph{head}, and \emph{neck} simultaneously (i.e., this source model returns a 10-dimensional output). The source model is then transferred to a target task that predicts a combination of five CUB-200-2011 keypoints. We consider each combination of 5 keypoints among 10 CUB-200-2011 keypoints as a target task, resulting in 252 target tasks that all have 10-dimensional labels.

We also run 3 transfer learning algorithms: head re-training, half fine-tuning, and full fine-tune, using the same training settings as in Section~\ref{exp:different_input}. For TransRate and LabTransRate, we use 2 bins per dimension instead of 5 bins to reduce the computational costs. The results for this experiment are reported in Table~\ref{tab:different_input_high_dim}. From these results, our approaches are better than the baselines for both $\lambda$ values.

\subsection{Additional results for Section~\ref{sec:exp_shared_inputs}}
\label{appendix:high_dim_exp_2}

\begin{table*}[t]
\caption{{\bf Correlation coefficients when transferring from 2d-output tasks to 10d-output tasks on CUB-200-2011}. Bold numbers indicate best results in each row. Asterisks (*) indicate best results among the corresponding label-based or feature-based methods. Except for TransRate with half and full fine-tuning, all correlations are statistically significant with $p<0.001$. Our estimators are better than SotA (LogME) in most cases.}
\centering
\small
\begin{tabular}{ccccccccc}
\toprule
\multirow{2}{*}{Transfer setting} & \multicolumn{4}{c}{Label-based method} & \multicolumn{4}{c}{Feature-based method} \\
\cmidrule(lr){2-5} \cmidrule(lr){6-9}
& LabLogME & LabTransRate & LabMSE0  & LabMSE1  & LogME & TransRate & LinMSE0  & LinMSE1 \\
\midrule
Head re-training & 0.602 & 0.632 & ~~0.868* & 0.816 & 0.885 & 0.549 & 0.901 & ~~{\bf 0.973}*\\
Half fine-tuning & 0.491 & 0.645 & 0.771 & ~~0.881* & 0.804 & 0.072 & ~~{\bf 0.913}* & 0.818\\
Full fine-tuning & 0.397 & 0.632 & 0.727 & ~~{\bf 0.888*} & 0.756 & 0.050 & ~~0.884* & 0.833\\
\bottomrule
\end{tabular}
\label{tab:shared_input_high_dim}
\end{table*}

\minisection{Detailed correlation plots for Table~\ref{tab:shared_input}.}
In Figures~\ref{fig:shared_input_cub_head_rt}--~\ref{fig:shared_input_openmonkey_full_ft}, we show the detailed correlation plots and $p$-values for our experiment results reported in Table~\ref{tab:shared_input} of the main paper. From these plots, all correlations are statistically significant with $p < 0.001$, except for TransRate and LabTransRate as well as the full fine-tuning setting on the CUB-200-2011 dataset.

\minisection{Additional result for each individual source task.}
We report in Tables~\ref{tab:full_all_sources_cub} and~\ref{tab:full_all_sources_openmonkey} more comprehensive results for all source tasks on CUB-200-2011 and OpenMonkey respectively. Each row of the tables corresponds to one source task and shows the correlation coefficients when transferring to all other tasks in the respective dataset. From the tables, our transferability estimators are consistently better than LogME, LabLogME, TransRate, and LabTransRate for most source tasks on both datasets. These results confirm the effectiveness of our proposed methods.

\minisection{Additional result with high-dimensional labels.}
In this additional experiment, we further show the effectiveness of our proposed methods when the target tasks have higher dimensional labels. In particular, we transfer from 4 source tasks on CUB-200-2011 (\emph{back}, \emph{beak}, \emph{belly}, and \emph{breast}) to all the combinations of 5 attributes among the remaining tasks (except for \emph{right eye}, \emph{right leg}, and \emph{right wing}, which may not always be available in the data). In total, we have 224 source-target pairs, where the source tasks have 2-dimensional labels and the target tasks have 10-dimensional labels. We use the same training settings as in Section~\ref{sec:exp_shared_inputs} of the main paper, except that we also use 2 bins per dimension when calculating TransRate and LabTransRate to reduce computational costs. Table~\ref{tab:shared_input_high_dim} reports the results for this experiment. These results clearly show that our methods, LinMSE0 and LinMSE1, are better than the LogME and TransRate baselines in most cases.

\begin{figure*}[h]
\captionsetup[subfigure]{justification=centering}
    \begin{subfigure}[b]{0.23\textwidth}
    \includegraphics[width=\textwidth]{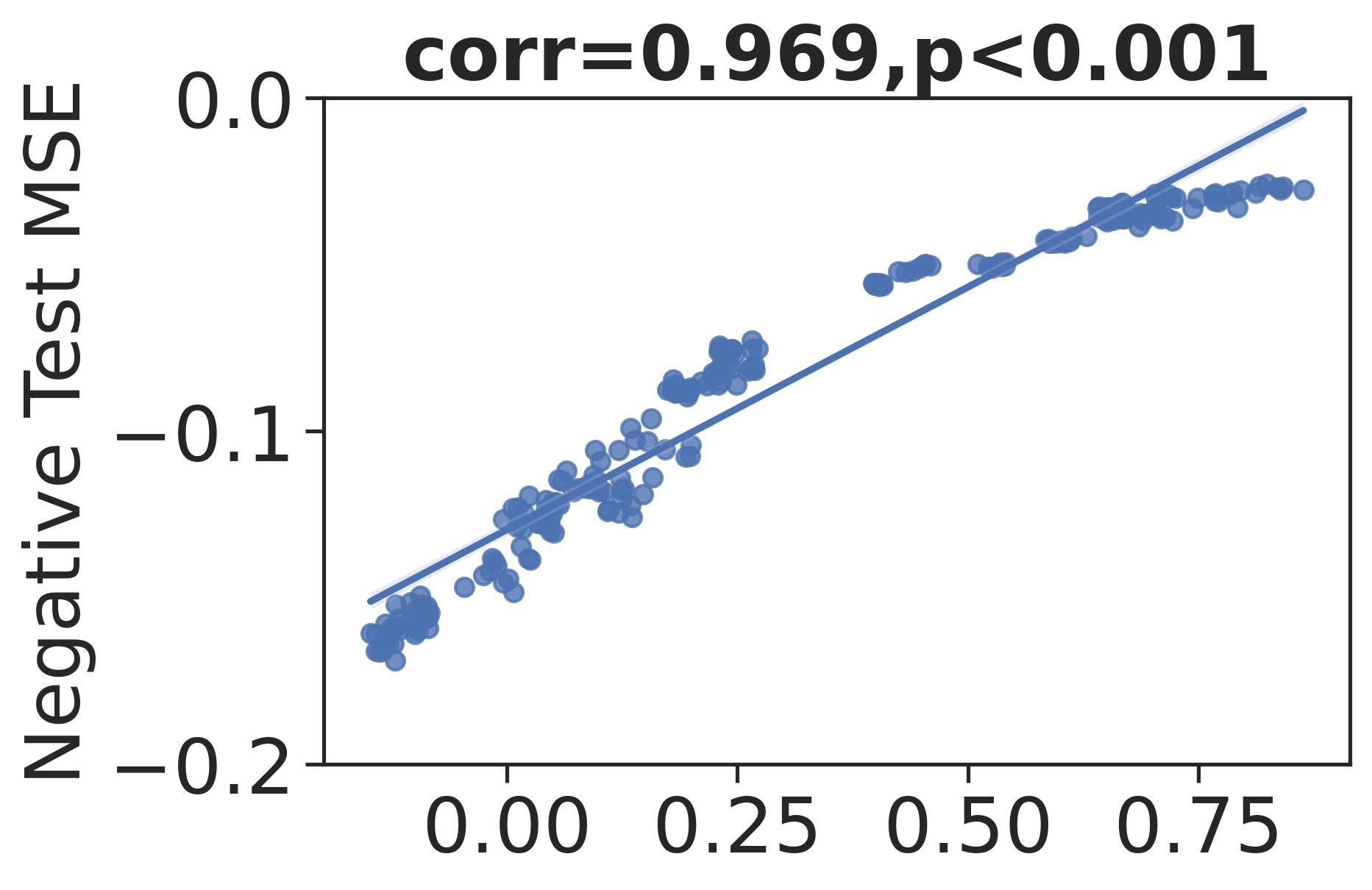}
    \caption{LogME}
    \end{subfigure}
    {\hskip 4pt}
    \begin{subfigure}[b]{0.23\textwidth}
    \includegraphics[width=\textwidth]{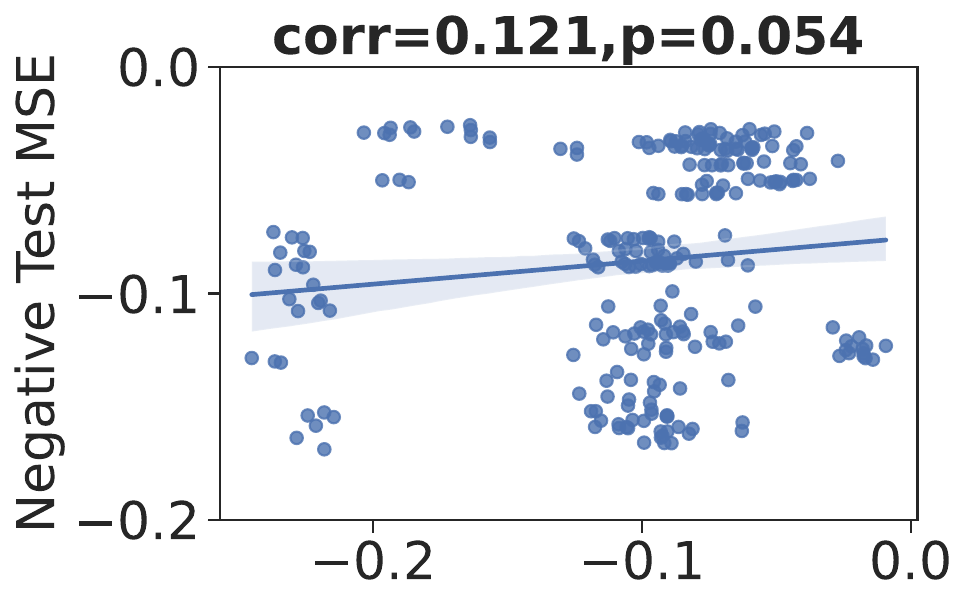}
    \caption{TransRate}
    \end{subfigure}
    {\hskip 4pt}
    \begin{subfigure}[b]{0.23\textwidth}
    \includegraphics[width=\textwidth]{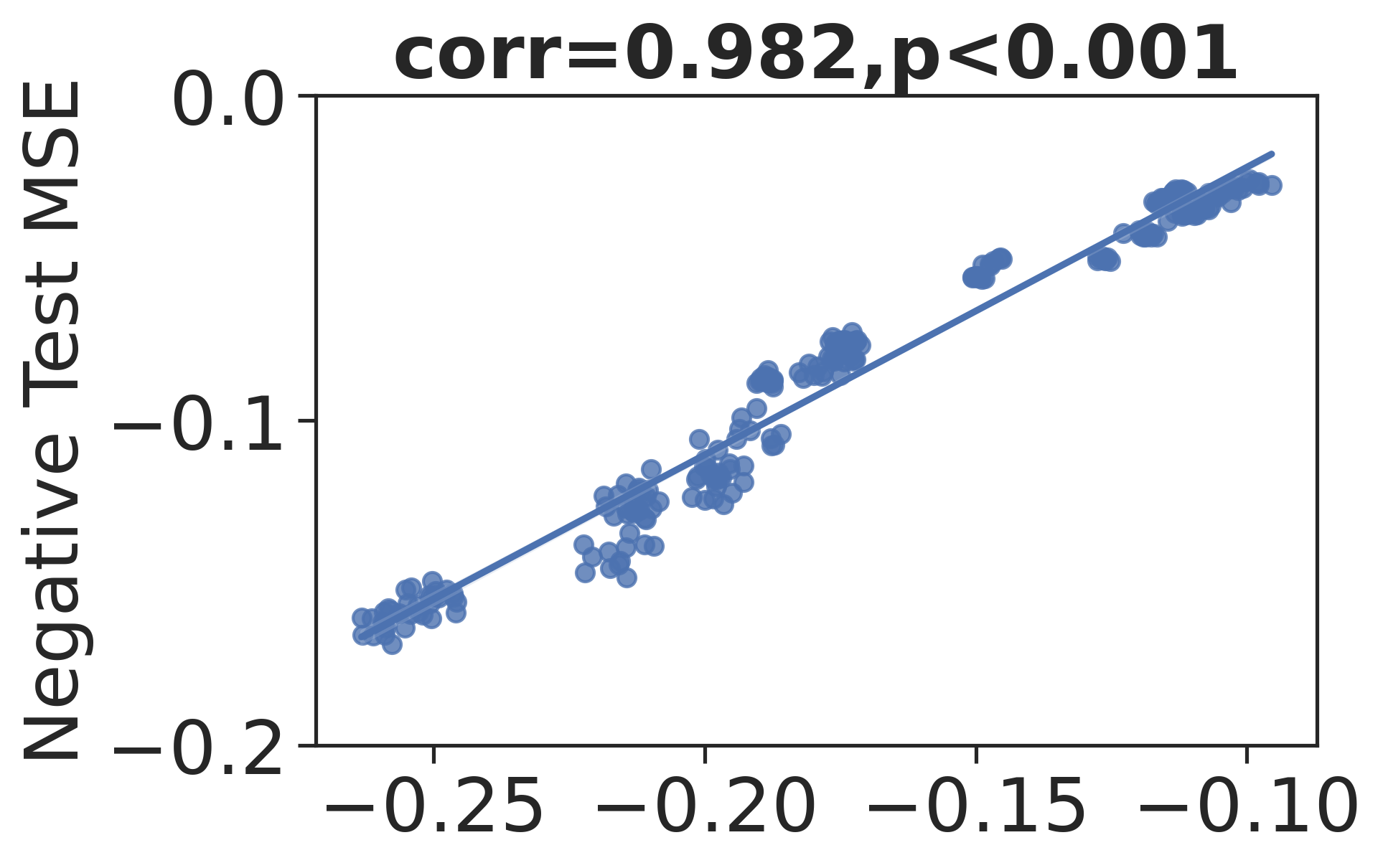}
    \caption{LinMSE0}
    \end{subfigure}
    {\hskip 4pt}
    \begin{subfigure}[b]{0.23\textwidth}
    \includegraphics[width=\textwidth]{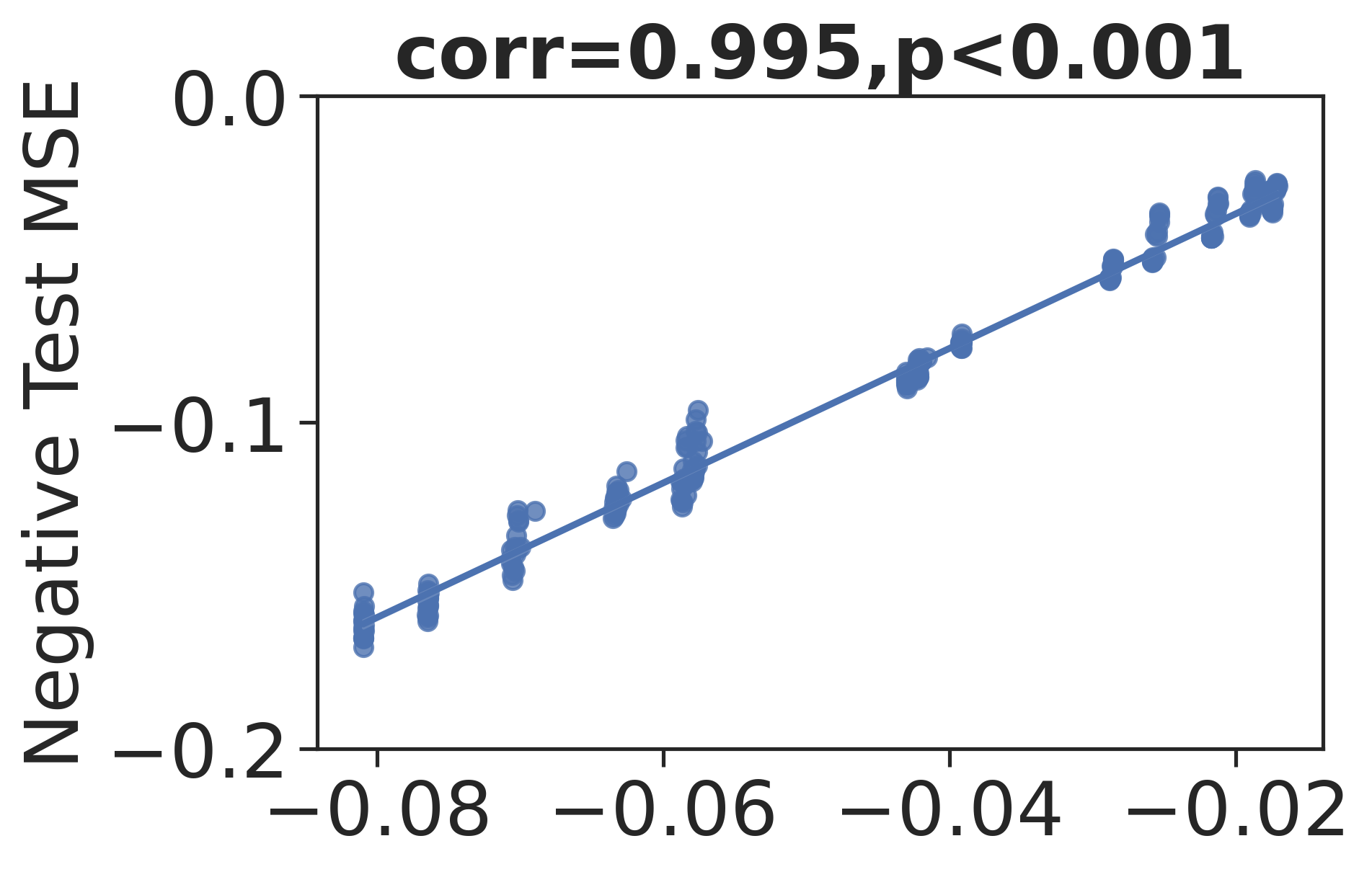}
    \caption{LinMSE1}
    \end{subfigure}
    
    {\vskip 0.2cm}
    
    \begin{subfigure}[b]{0.23\textwidth}
    \includegraphics[width=\textwidth]{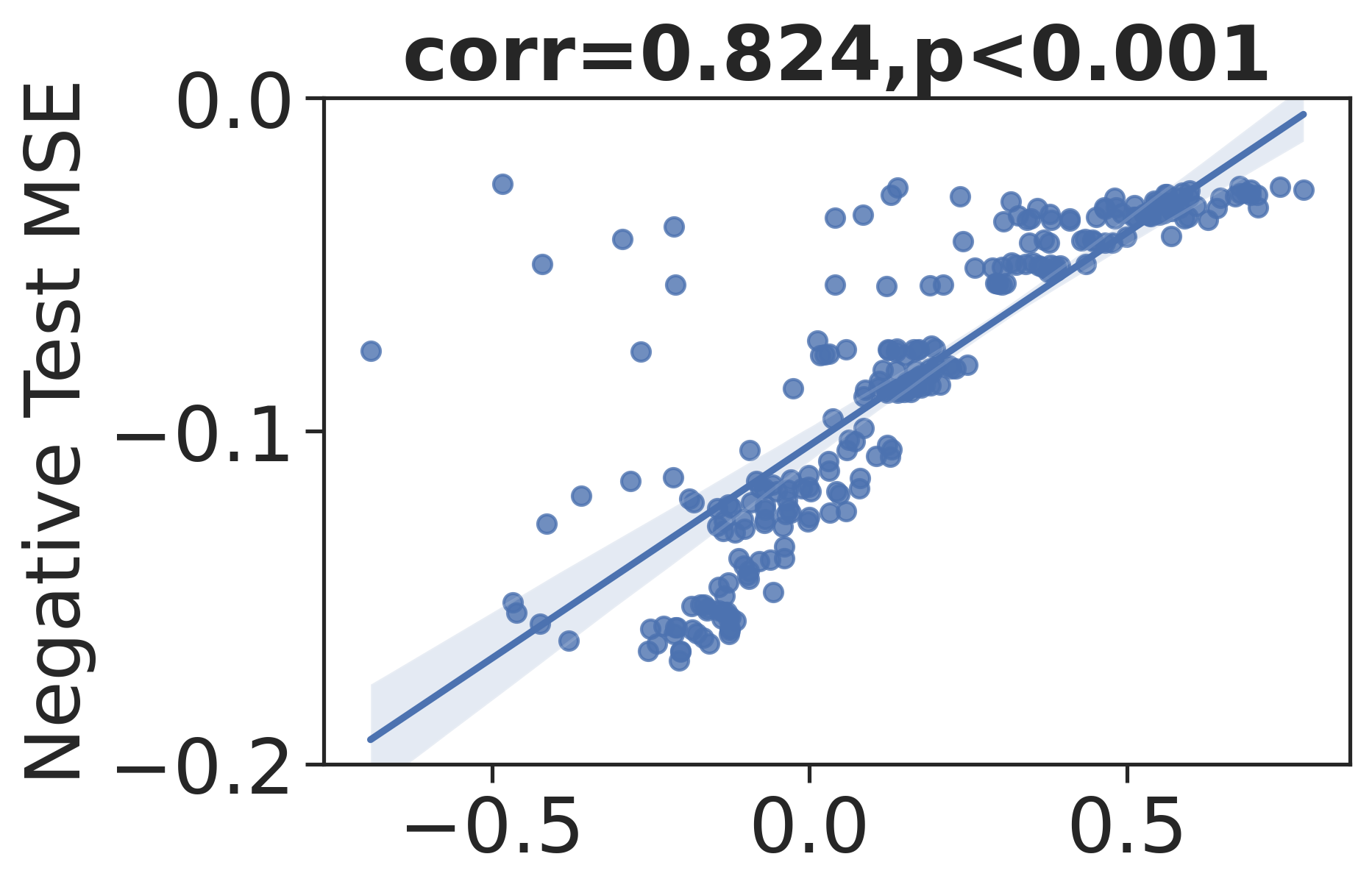}
    \caption{LabLogME}
    \end{subfigure}
    {\hskip 4pt}
    \begin{subfigure}[b]{0.23\textwidth}
    \includegraphics[width=\textwidth]{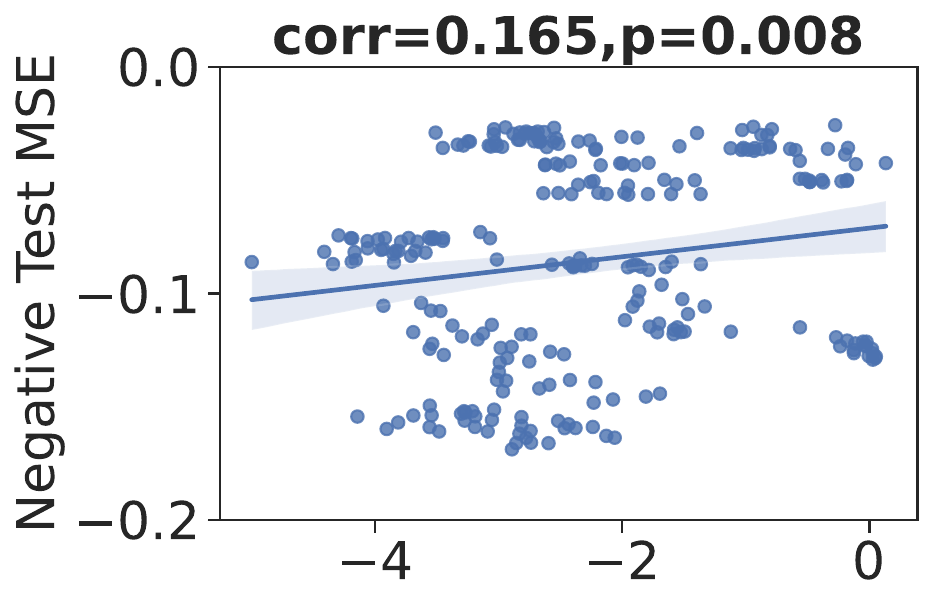}
    \caption{LabTransRate}
    \end{subfigure}
    {\hskip 4pt}
    \begin{subfigure}[b]{0.23\textwidth}
    \includegraphics[width=\textwidth]{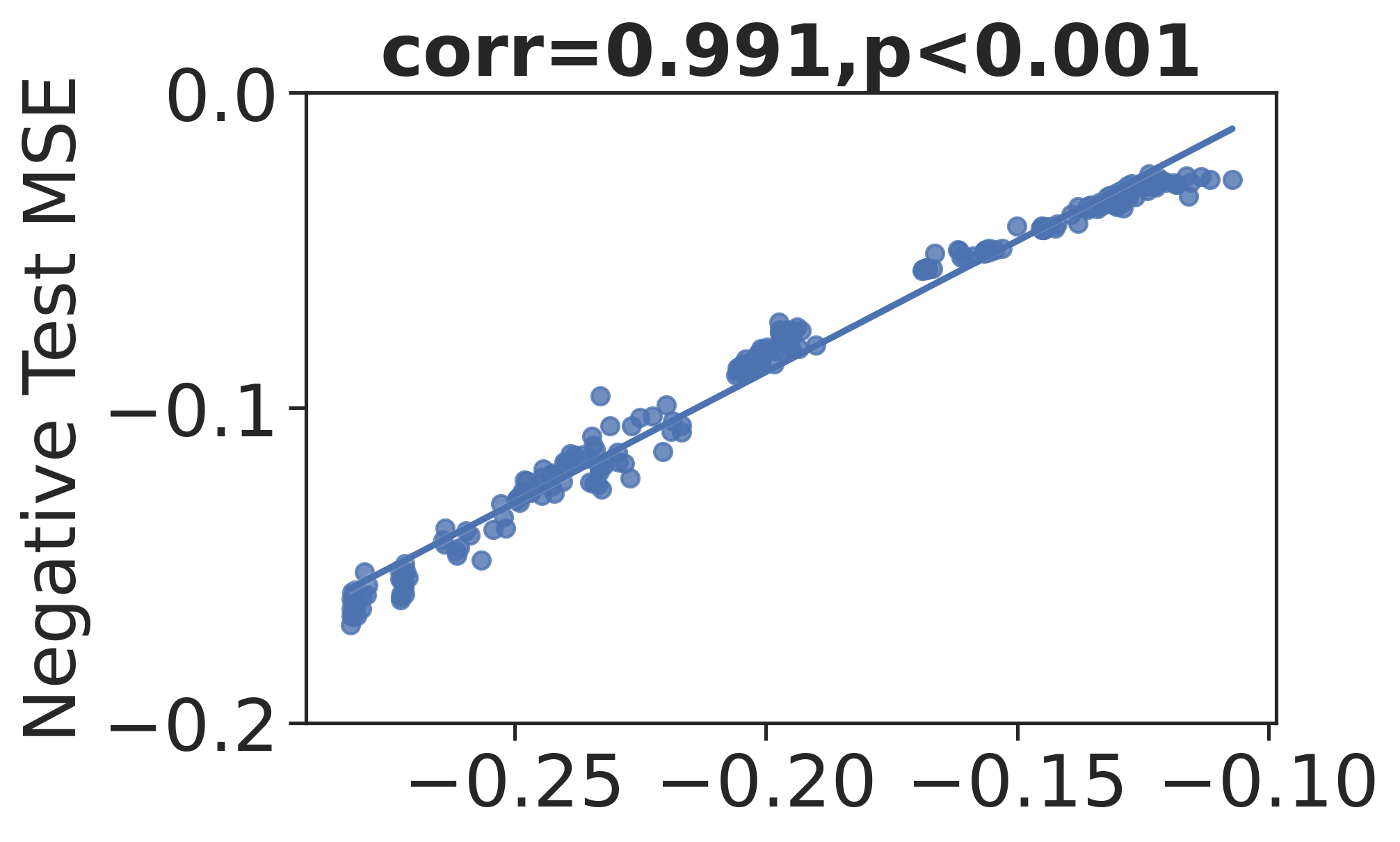}
    \caption{LabMSE0}
    \end{subfigure}
    {\hskip 4pt}
    \begin{subfigure}[b]{0.23\textwidth}
    \includegraphics[width=\textwidth]{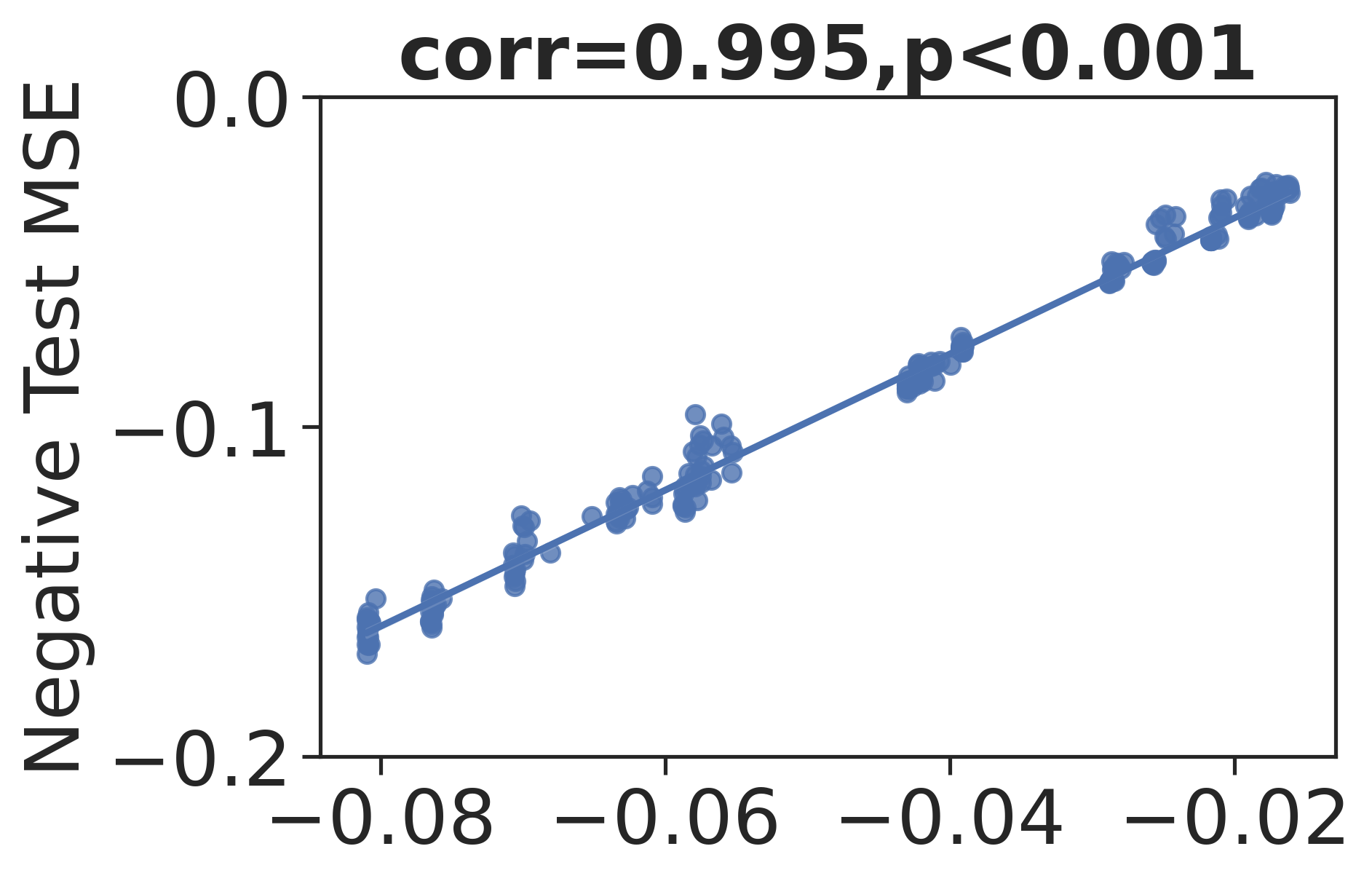}
    \caption{LabMSE1}
    \end{subfigure}
    {\vskip -0.2cm}
    \caption{\textbf{Correlation coefficients and $p$-values between transferability estimators and negative test MSEs} when transferring with head re-training from OpenMonkey to CUB-200-2011.}
    \label{fig:different_input_head_rt}
    
    {\vskip 0.4cm}
    
    \begin{subfigure}[b]{0.23\textwidth}
    \includegraphics[width=\textwidth]{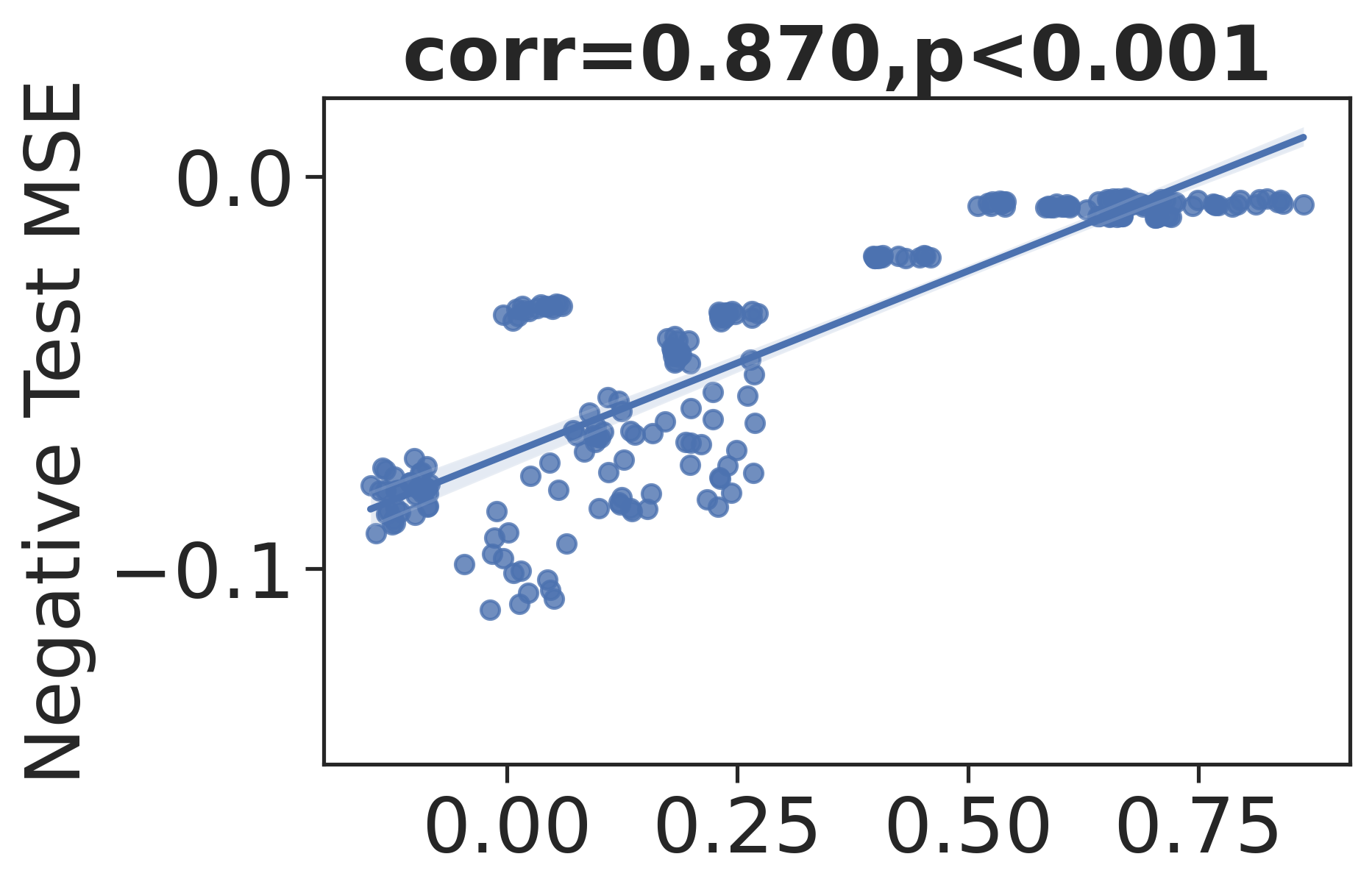}
    \caption{LogME}
    \end{subfigure}
    {\hskip 4pt}
    \begin{subfigure}[b]{0.23\textwidth}
    \includegraphics[width=\textwidth]{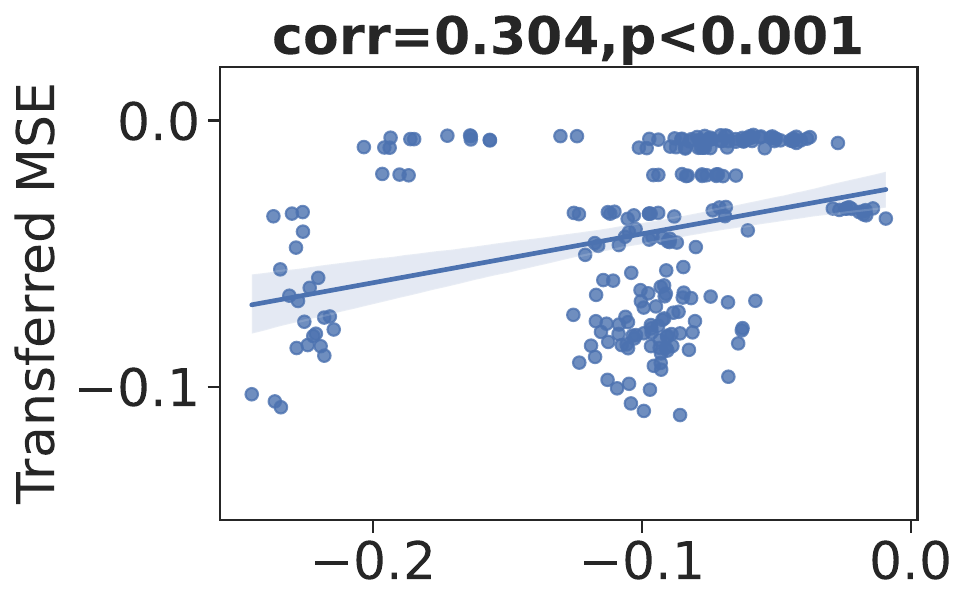}
    \caption{TransRate}
    \end{subfigure}
    {\hskip 4pt}
    \begin{subfigure}[b]{0.23\textwidth}
    \includegraphics[width=\textwidth]{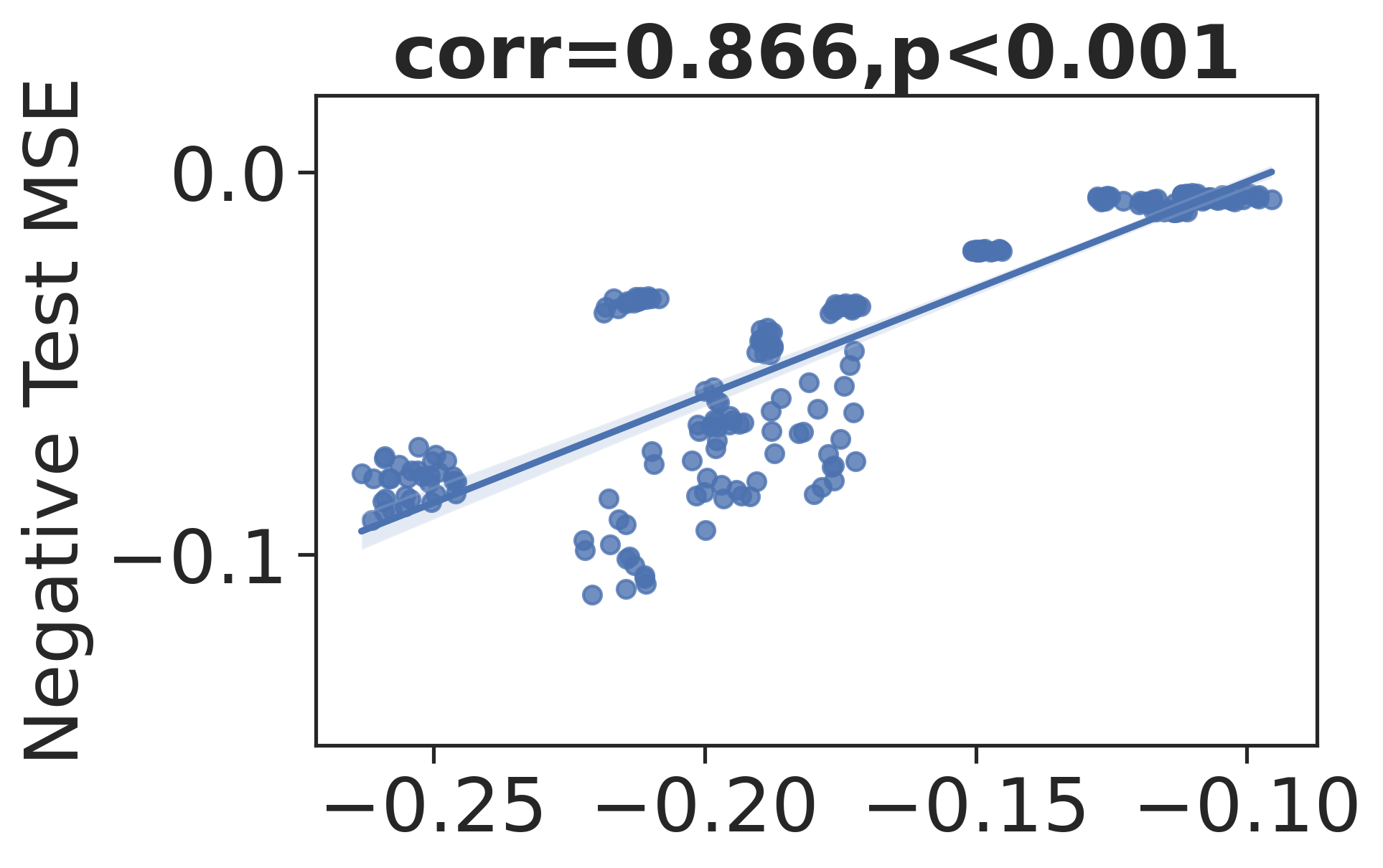}
    \caption{LinMSE0}
    \end{subfigure}
    {\hskip 4pt}
    \begin{subfigure}[b]{0.23\textwidth}
    \includegraphics[width=\textwidth]{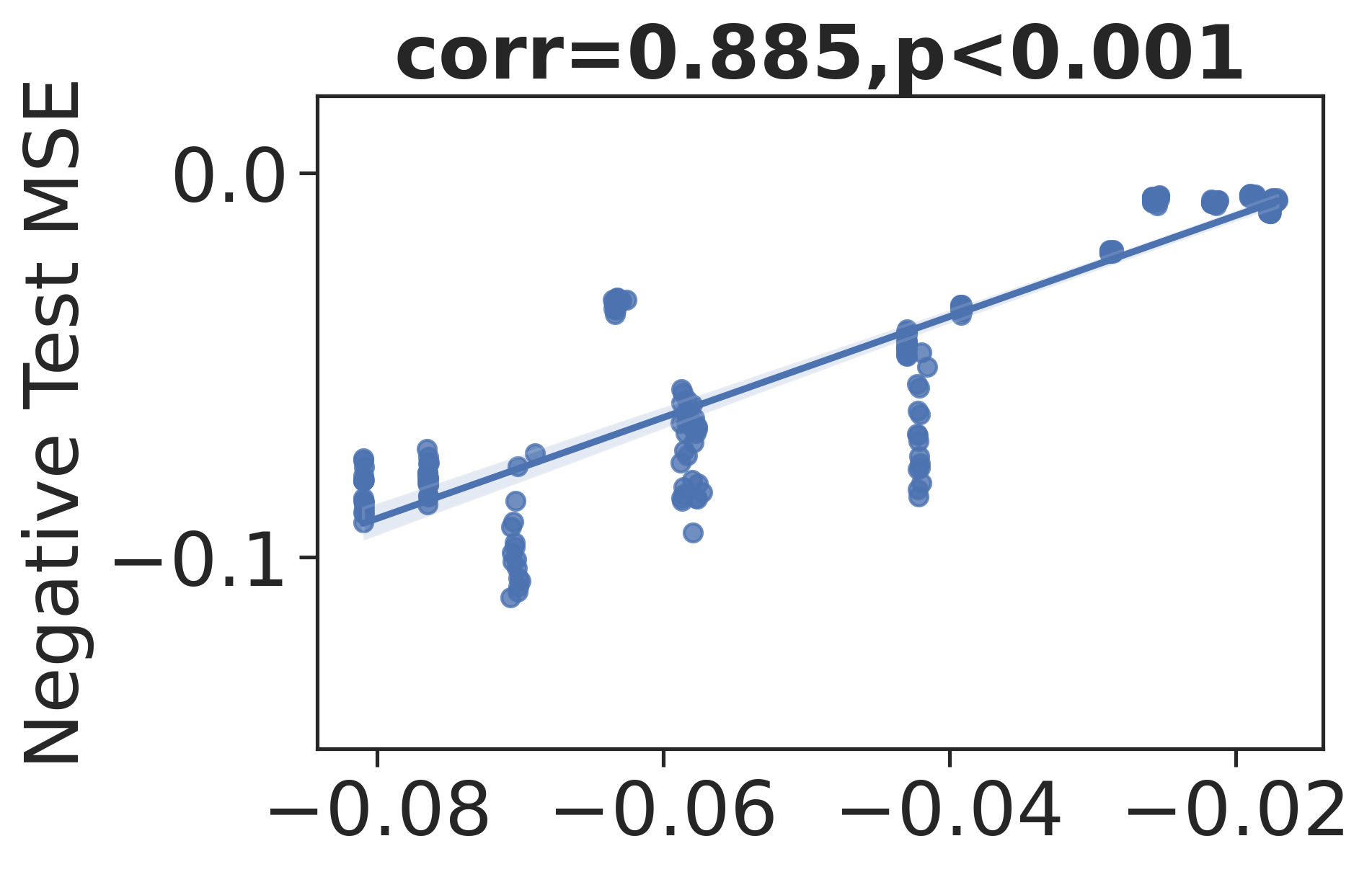}
    \caption{LinMSE1}
    \end{subfigure}
    
    {\vskip 0.2cm}
    
    \begin{subfigure}[b]{0.23\textwidth}
\includegraphics[width=\textwidth]{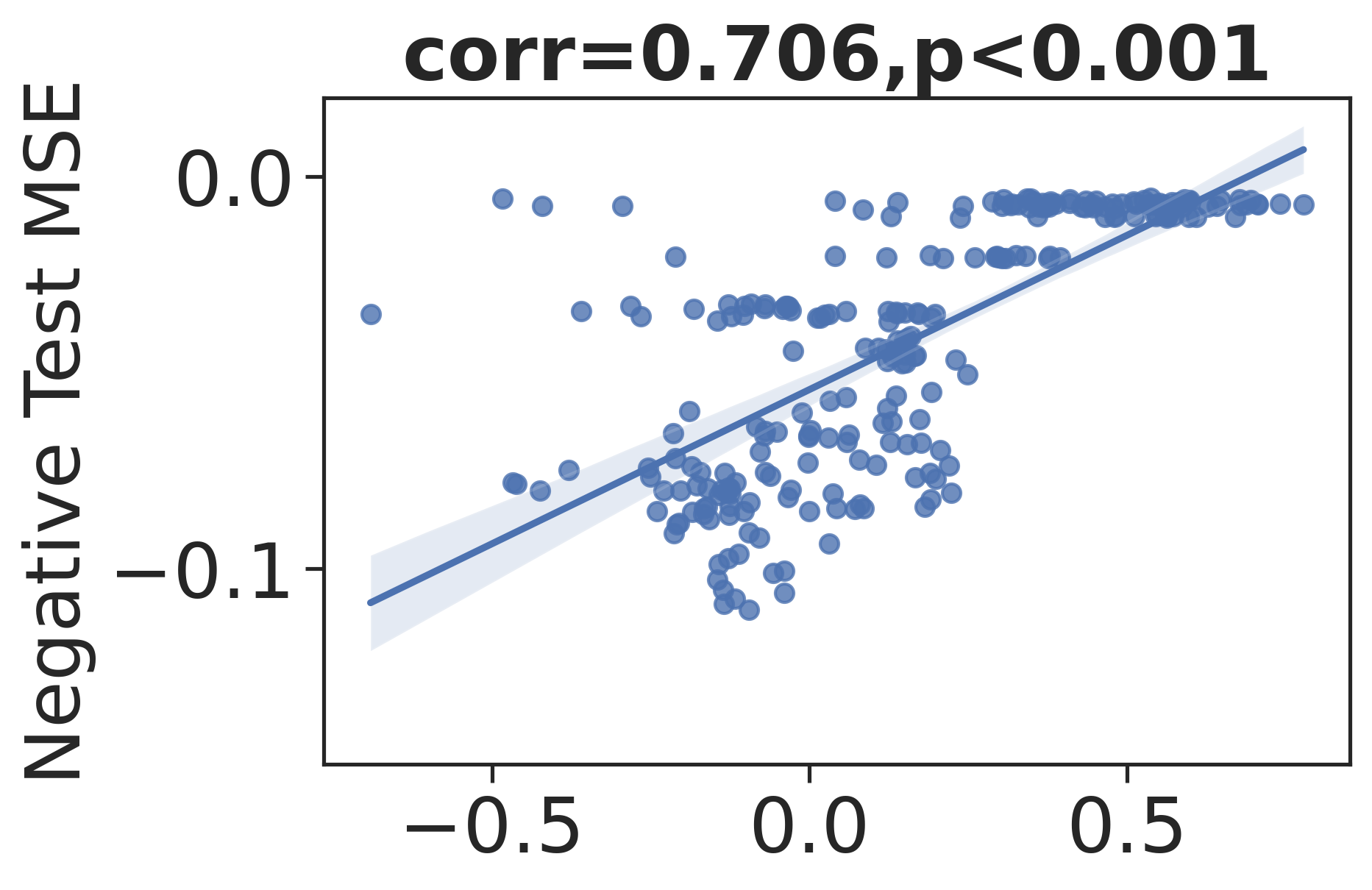}
    \caption{LabLogME}
    \end{subfigure}
    {\hskip 4pt}
    \begin{subfigure}[b]{0.23\textwidth}
    \includegraphics[width=\textwidth]{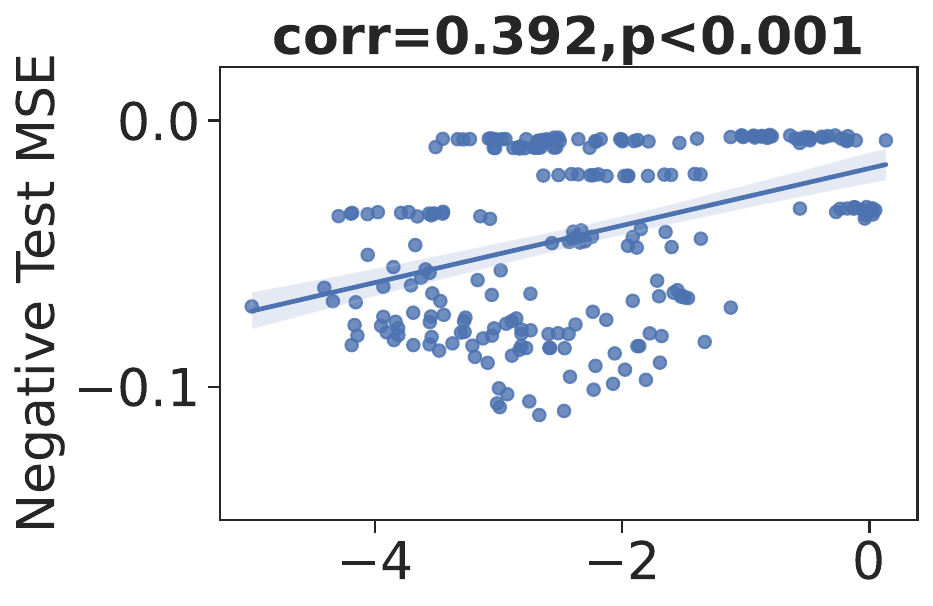}
    \caption{LabTransRate}
    \end{subfigure}
    {\hskip 4pt}
    \begin{subfigure}[b]{0.23\textwidth}
    \includegraphics[width=\textwidth]{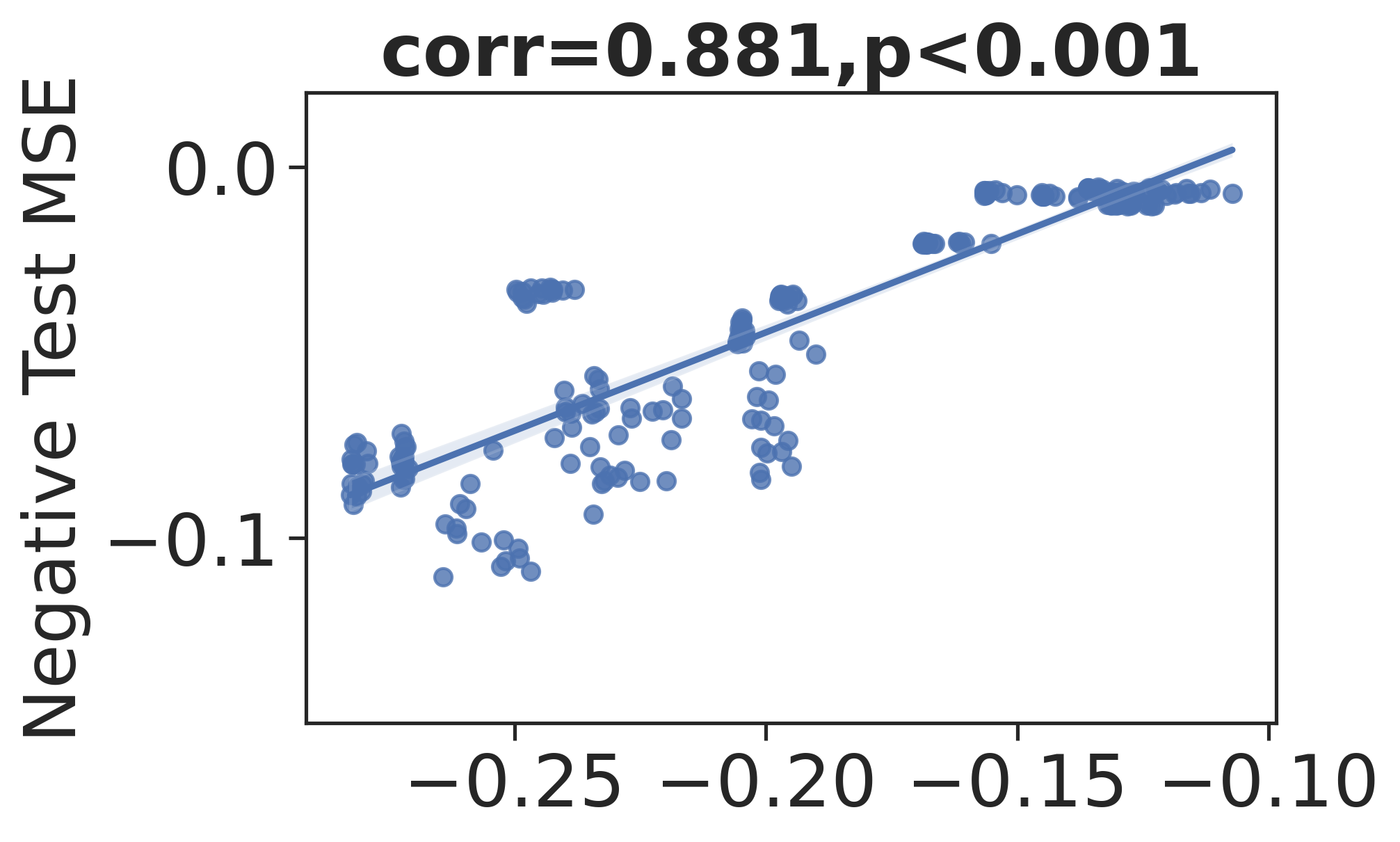}
    \caption{LabMSE0}
    \end{subfigure}
    {\hskip 4pt}
    \begin{subfigure}[b]{0.23\textwidth}
    \includegraphics[width=\textwidth]{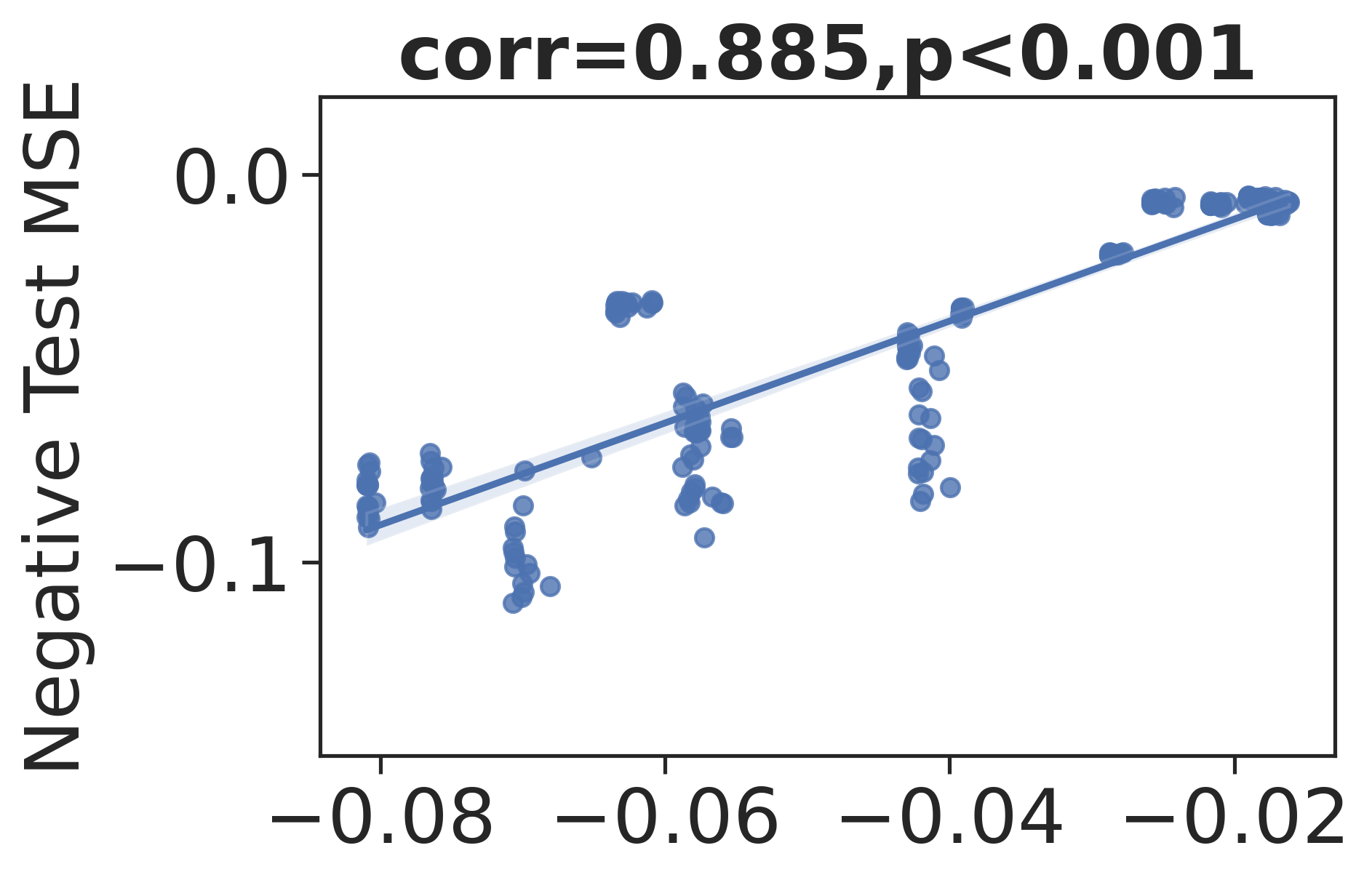}
    \caption{LabMSE1}
    \end{subfigure}
    {\vskip -0.2cm}
    \caption{\textbf{Correlation coefficients and $p$-values between transferability estimators and negative test MSEs} when transferring with half fine-tuning from OpenMonkey to CUB-200-2011.}
    \label{fig:different_input_half_ft}
    
    {\vskip 0.4cm}
    
    \begin{subfigure}[b]{0.23\textwidth}
    \includegraphics[width=\textwidth]{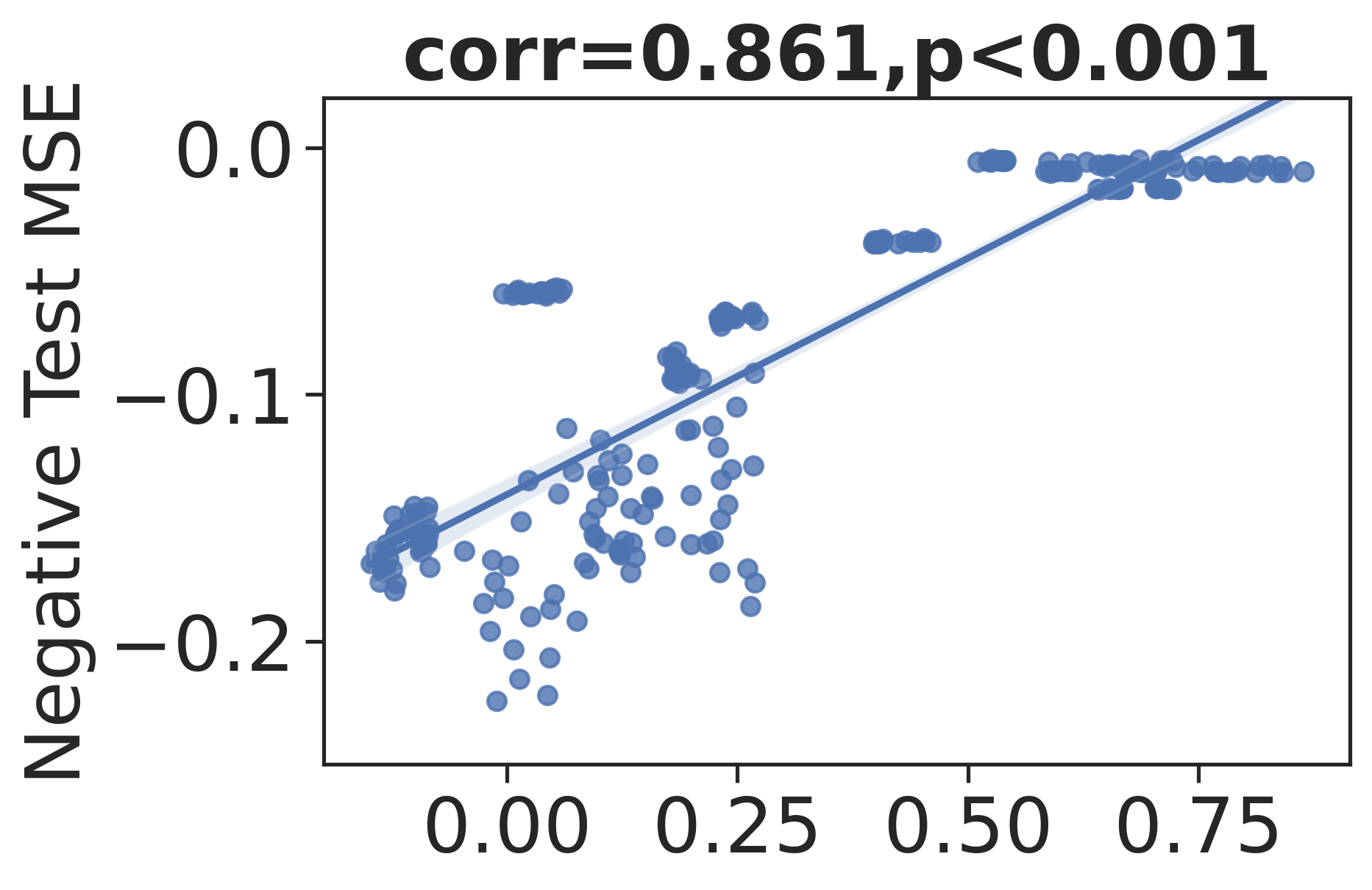}
    \caption{LogME}
    \end{subfigure}
    {\hskip 4pt}
    \begin{subfigure}[b]{0.23\textwidth}
    \includegraphics[width=\textwidth]{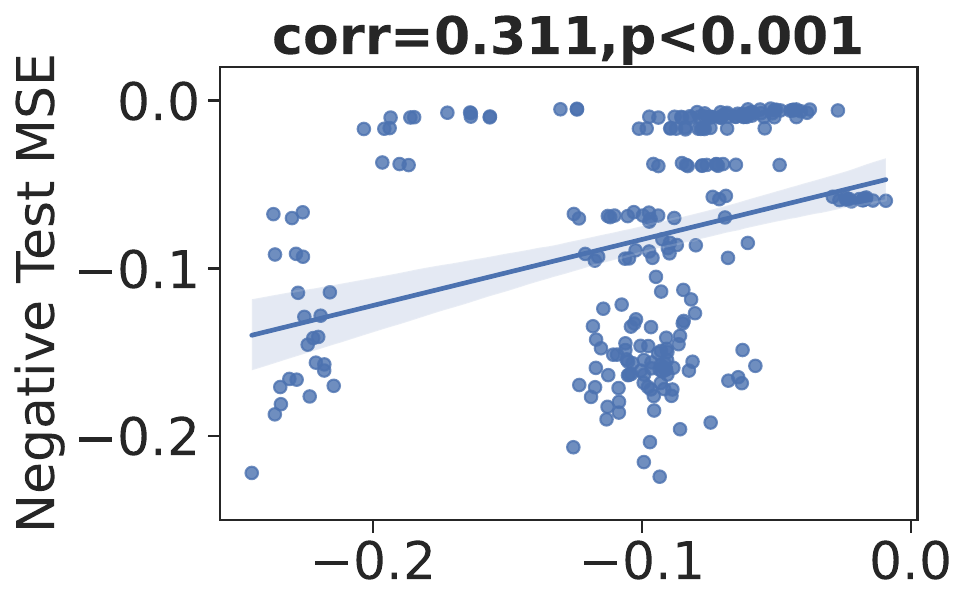}
    \caption{TransRate}
    \end{subfigure}
    {\hskip 4pt}
    \begin{subfigure}[b]{0.23\textwidth}
    \includegraphics[width=\textwidth]{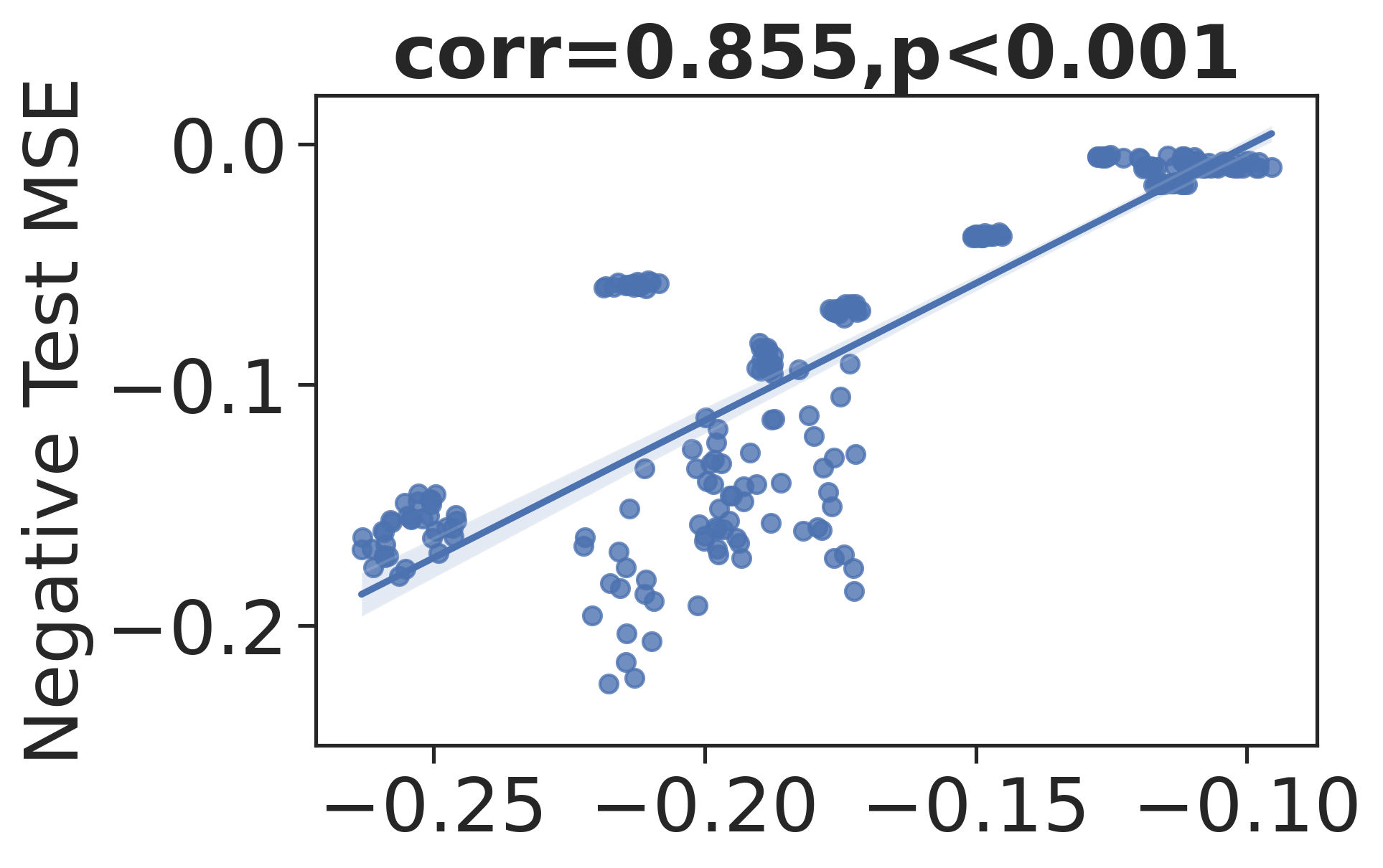}
    \caption{LinMSE0}
    \end{subfigure}
    {\hskip 4pt}
    \begin{subfigure}[b]{0.23\textwidth}
    \includegraphics[width=\textwidth]{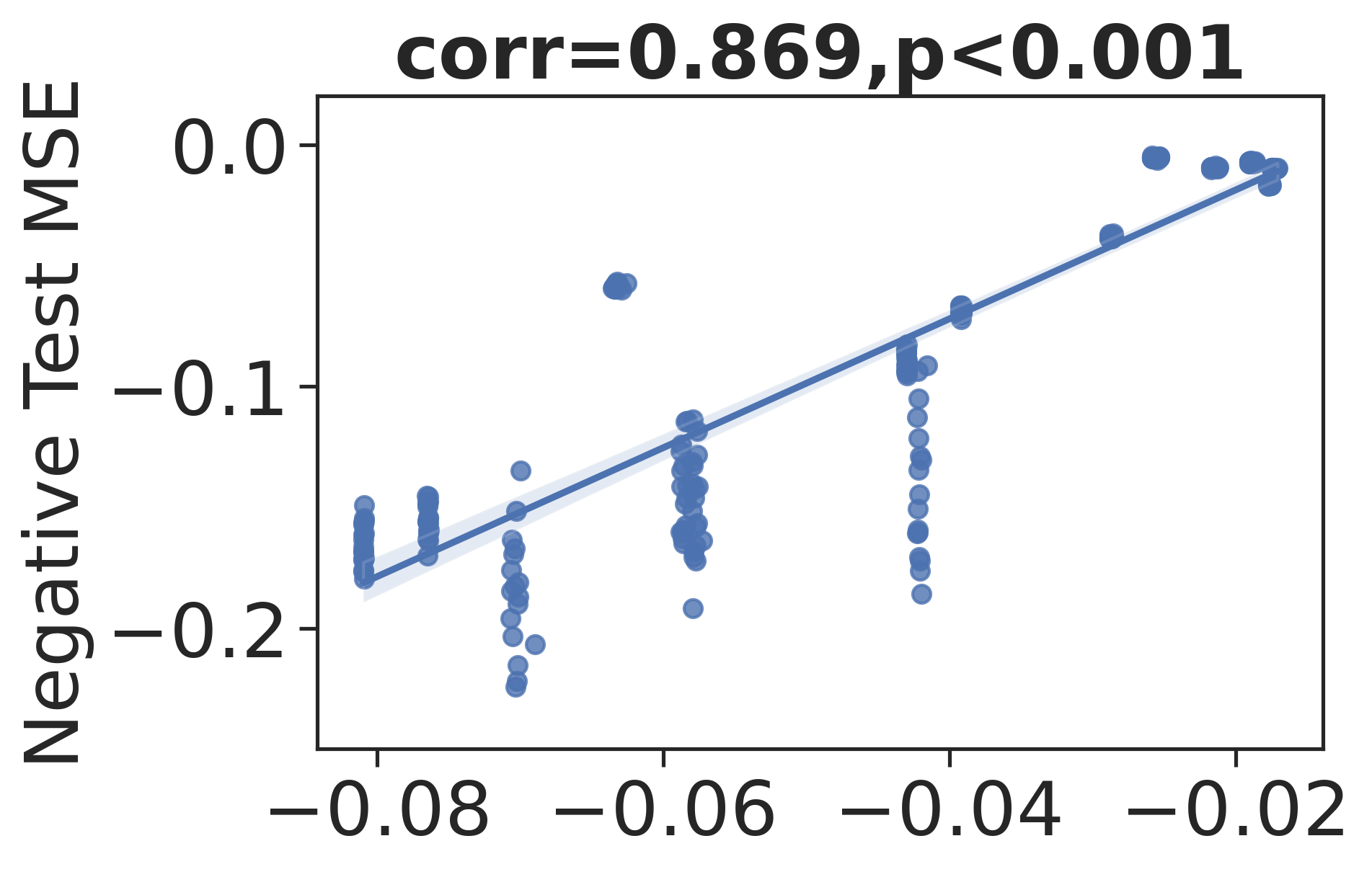}
    \caption{LinMSE1}
    \end{subfigure}
    
    {\vskip 0.2cm}
    
    \begin{subfigure}[b]{0.23\textwidth}
    \includegraphics[width=\textwidth]{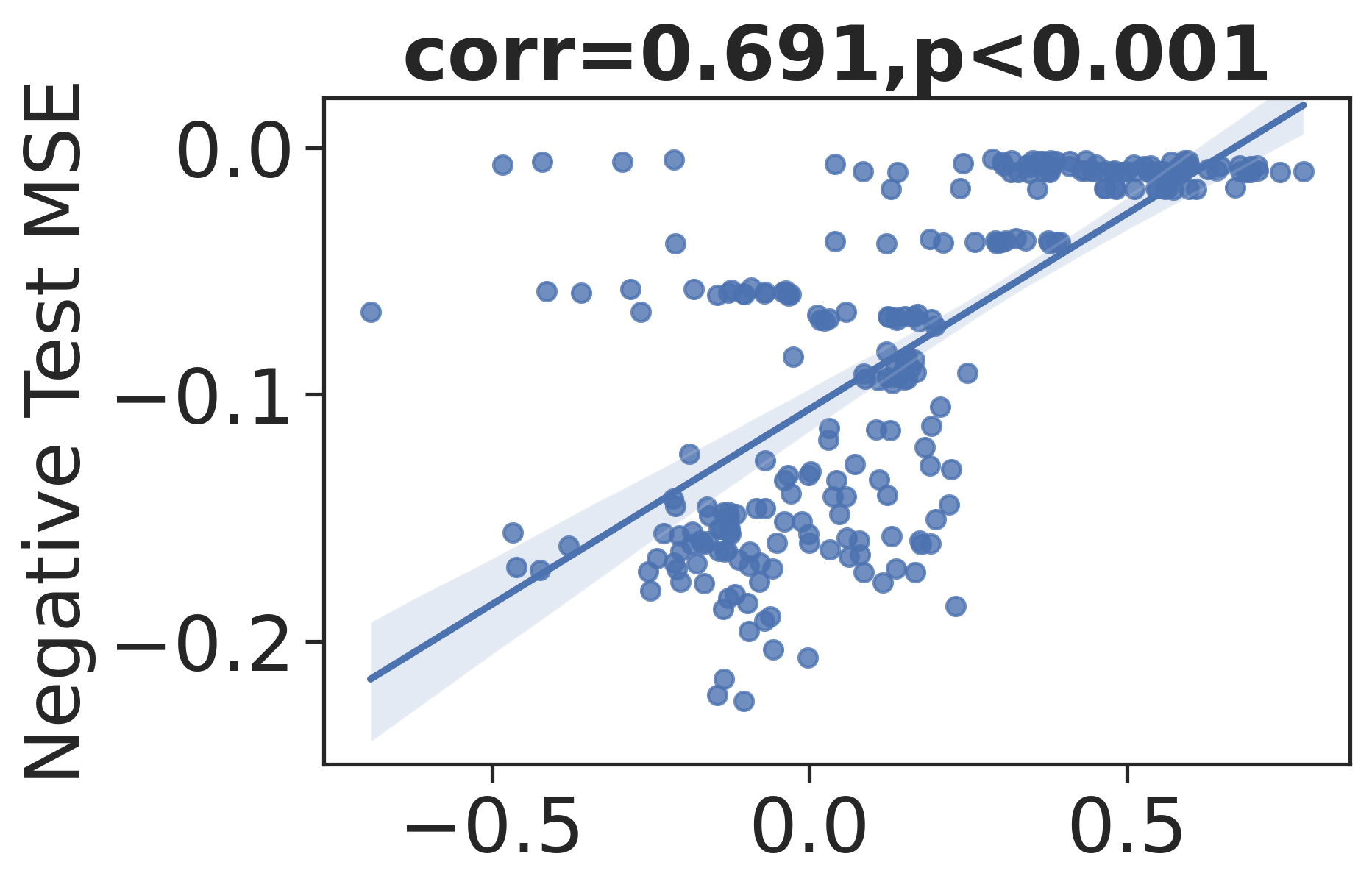}
    \caption{LabLogME}
    \end{subfigure}
    {\hskip 4pt}
    \begin{subfigure}[b]{0.23\textwidth}
    \includegraphics[width=\textwidth]{figures/supp/c3_new/transrate.pdf}
    \caption{LabTransRate}
    \end{subfigure}
    {\hskip 4pt}
    \begin{subfigure}[b]{0.23\textwidth}
\includegraphics[width=\textwidth]{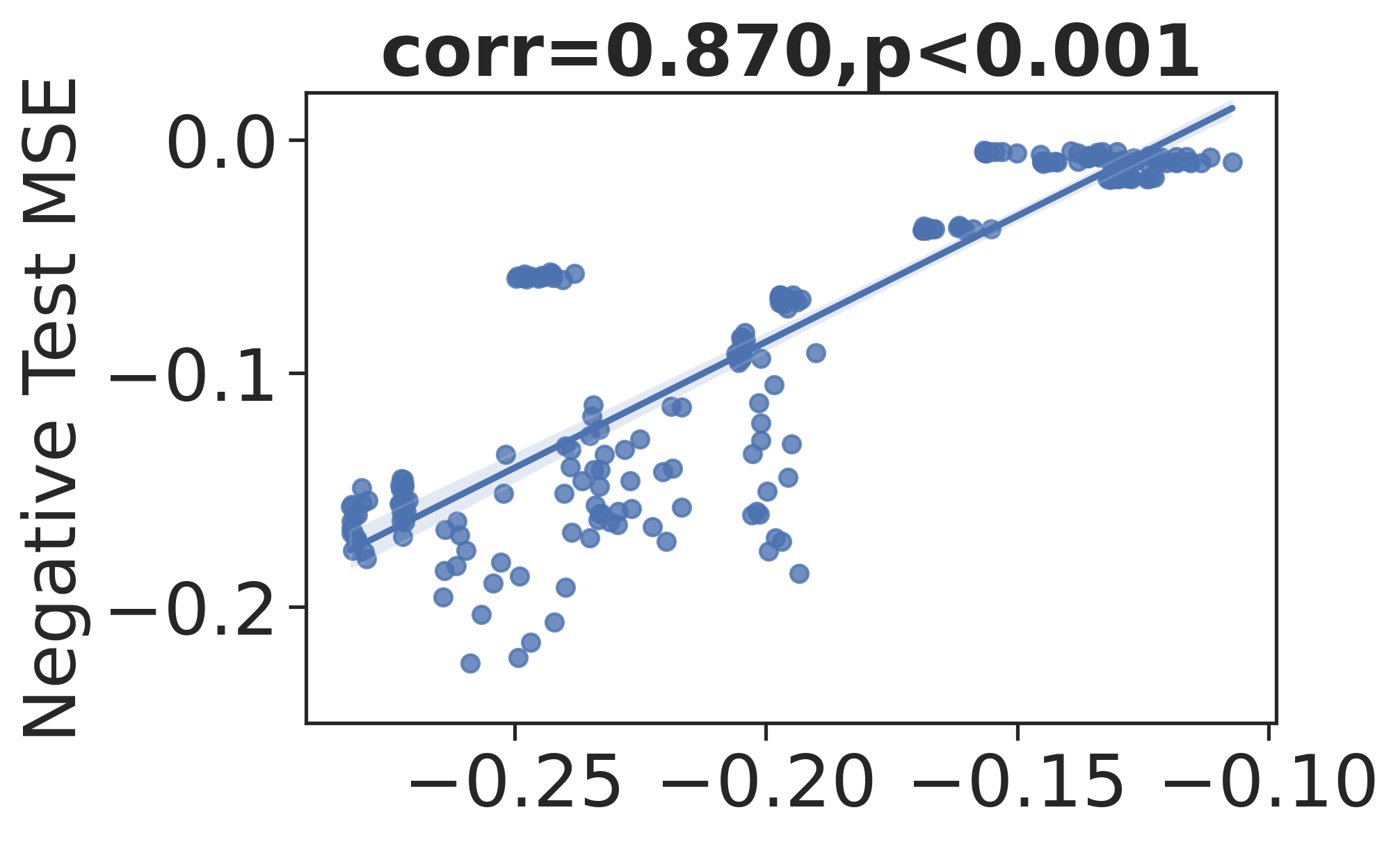}
    \caption{LabMSE0}
    \end{subfigure}
    {\hskip 4pt}
    \begin{subfigure}[b]{0.23\textwidth}
    \includegraphics[width=\textwidth]{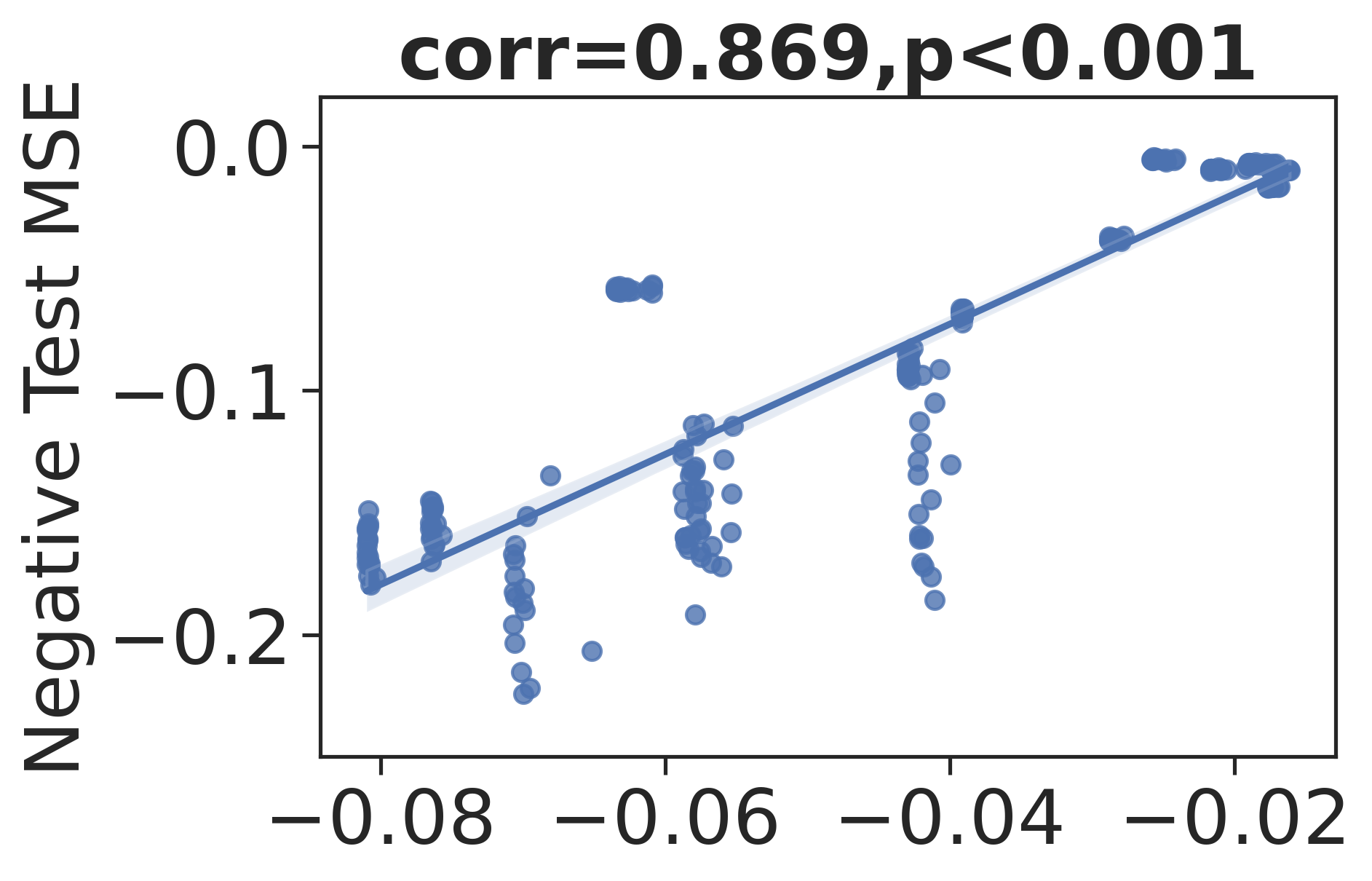}
    \caption{LabMSE1}
    \end{subfigure}
    {\vskip -0.2cm}
    \caption{\textbf{Correlation coefficients and $p$-values between transferability estimators and negative test MSEs} when transferring with full fine-tuning from OpenMonkey to CUB-200-2011.}
    \label{fig:different_input_full_ft}
\end{figure*}

\begin{figure*}[h]
\captionsetup[subfigure]{justification=centering}
    \begin{subfigure}[b]{0.23\textwidth}
    \includegraphics[width=\textwidth]{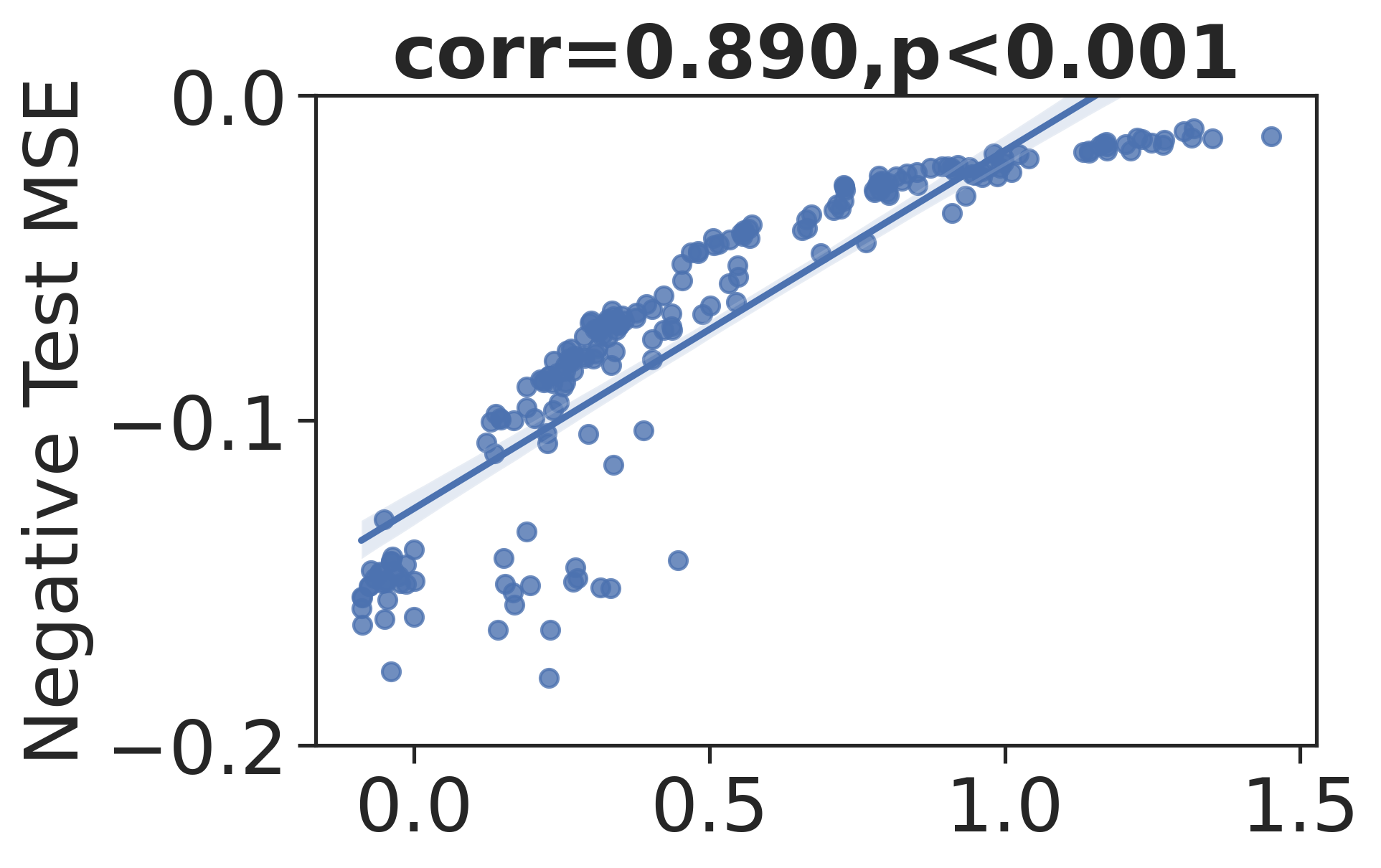}
    \caption{LogME}
    \end{subfigure}
    {\hskip 4pt}
    \begin{subfigure}[b]{0.23\textwidth}
    \includegraphics[width=\textwidth]{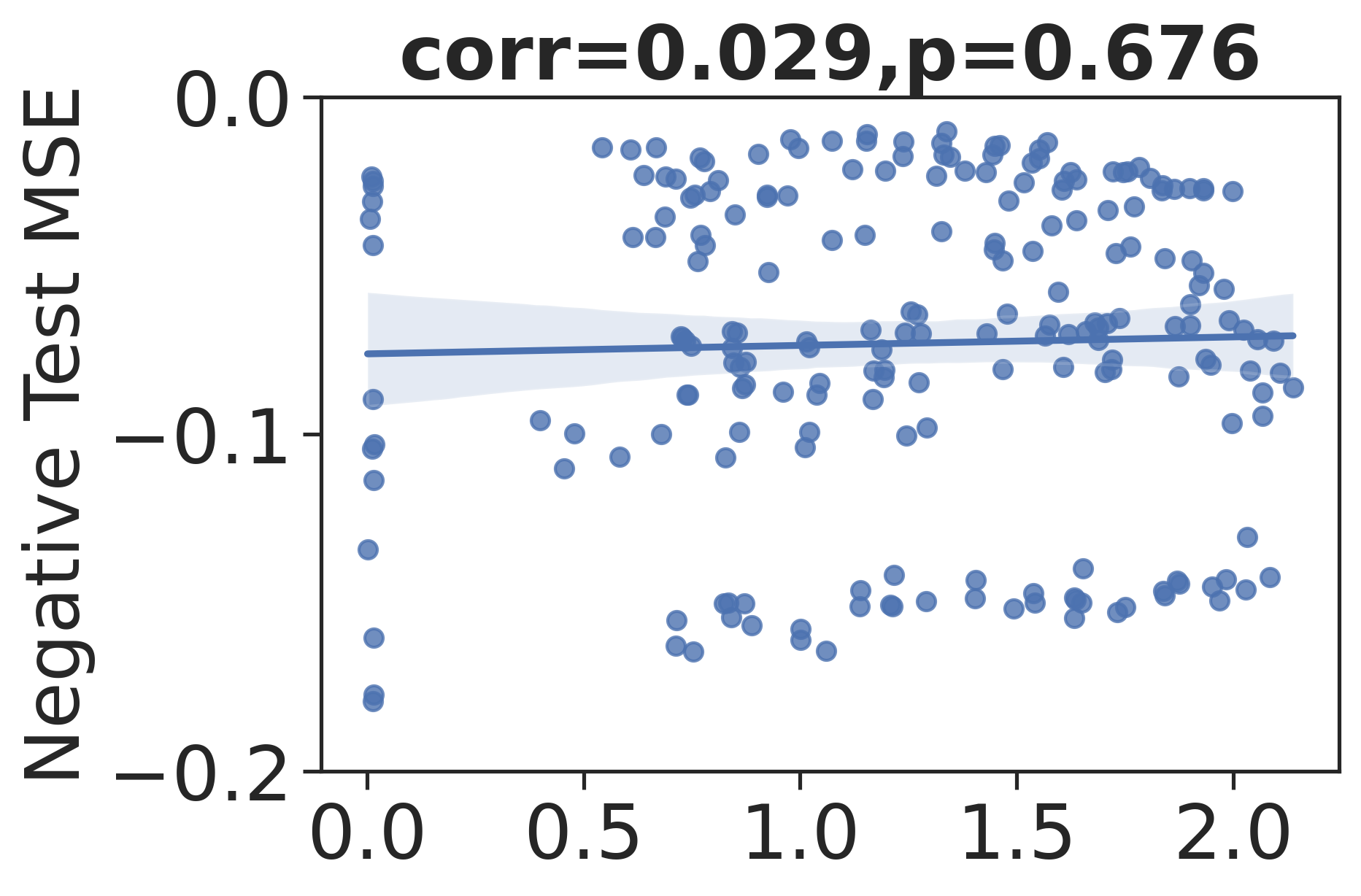}
    \caption{TransRate}
    \end{subfigure}
    {\hskip 4pt}
    \begin{subfigure}[b]{0.23\textwidth}
    \includegraphics[width=\textwidth]{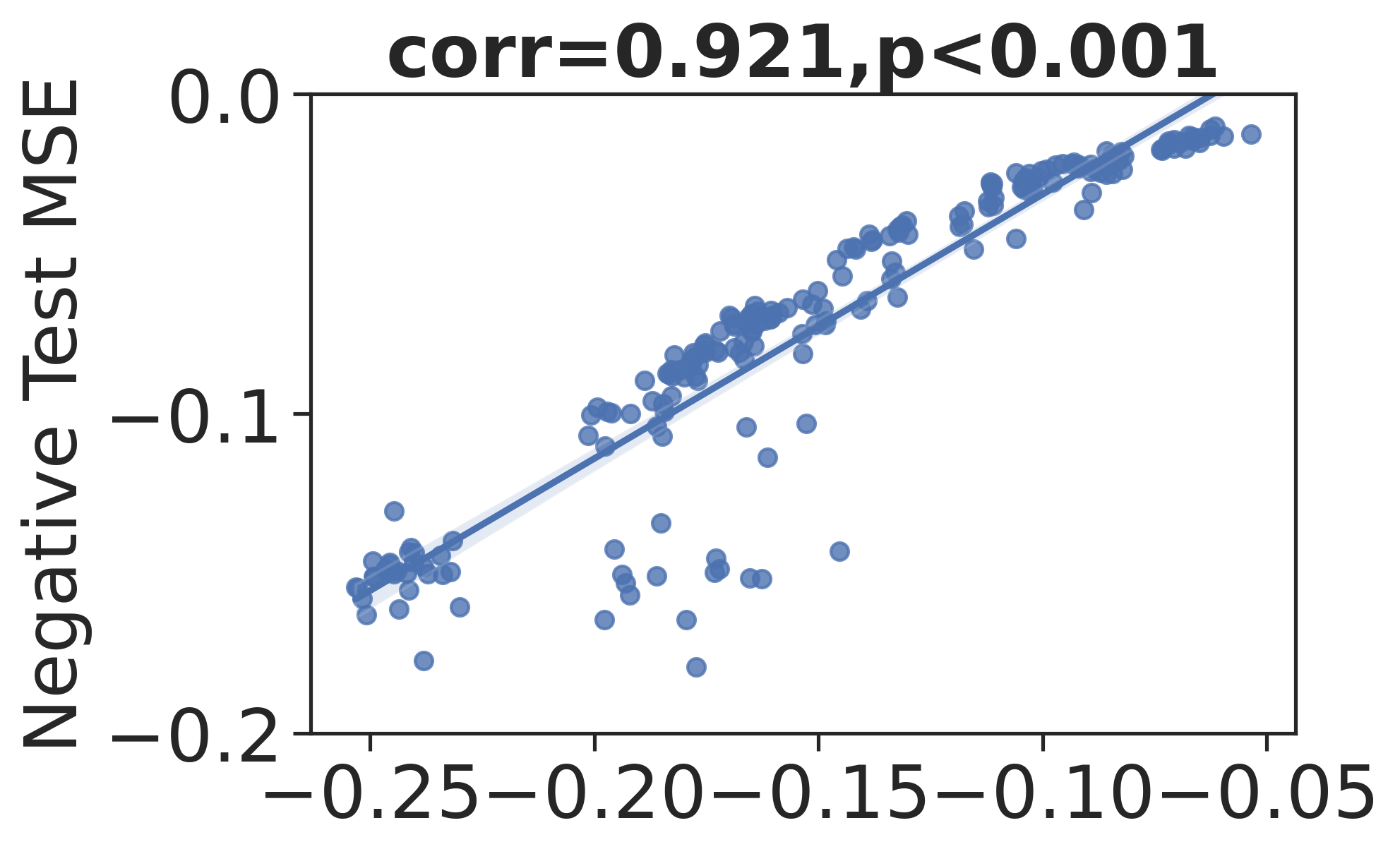}
    \caption{LinMSE0}
    \end{subfigure}
    {\hskip 4pt}
    \begin{subfigure}[b]{0.23\textwidth}
    \includegraphics[width=\textwidth]{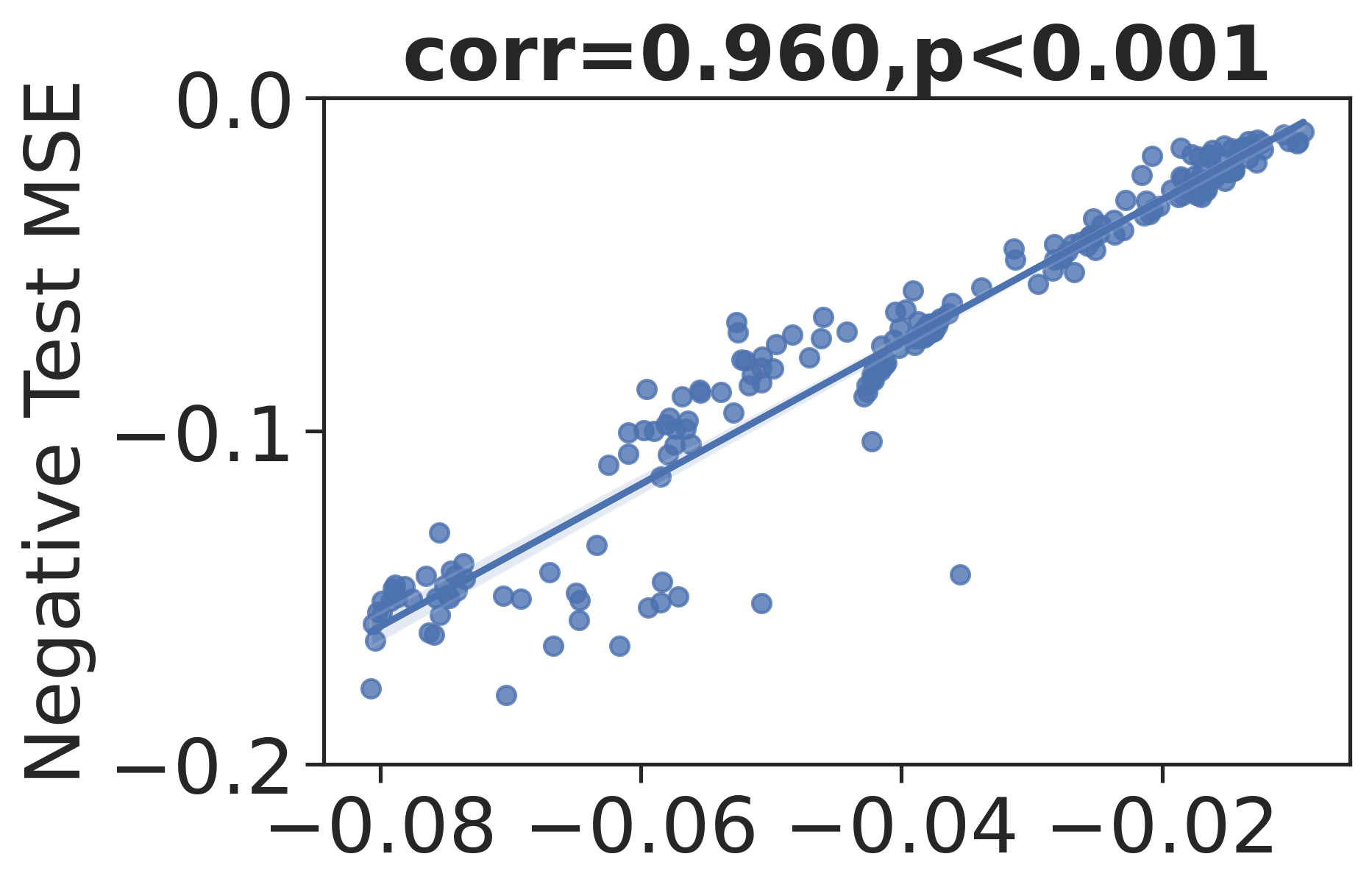}
    \caption{LinMSE1}
    \end{subfigure}
    
    {\vskip 0.2cm}
    
    \begin{subfigure}[b]{0.23\textwidth}
    \includegraphics[width=\textwidth]{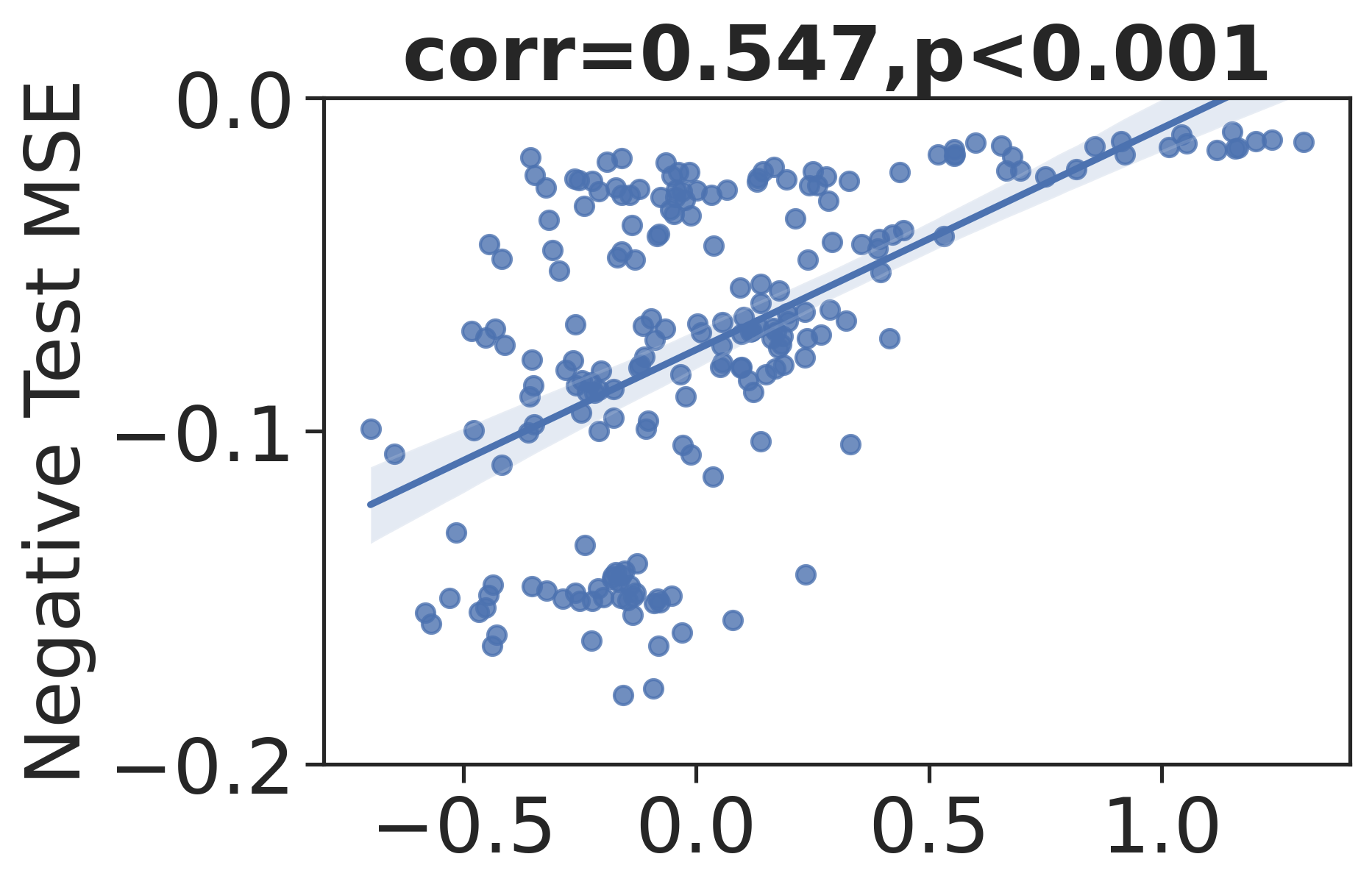}
    \caption{LabLogME}
    \end{subfigure}
    {\hskip 4pt}
    \begin{subfigure}[b]{0.23\textwidth}
    \includegraphics[width=\textwidth]{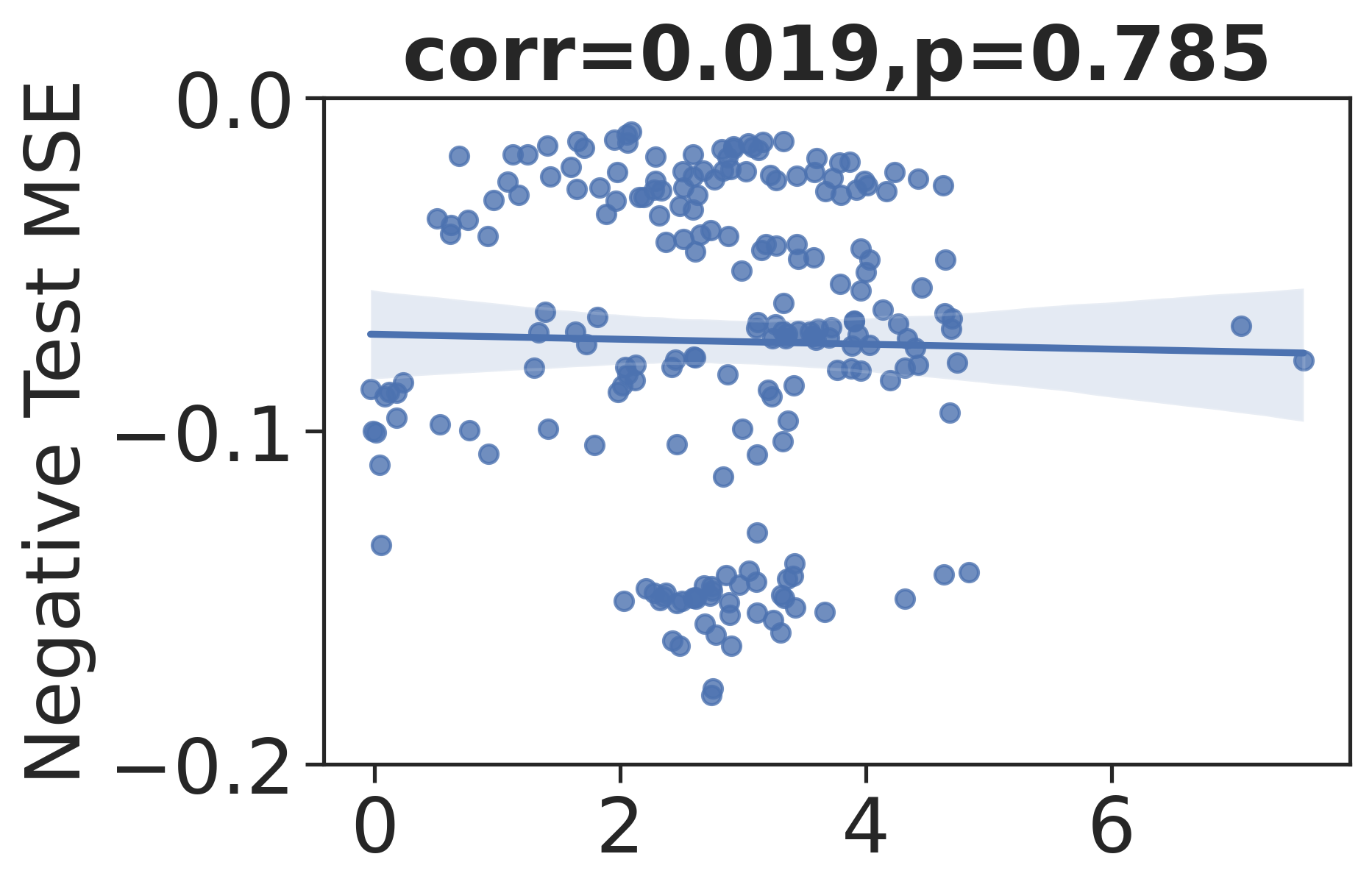}
    \caption{LabTransRate}
    \end{subfigure}
    {\hskip 4pt}
    \begin{subfigure}[b]{0.23\textwidth}
    \includegraphics[width=\textwidth]{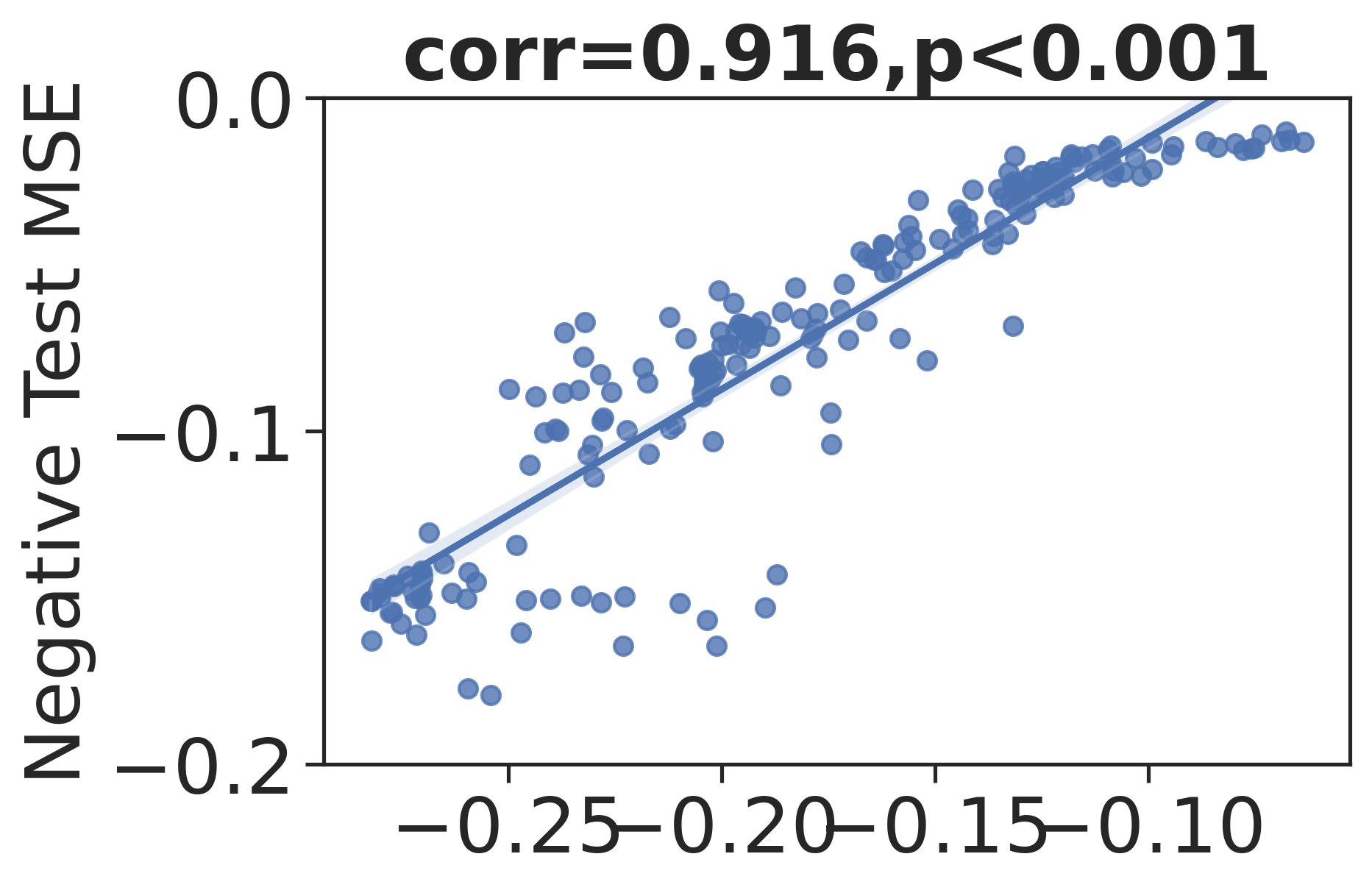}
    \caption{LabMSE0}
    \end{subfigure}
    {\hskip 4pt}
    \begin{subfigure}[b]{0.23\textwidth}
    \includegraphics[width=\textwidth]{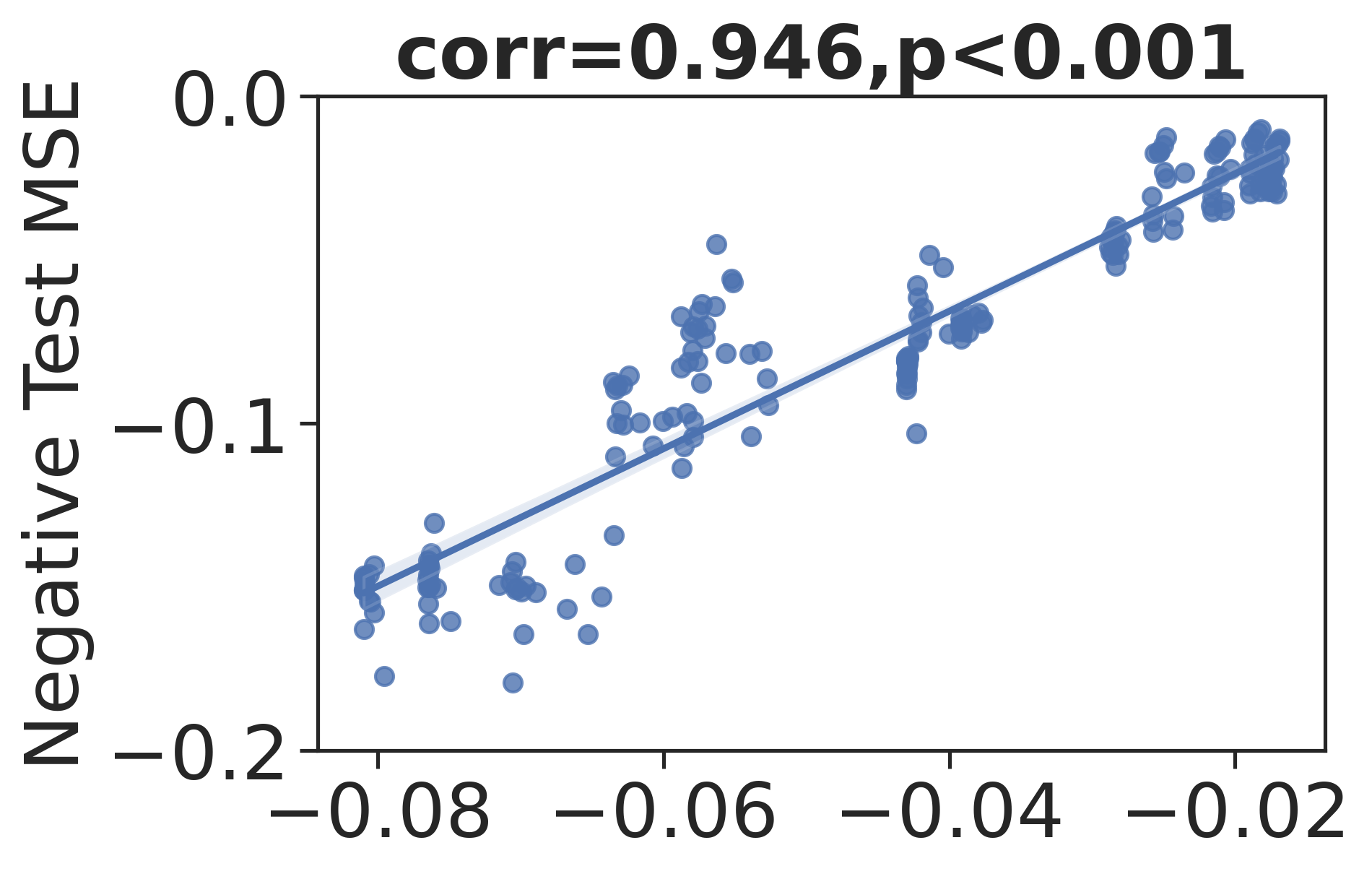}
    \caption{LabMSE1}
    \end{subfigure}
    {\vskip -0.2cm}
    \caption{\textbf{Correlation coefficients and $p$-values between transferability estimators and negative test MSEs} when transferring with head re-training between any two different keypoints (with shared inputs) on CUB-200-2011.}
    \label{fig:shared_input_cub_head_rt}
    
    {\vskip 0.4cm}
    
    \begin{subfigure}[b]{0.23\textwidth}
    \includegraphics[width=\textwidth]{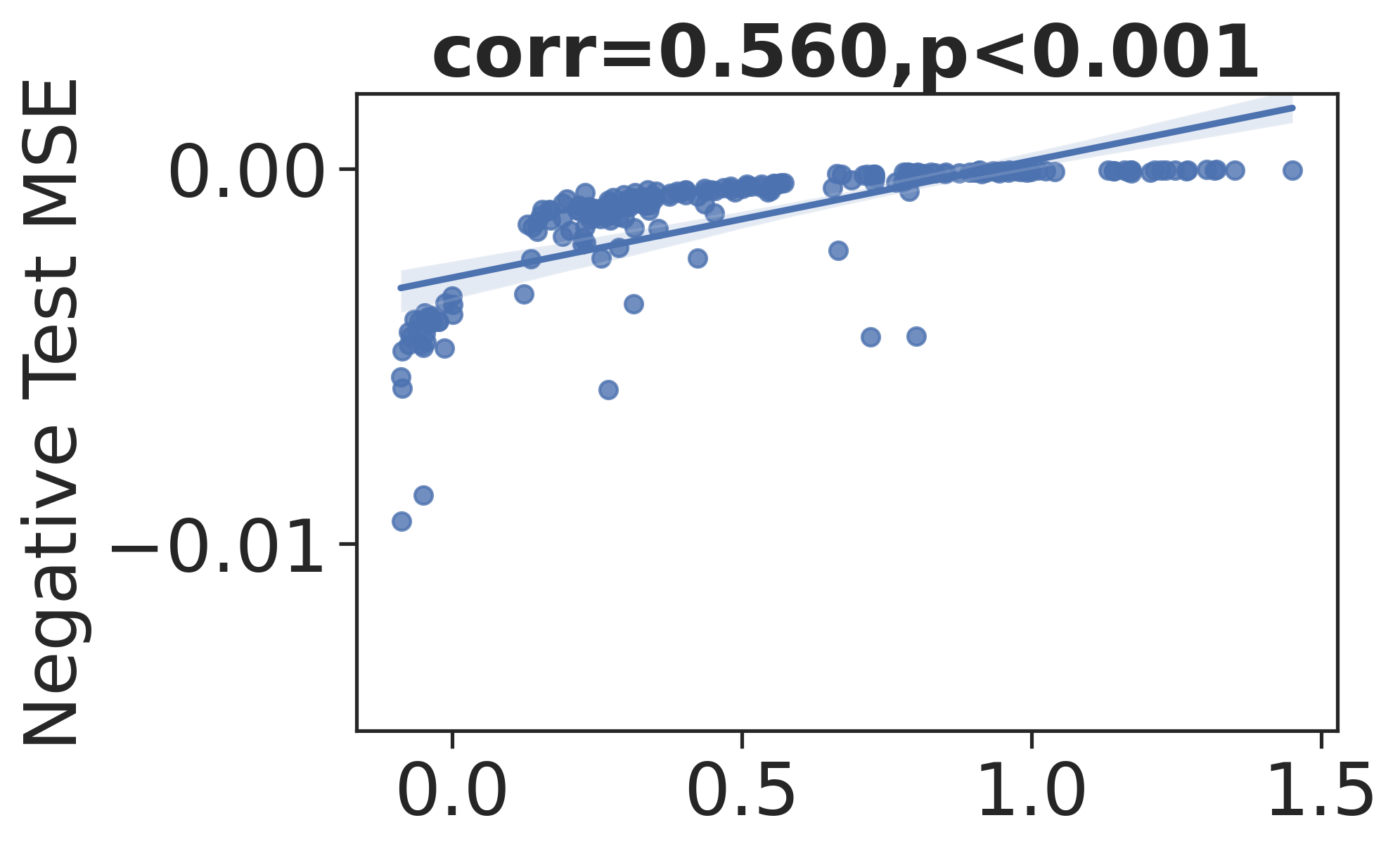}
    \caption{LogME}
    \end{subfigure}
    {\hskip 4pt}
    \begin{subfigure}[b]{0.23\textwidth}
    \includegraphics[width=\textwidth]{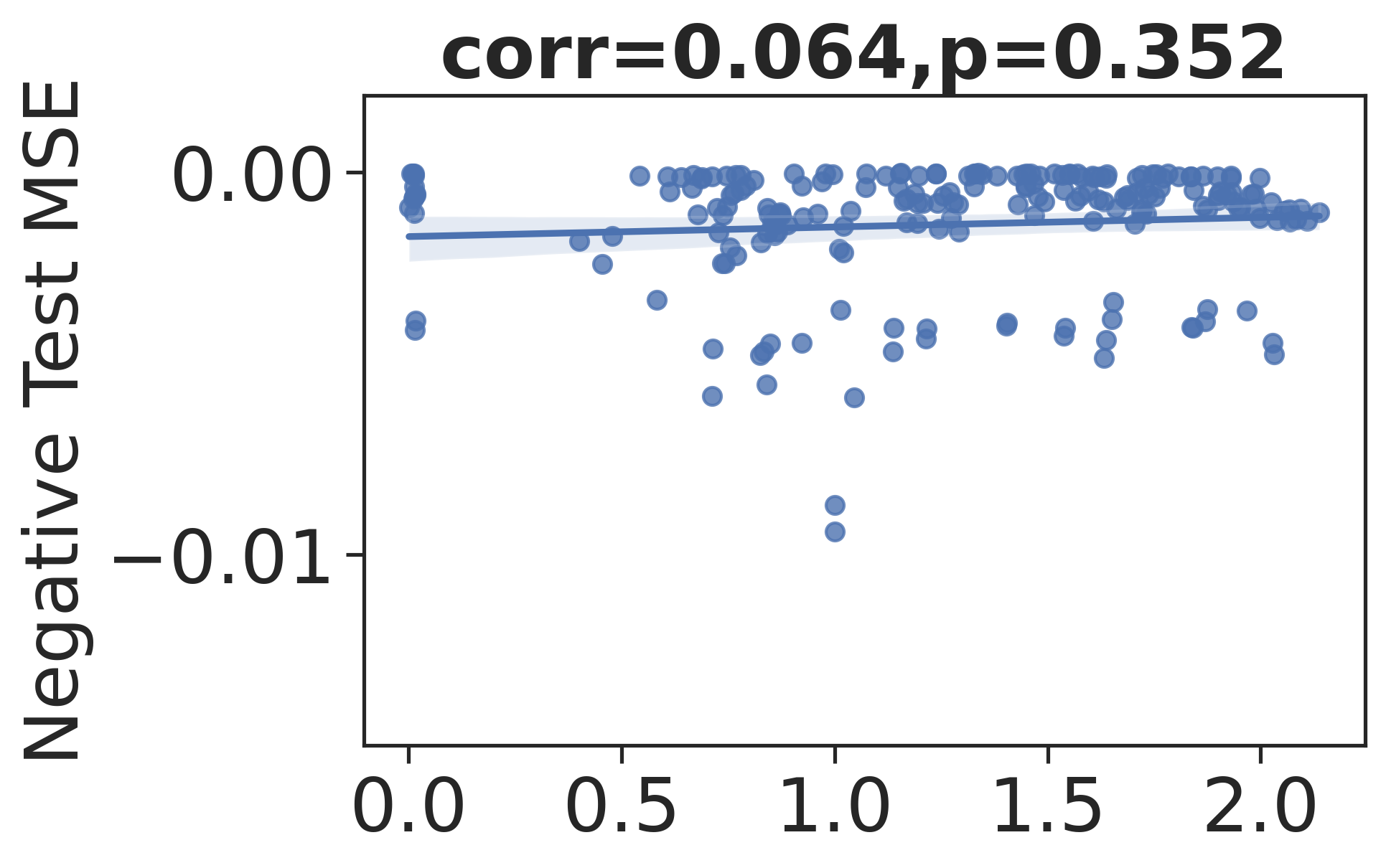}
    \caption{TransRate}
    \end{subfigure}
    {\hskip 4pt}
    \begin{subfigure}[b]{0.23\textwidth}
    \includegraphics[width=\textwidth]{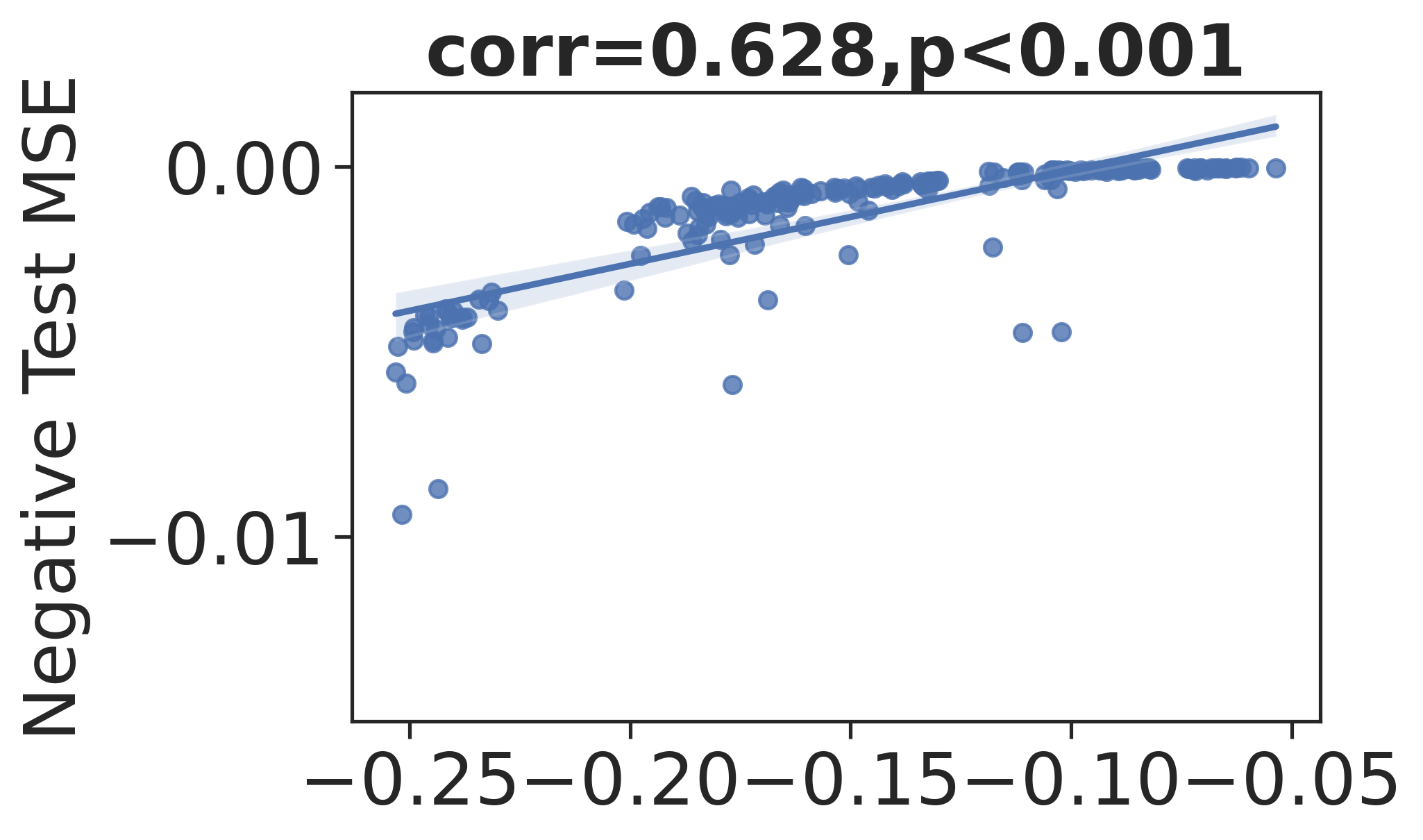}    
    \caption{LinMSE0}
    \end{subfigure}
    {\hskip 4pt}
    \begin{subfigure}[b]{0.23\textwidth}
    \includegraphics[width=\textwidth]{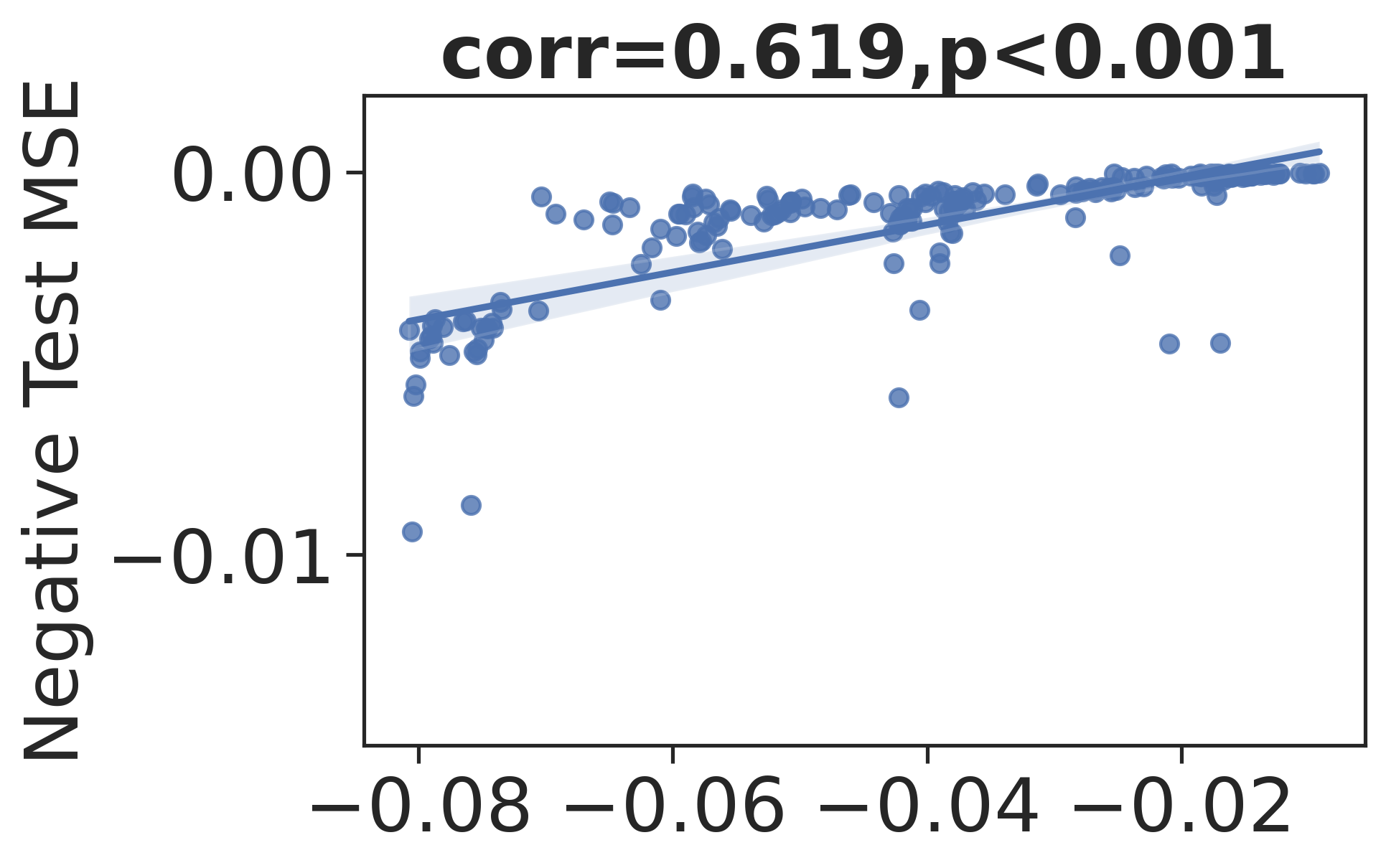}
    \caption{LinMSE1}
    \end{subfigure}
    
    {\vskip 0.2cm}
    
    \begin{subfigure}[b]{0.23\textwidth}
    \includegraphics[width=\textwidth]{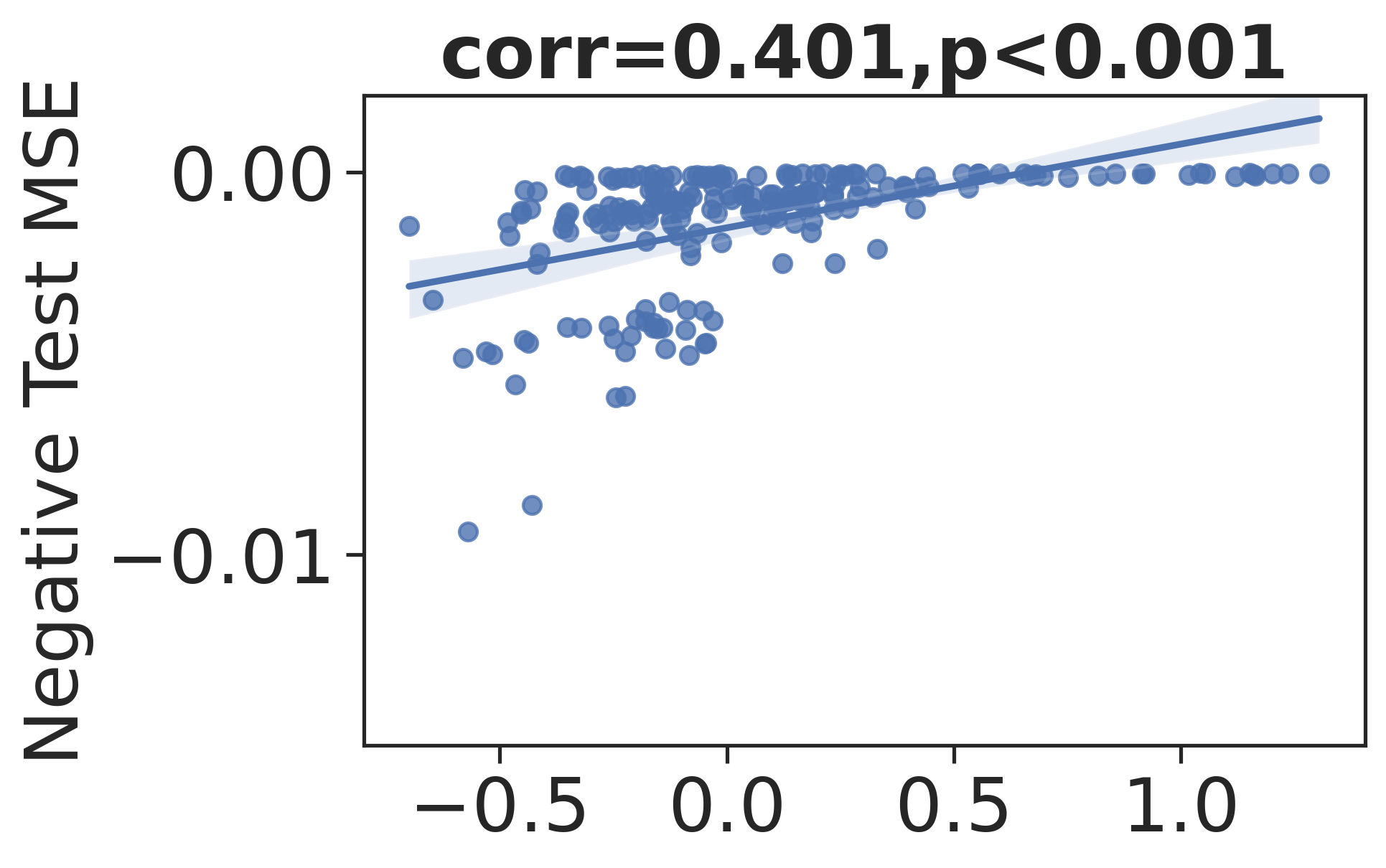}
    \caption{LabLogME}
    \end{subfigure}
    {\hskip 4pt}
    \begin{subfigure}[b]{0.23\textwidth}
    \includegraphics[width=\textwidth]{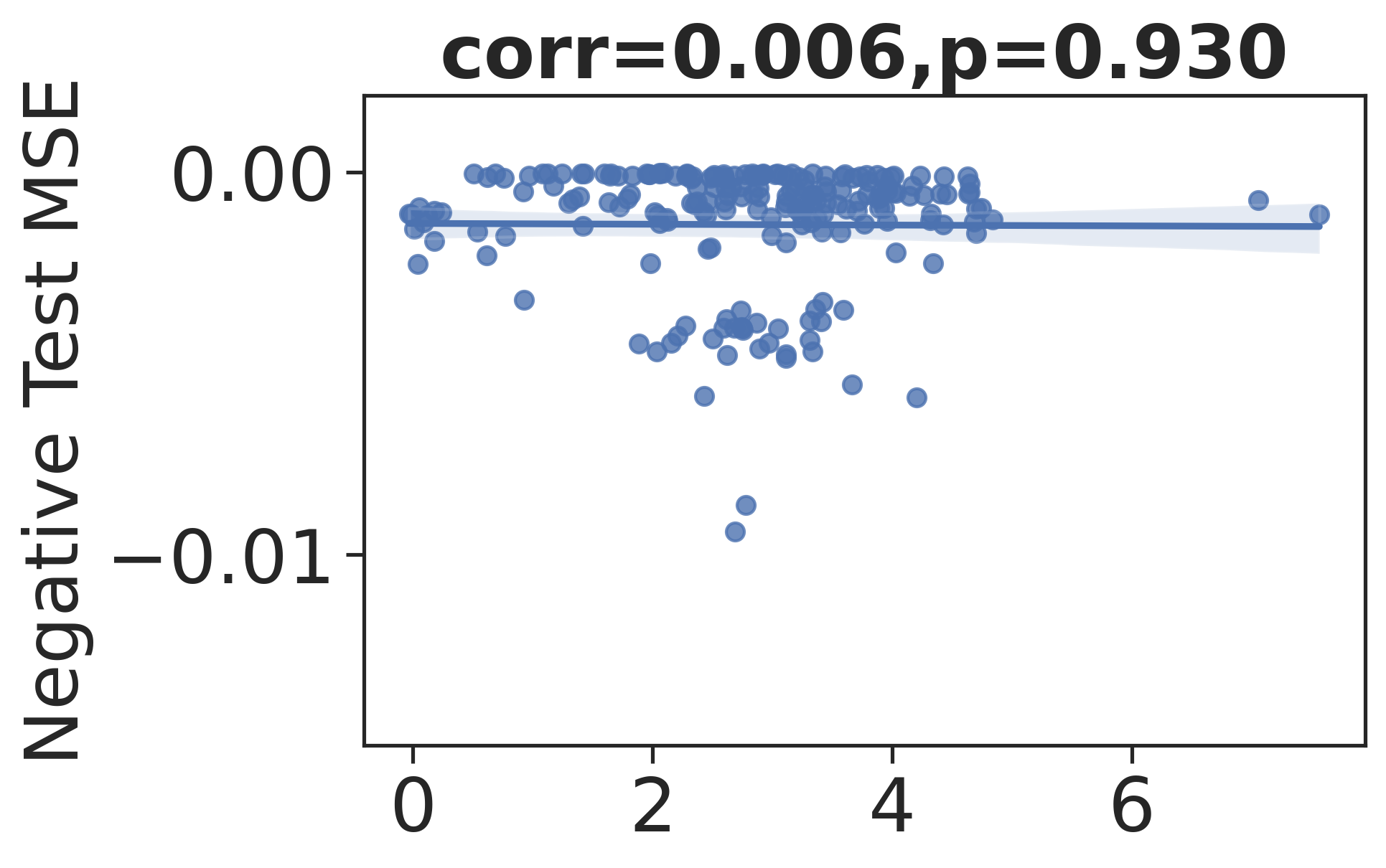}
    \caption{LabTransRate}
    \end{subfigure}
    {\hskip 4pt}
    \begin{subfigure}[b]{0.23\textwidth}
\includegraphics[width=\textwidth]{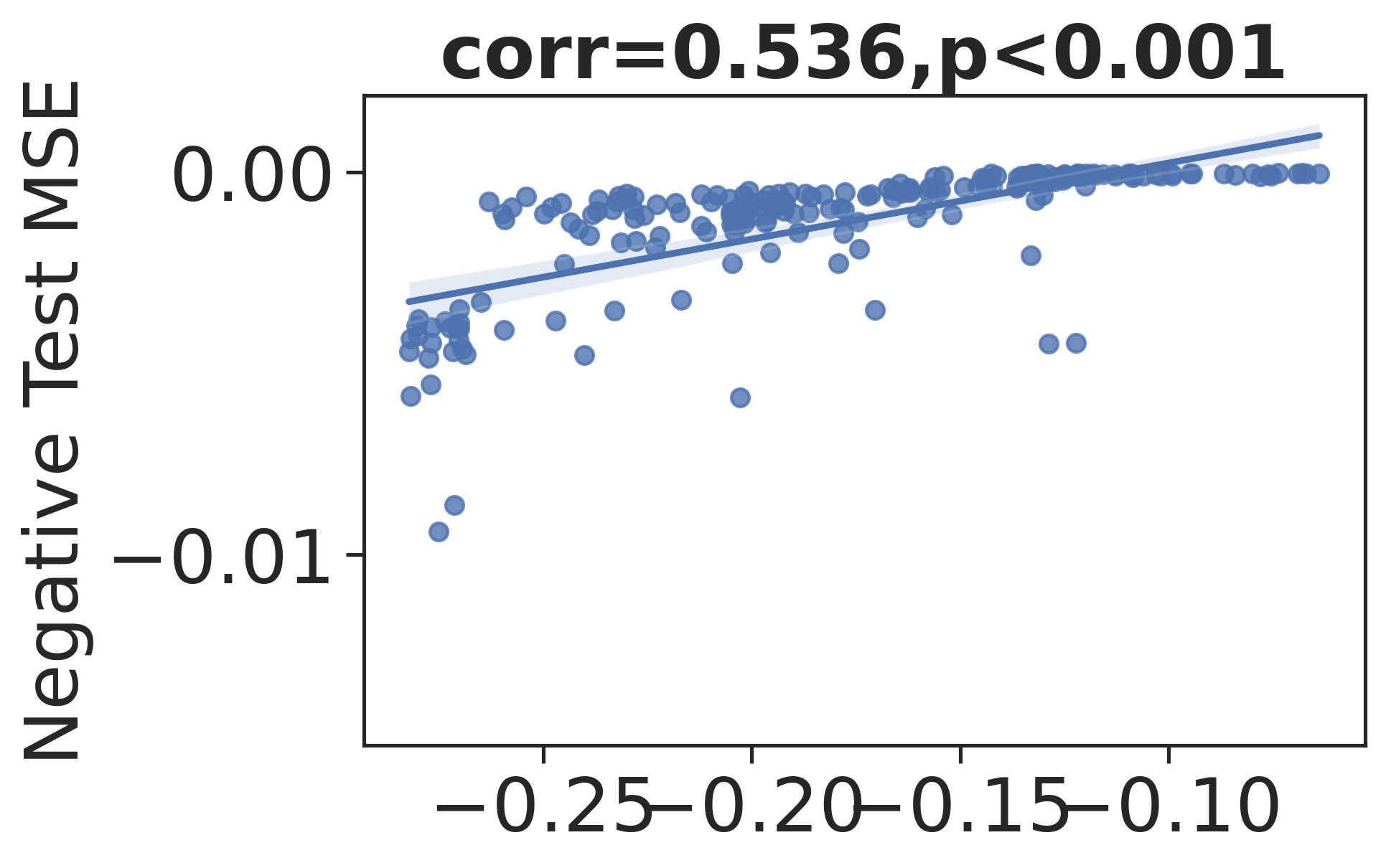}
    \caption{LabMSE0}
    \end{subfigure}
    {\hskip 4pt}
    \begin{subfigure}[b]{0.23\textwidth}
    \includegraphics[width=\textwidth]{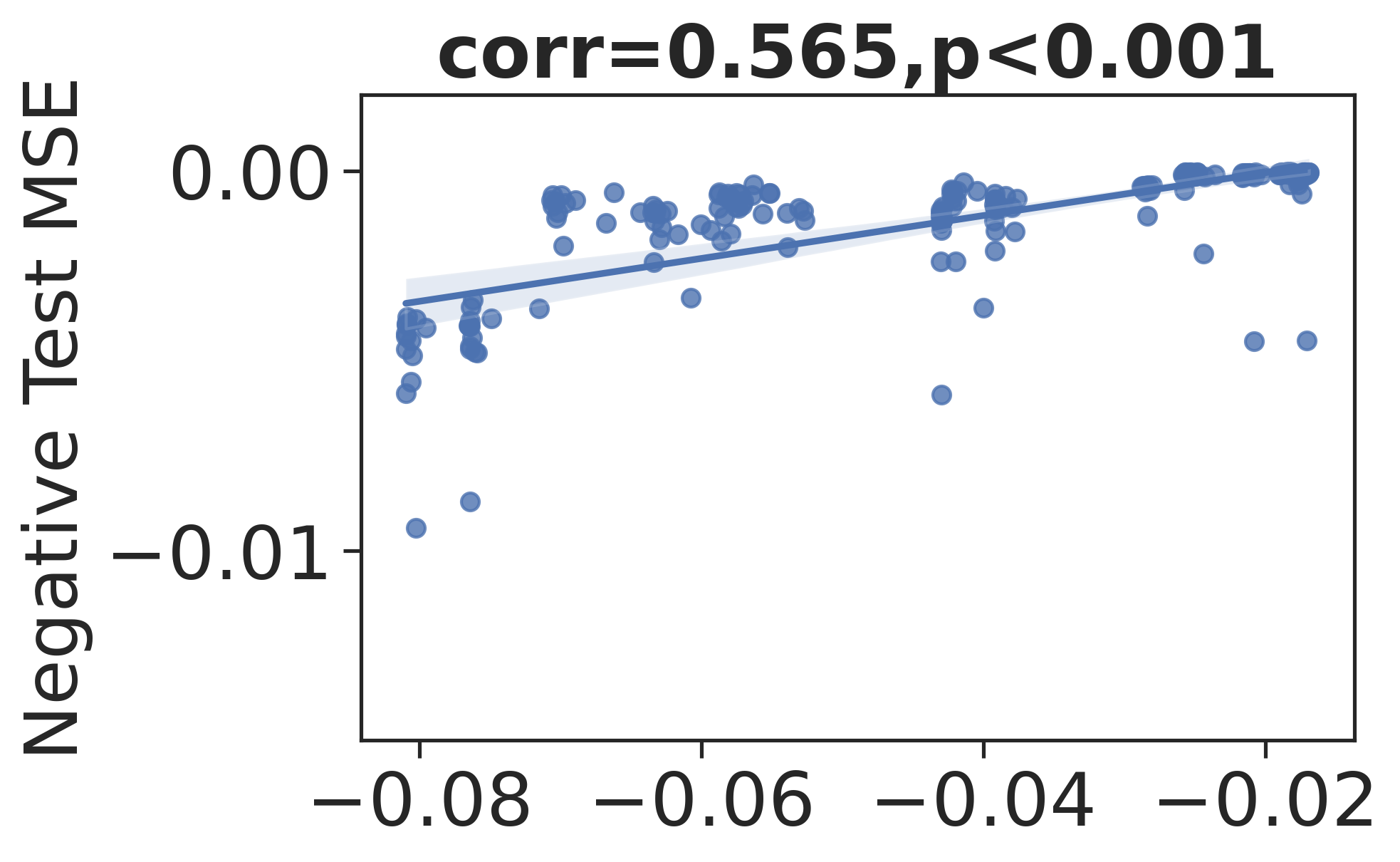}
    \caption{LabMSE1}
    \end{subfigure}
    {\vskip -0.2cm}
    \caption{\textbf{Correlation coefficients and $p$-values between transferability estimators and negative test MSEs} when transferring with half fine-tuning between any two different keypoints (with shared inputs) on CUB-200-2011.}
    \label{fig:shared_input_cub_half_ft}

    {\vskip 0.4cm}

    \begin{subfigure}[b]{0.23\textwidth}
    \includegraphics[width=\textwidth]{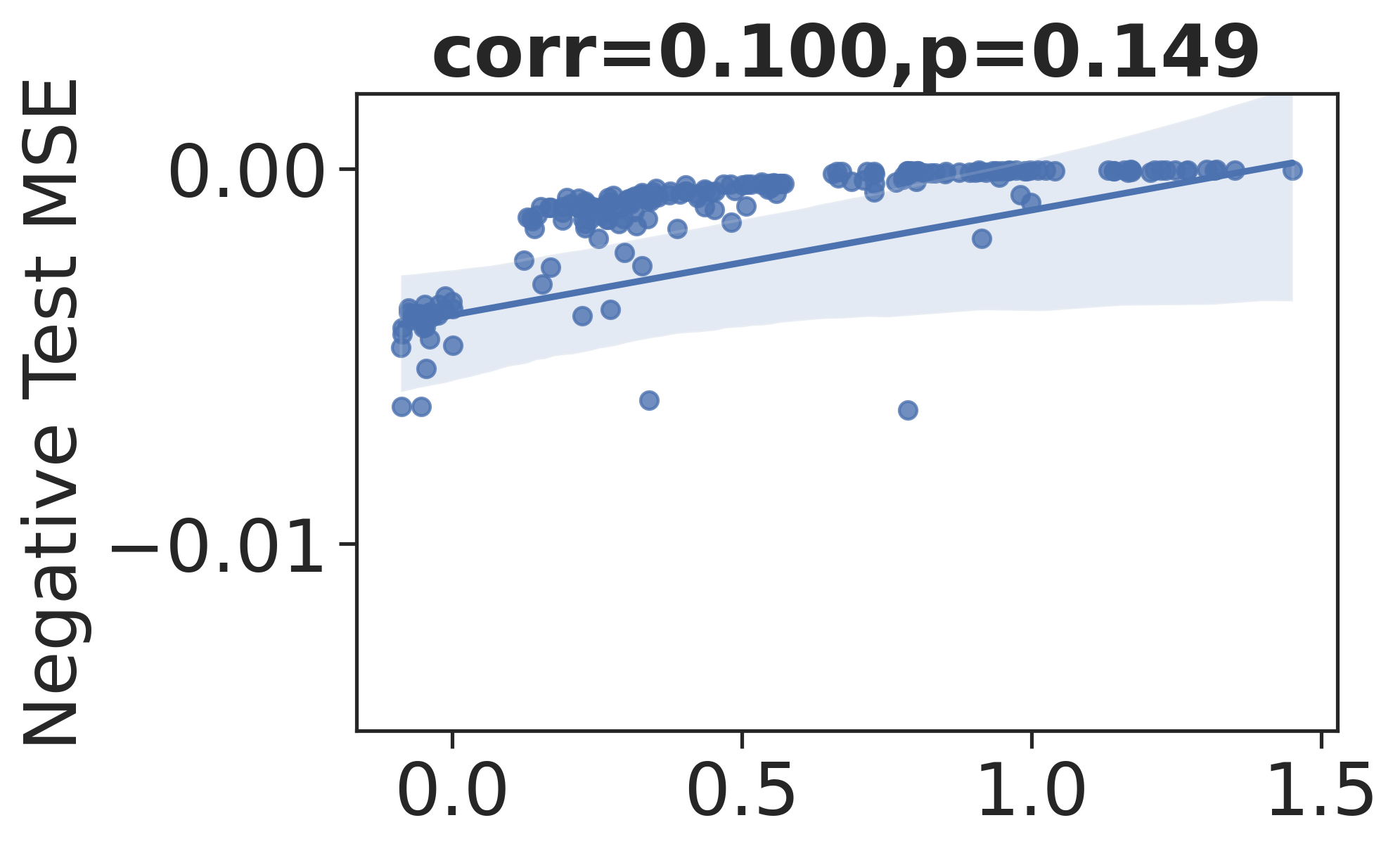}
    \caption{LogME}
    \end{subfigure}
    {\hskip 4pt}
    \begin{subfigure}[b]{0.23\textwidth}
    \includegraphics[width=\textwidth]{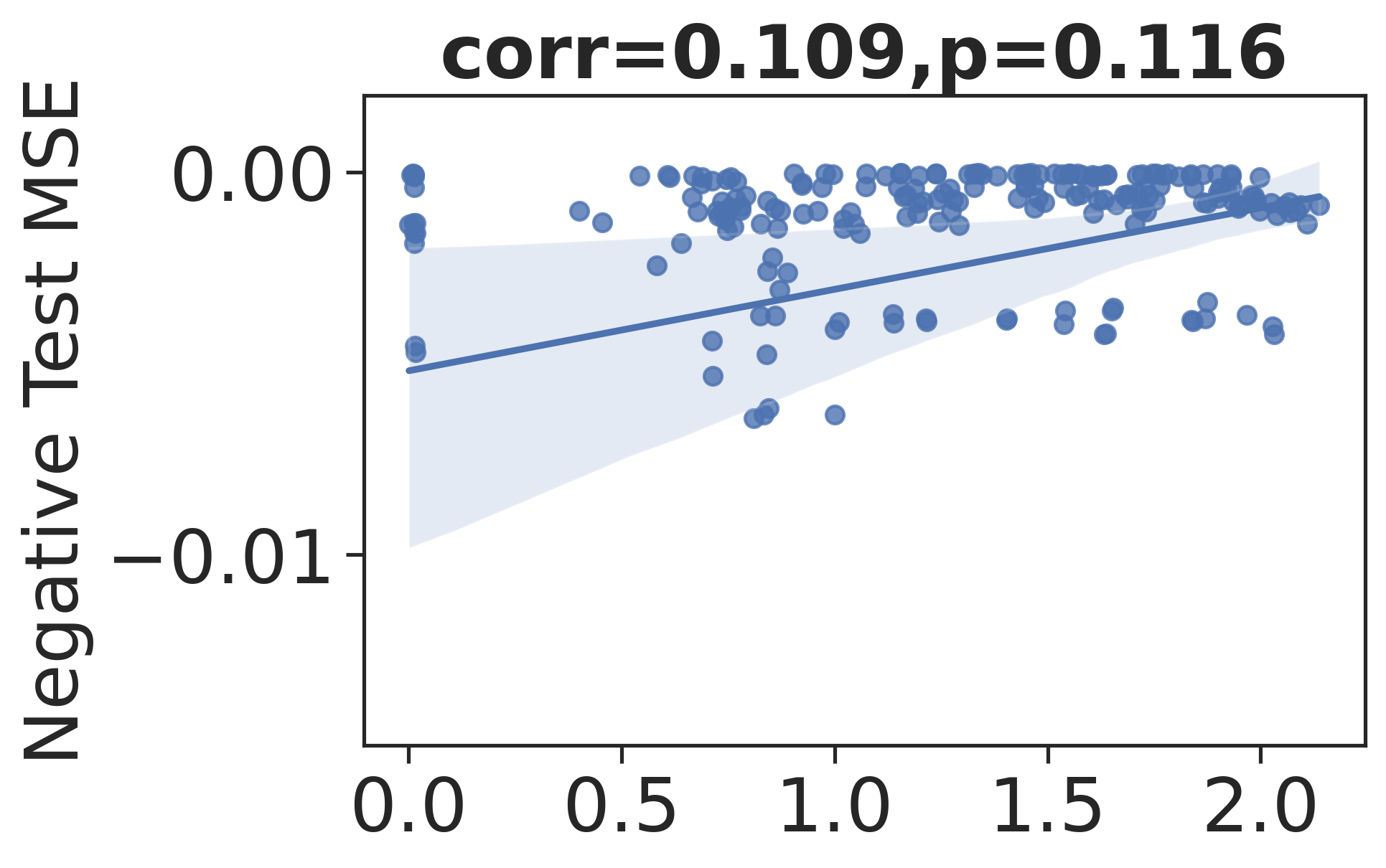}
    \caption{TransRate}
    \end{subfigure}
    {\hskip 4pt}
    \begin{subfigure}[b]{0.23\textwidth}
    \includegraphics[width=\textwidth]{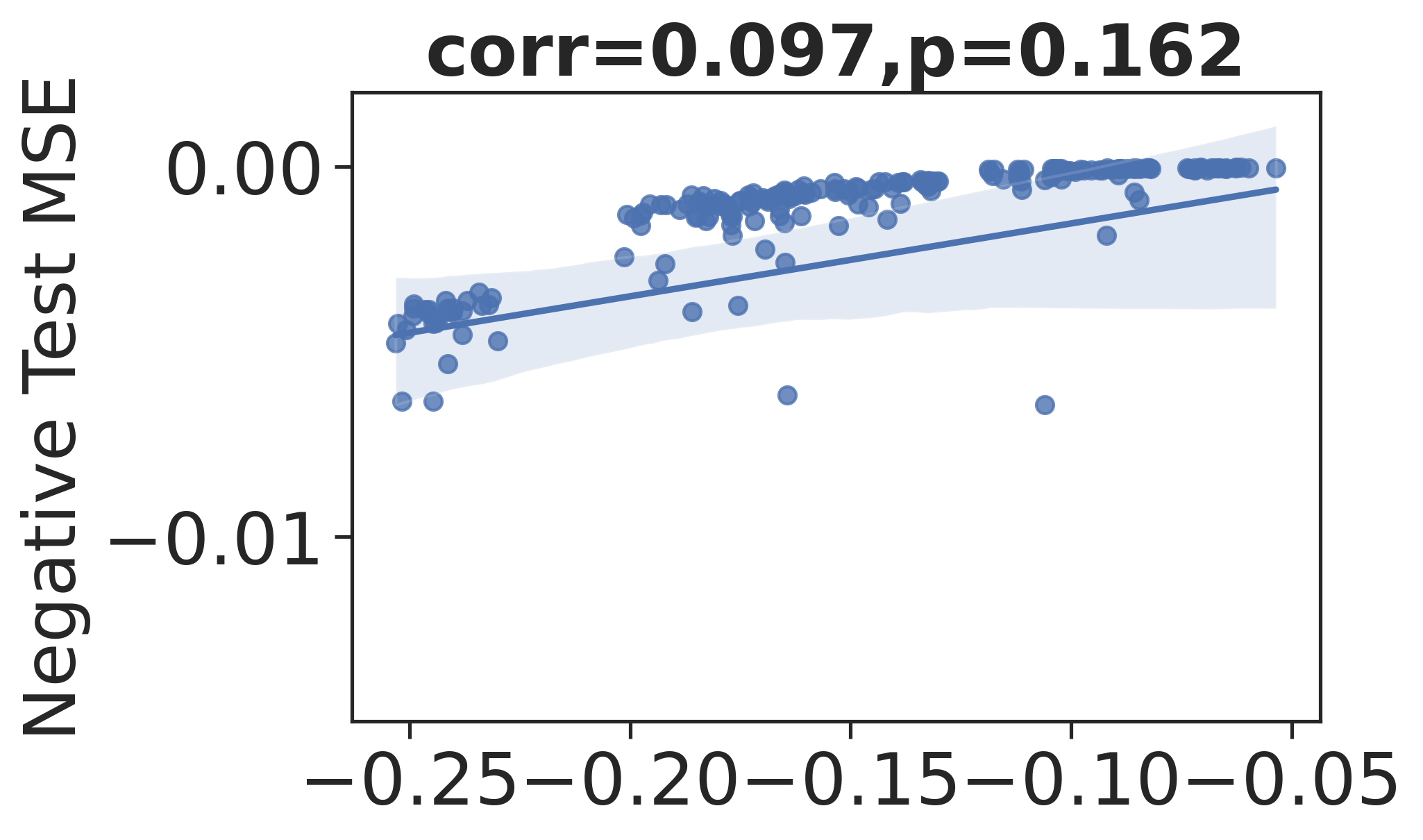}    
    \caption{LinMSE0}
    \end{subfigure}
    {\hskip 4pt}
    \begin{subfigure}[b]{0.23\textwidth}
    \includegraphics[width=\textwidth]{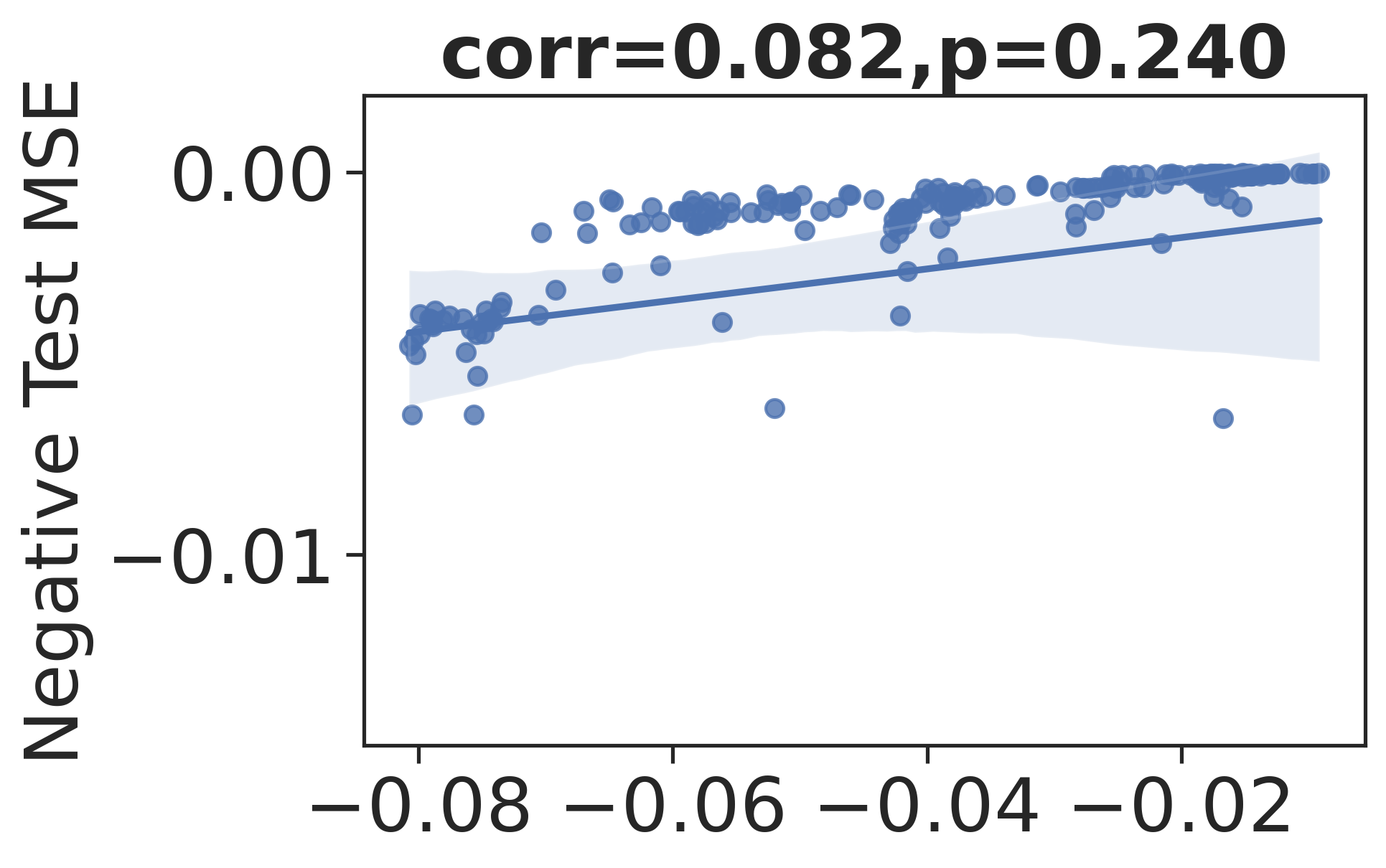}
    \caption{LinMSE1}
    \end{subfigure}
    
    {\vskip 0.2cm}
    
    \begin{subfigure}[b]{0.23\textwidth}
    \includegraphics[width=\textwidth]{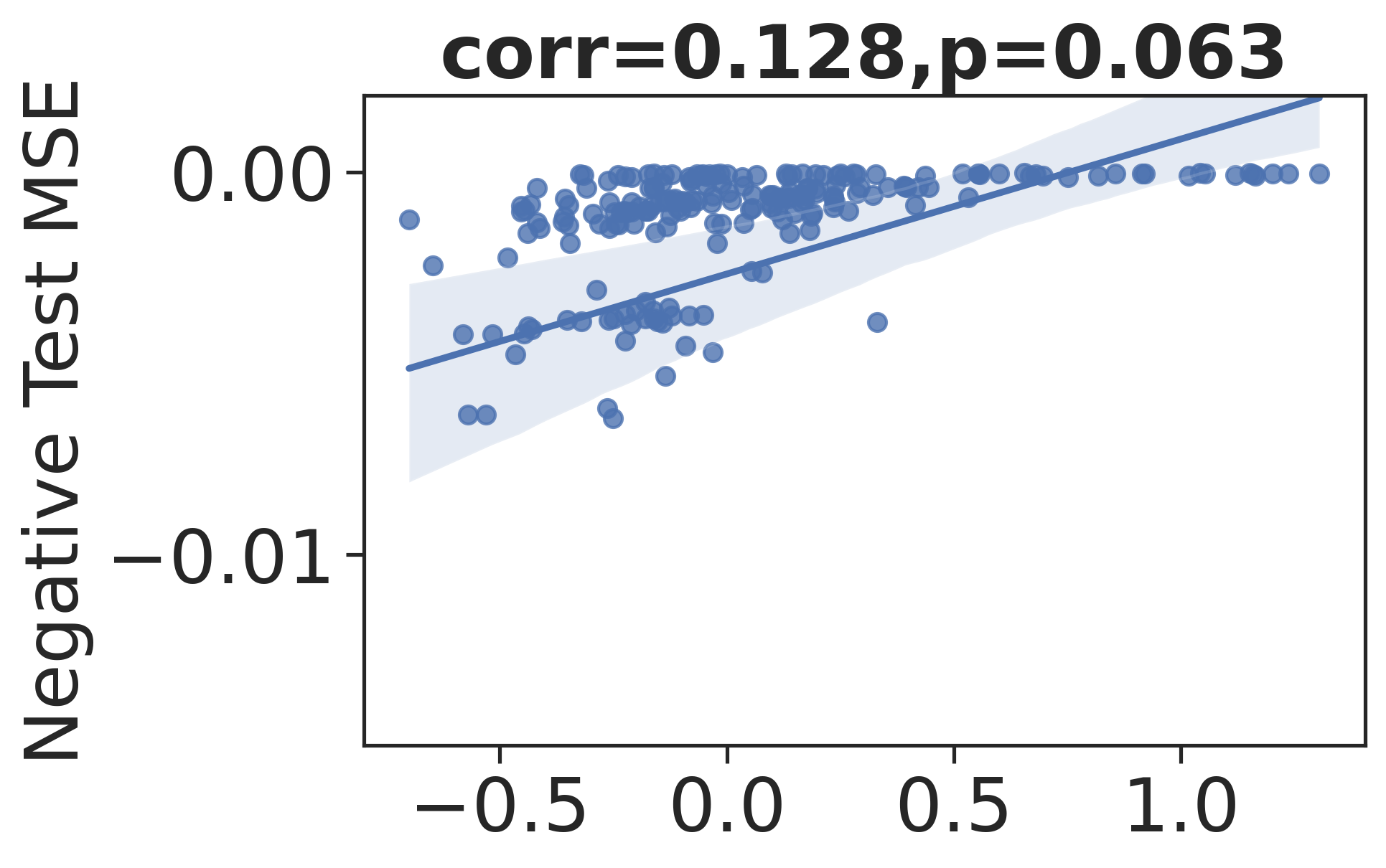}
    \caption{LabLogME}
    \end{subfigure}
    {\hskip 4pt}
    \begin{subfigure}[b]{0.23\textwidth}
    \includegraphics[width=\textwidth]{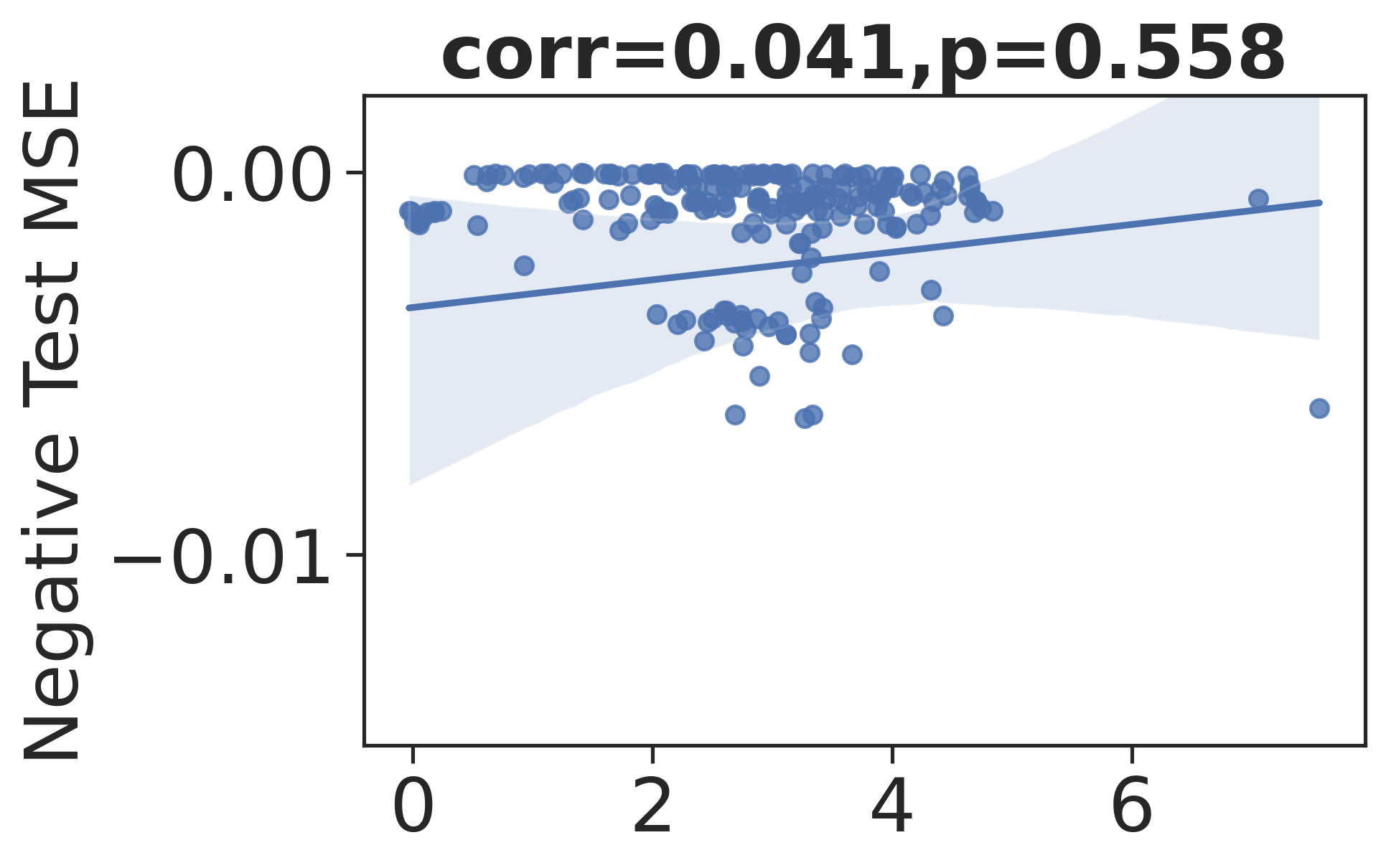}
    \caption{LabTransRate}
    \end{subfigure}
    {\hskip 4pt}
    \begin{subfigure}[b]{0.23\textwidth}
\includegraphics[width=\textwidth]{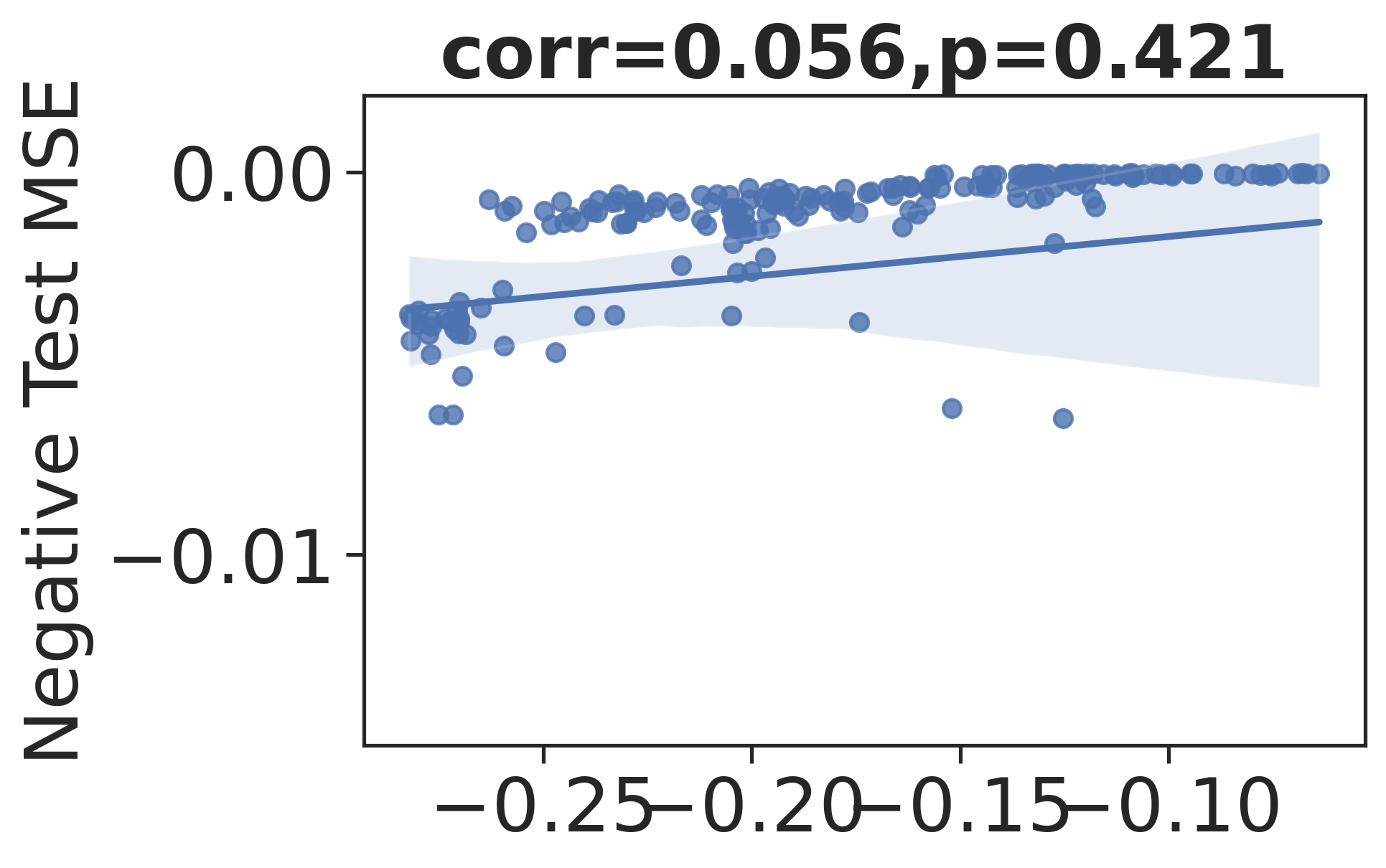}
    \caption{LabMSE0}
    \end{subfigure}
    {\hskip 4pt}
    \begin{subfigure}[b]{0.23\textwidth}
    \includegraphics[width=\textwidth]{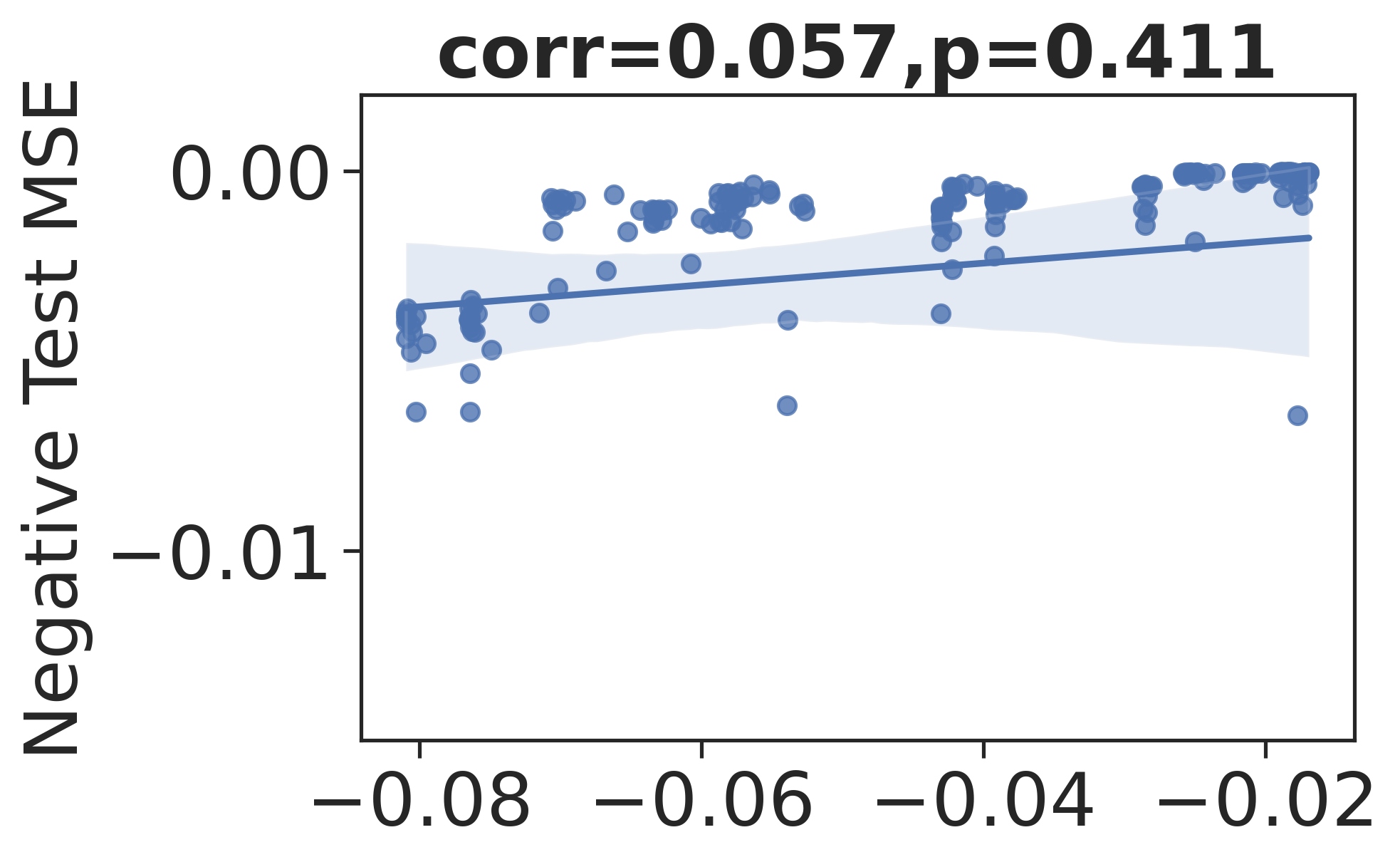}
    \caption{LabMSE1}
    \end{subfigure}
    {\vskip -0.2cm}
    \caption{\textbf{Correlation coefficients and $p$-values between transferability estimators and negative test MSEs} when transferring with full fine-tuning between any two different keypoints (with shared inputs) on CUB-200-2011.}
    \label{fig:shared_input_cub_full_ft}
\end{figure*}

\begin{figure*}[h]
\captionsetup[subfigure]{justification=centering}
    \begin{subfigure}[b]{0.23\textwidth}
    \includegraphics[width=\textwidth]{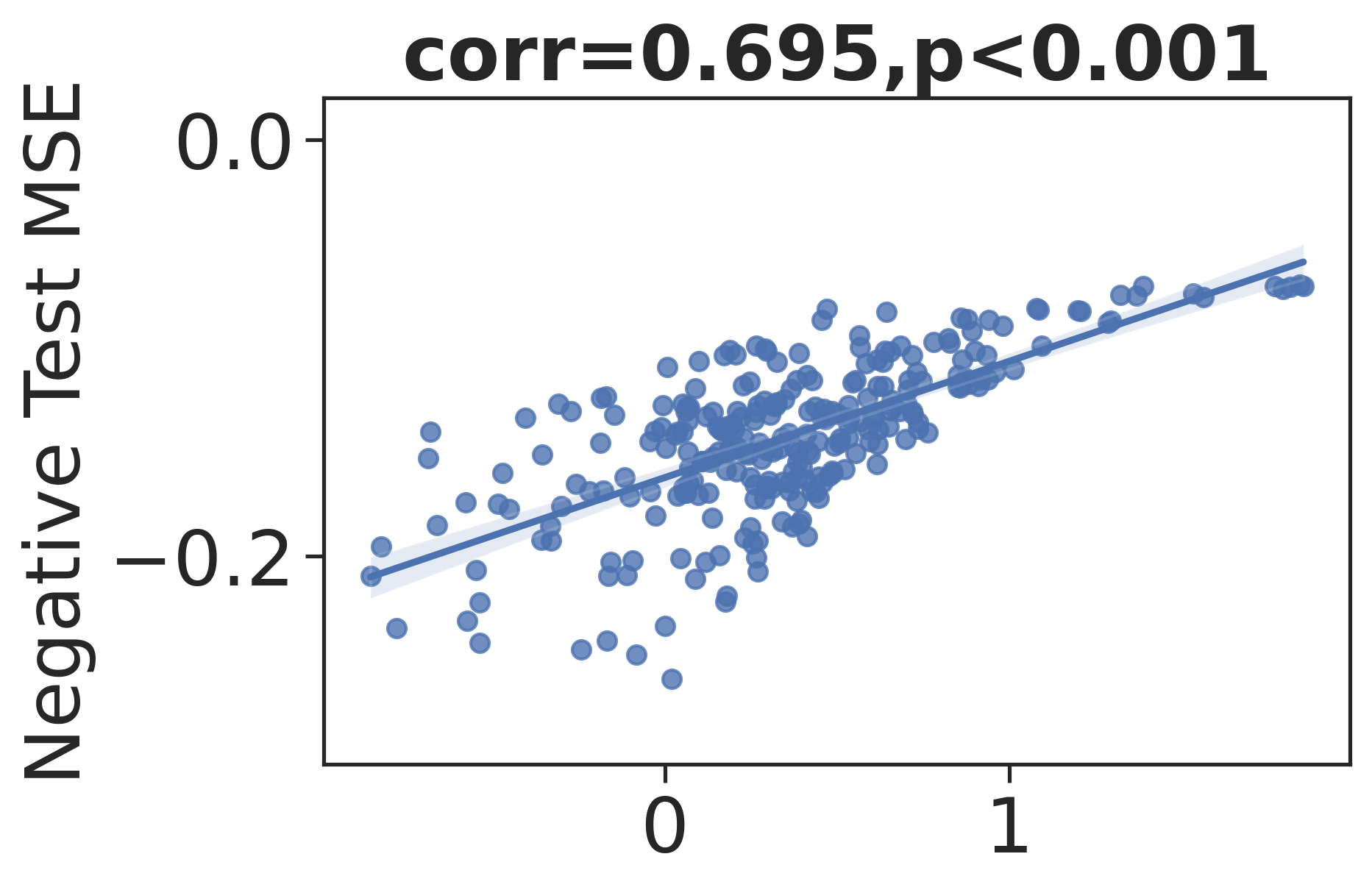}
    \caption{LogME}
    \end{subfigure}
    {\hskip 4pt}
    \begin{subfigure}[b]{0.23\textwidth}
    \includegraphics[width=\textwidth]{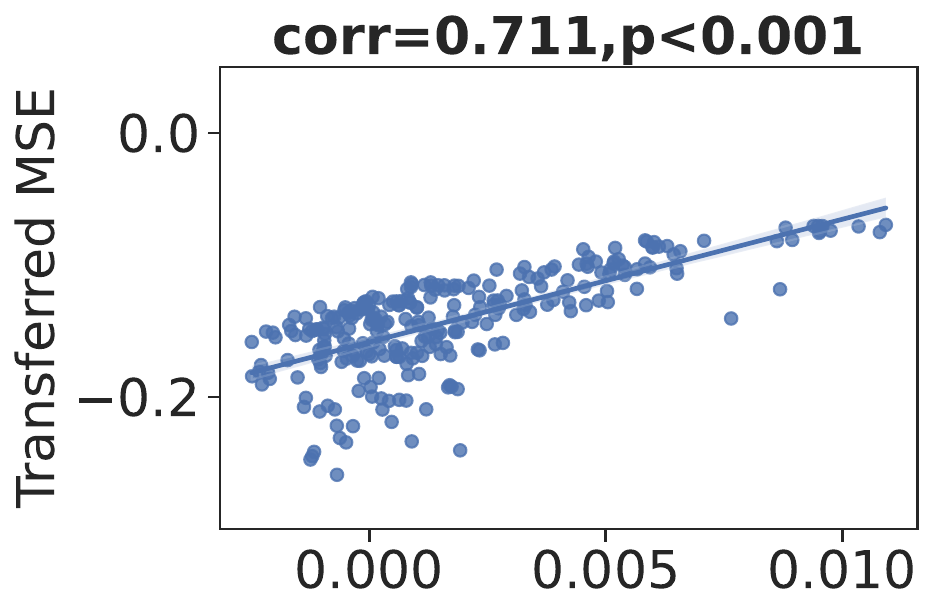}
    \caption{TransRate}
    \end{subfigure}
    {\hskip 4pt}
    \begin{subfigure}[b]{0.23\textwidth}
    \includegraphics[width=\textwidth]{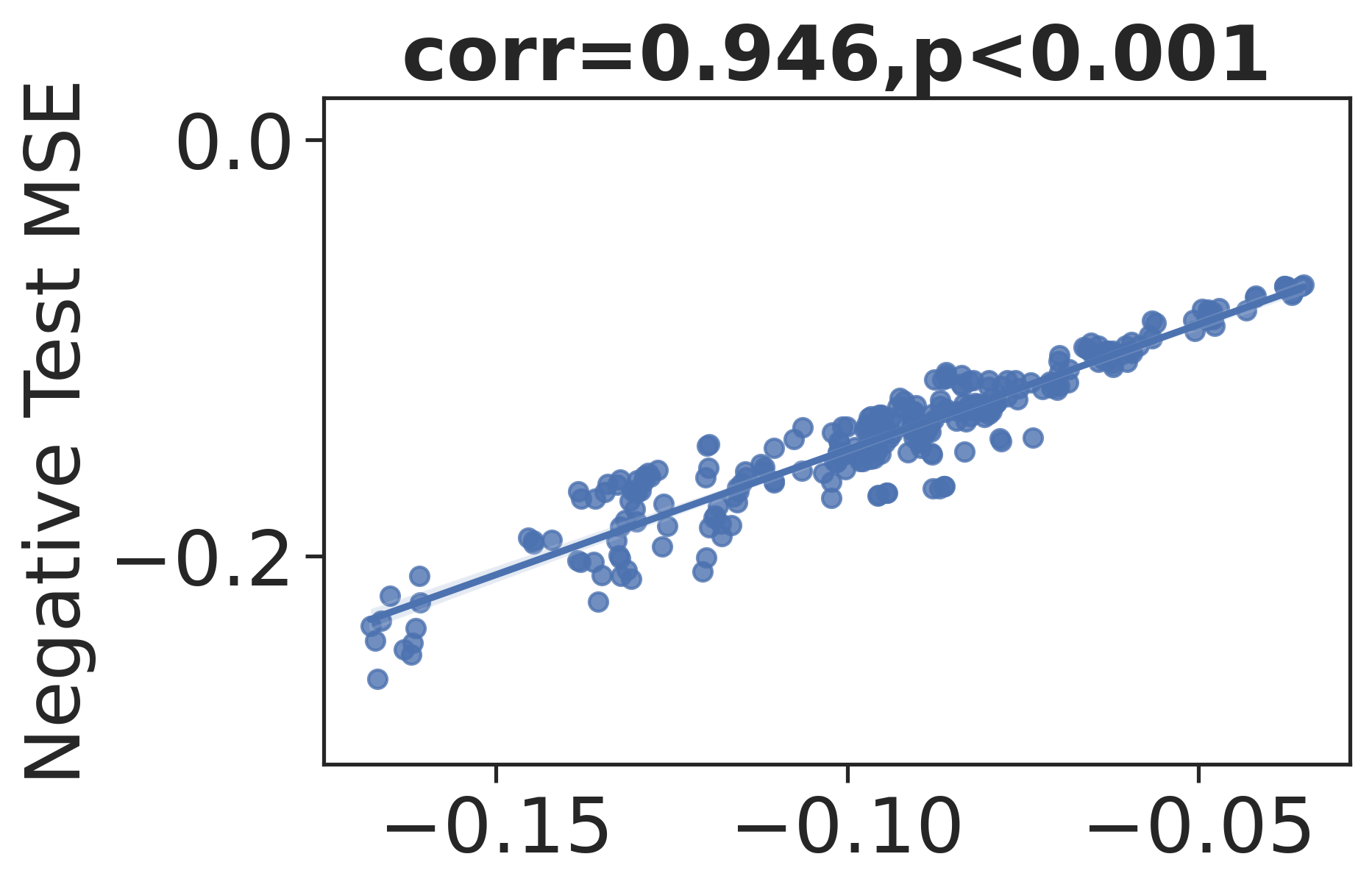}    
    \caption{LinMSE0}
    \end{subfigure}
    {\hskip 4pt}
    \begin{subfigure}[b]{0.23\textwidth}
    \includegraphics[width=\textwidth]{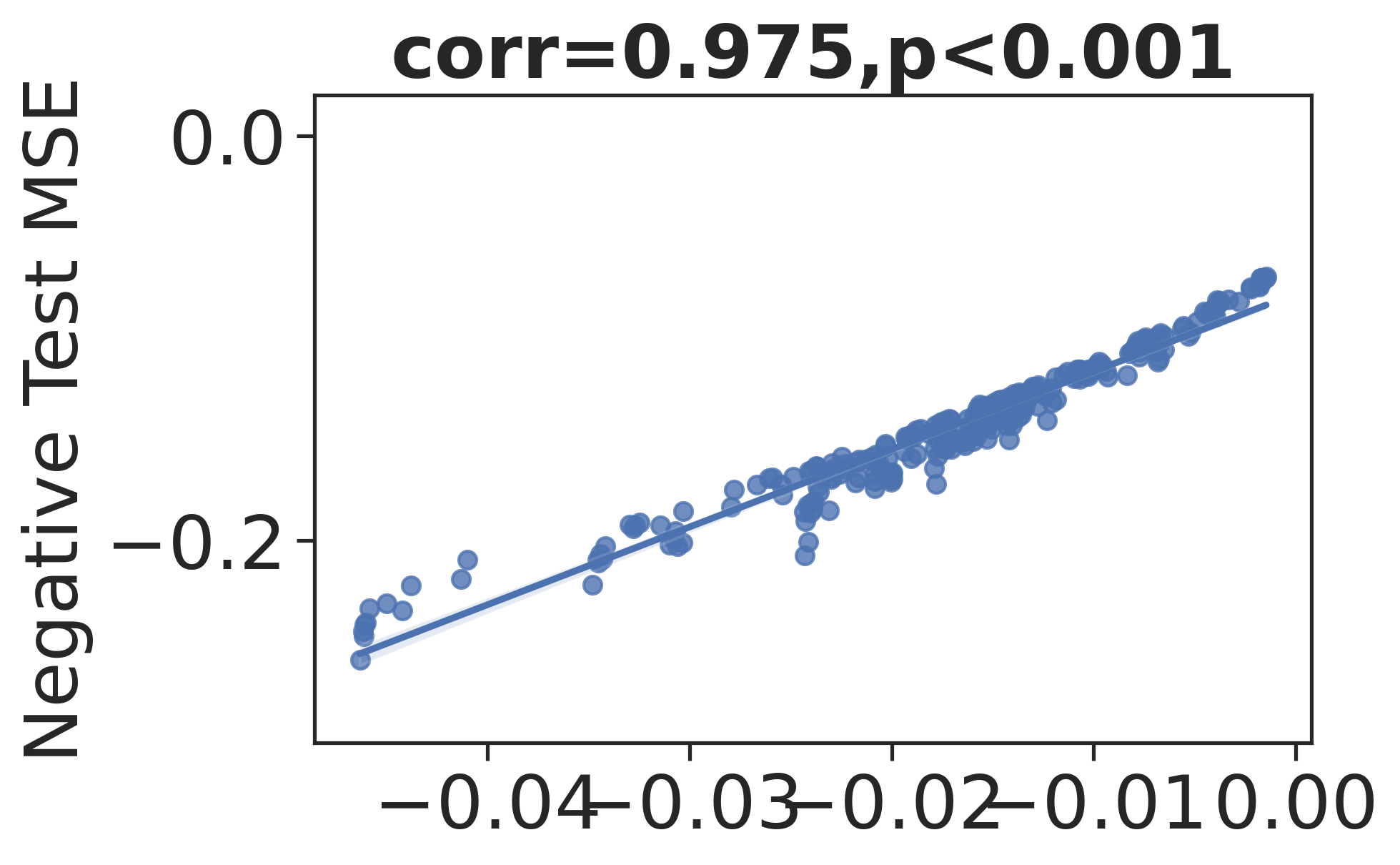}
    \caption{LinMSE1}
    \end{subfigure}
    
    {\vskip 0.2cm}
    
    \begin{subfigure}[b]{0.23\textwidth}
    \includegraphics[width=\textwidth]{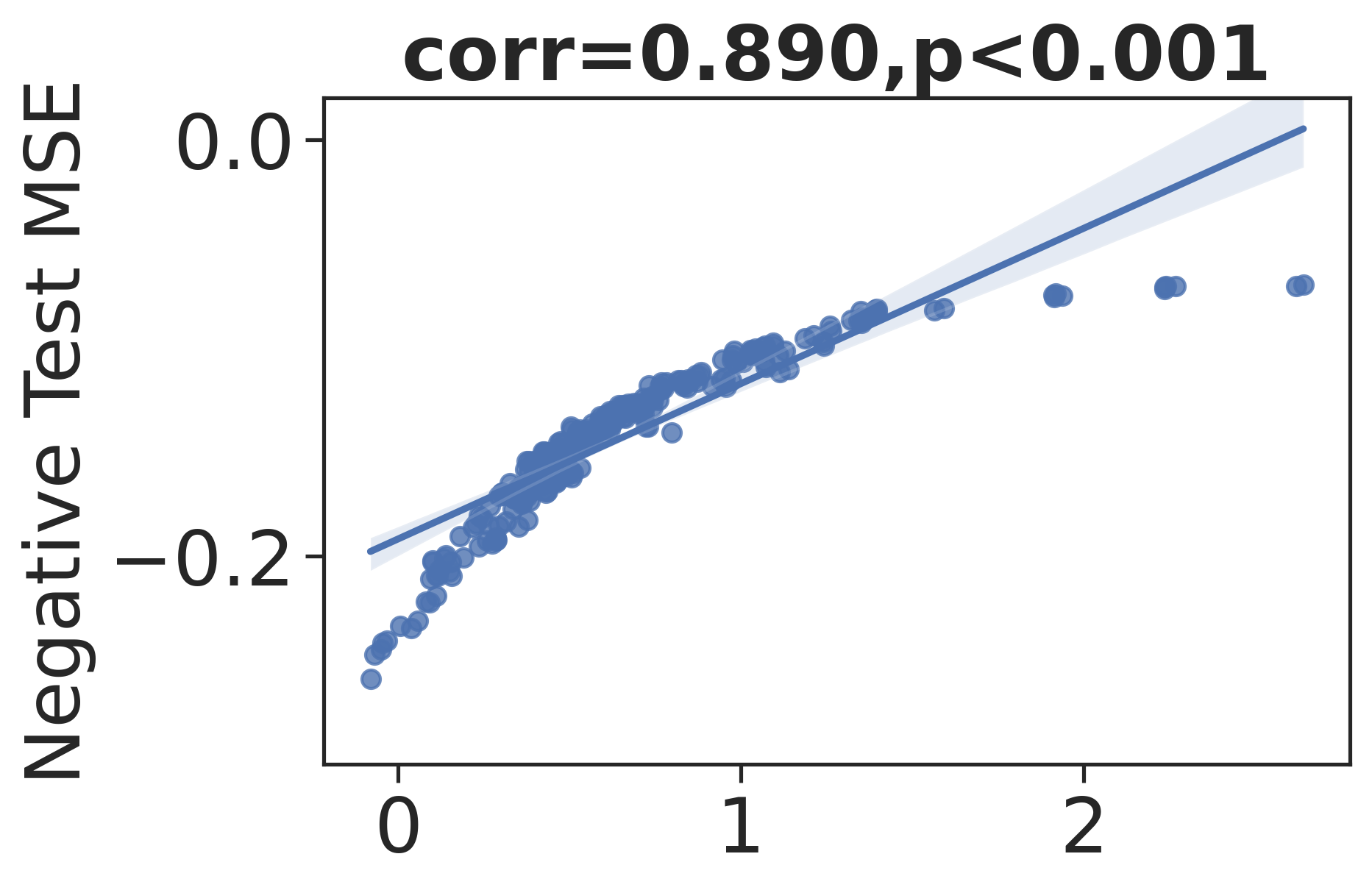}
    \caption{LabLogME}
    \end{subfigure}
    {\hskip 4pt}
    \begin{subfigure}[b]{0.23\textwidth}
    \includegraphics[width=\textwidth]{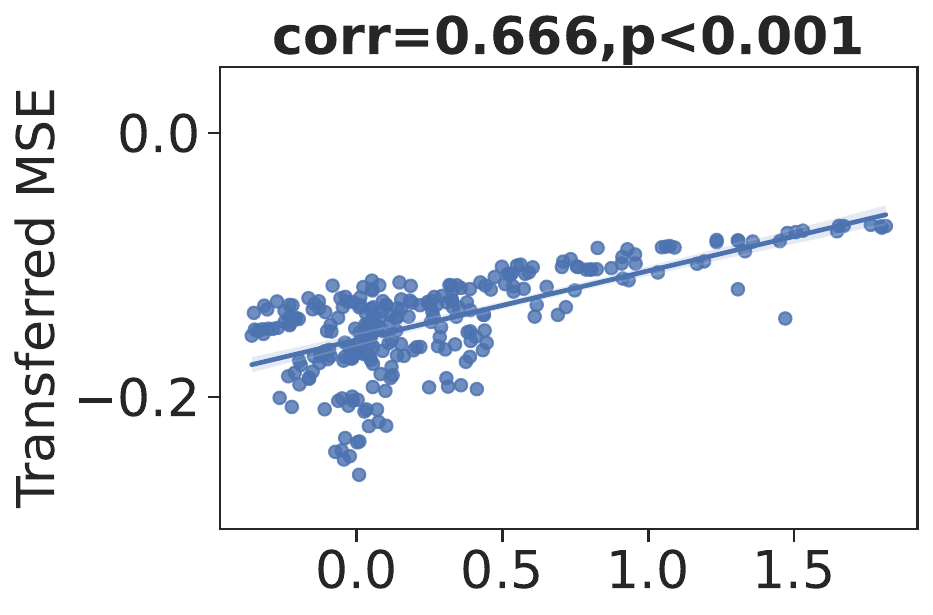}
    \caption{LabTransRate}
    \end{subfigure}
    {\hskip 4pt}
    \begin{subfigure}[b]{0.23\textwidth}
\includegraphics[width=\textwidth]{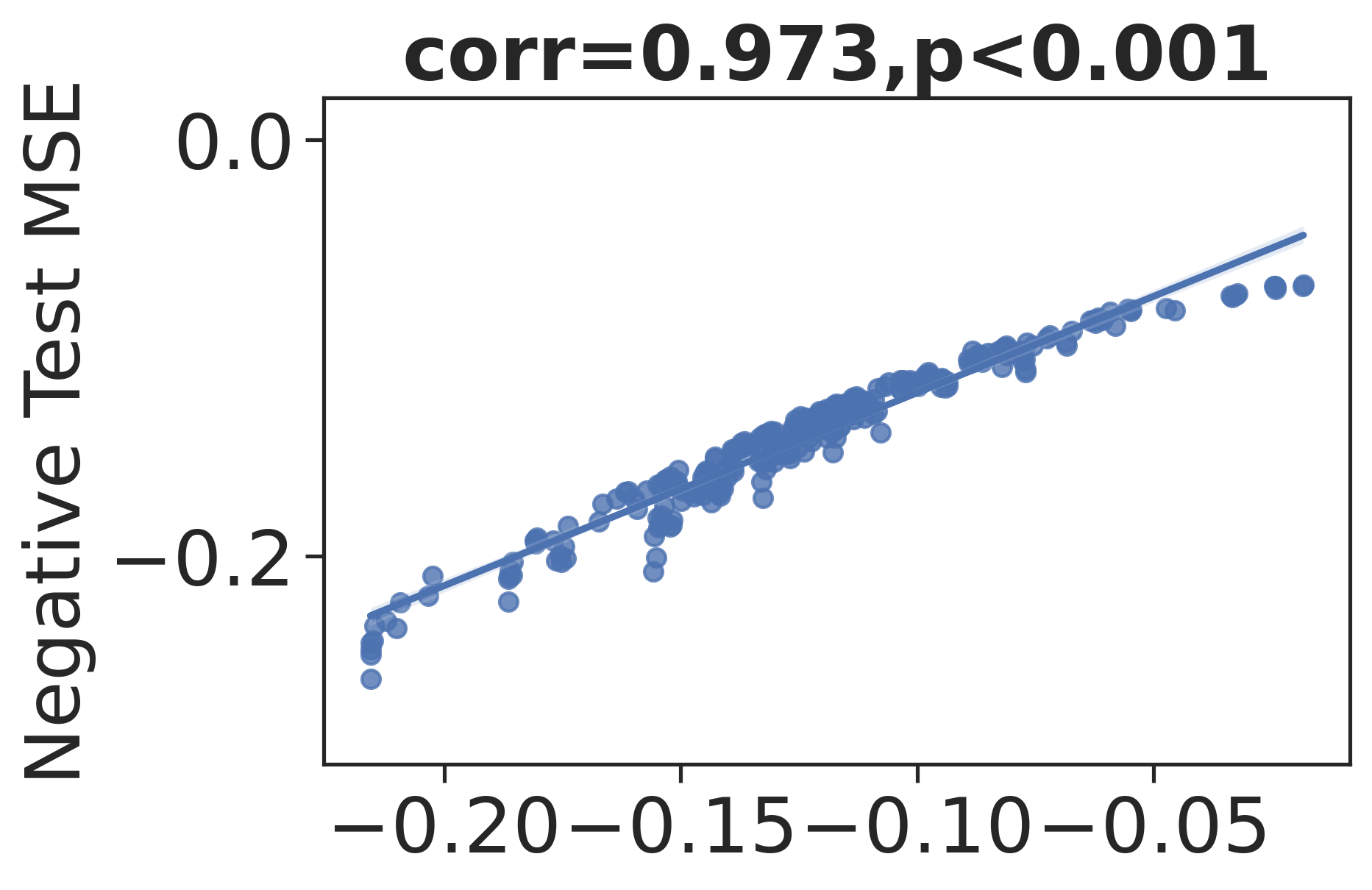}
    \caption{LabMSE0}
    \end{subfigure}
    {\hskip 4pt}
    \begin{subfigure}[b]{0.23\textwidth}
    \includegraphics[width=\textwidth]{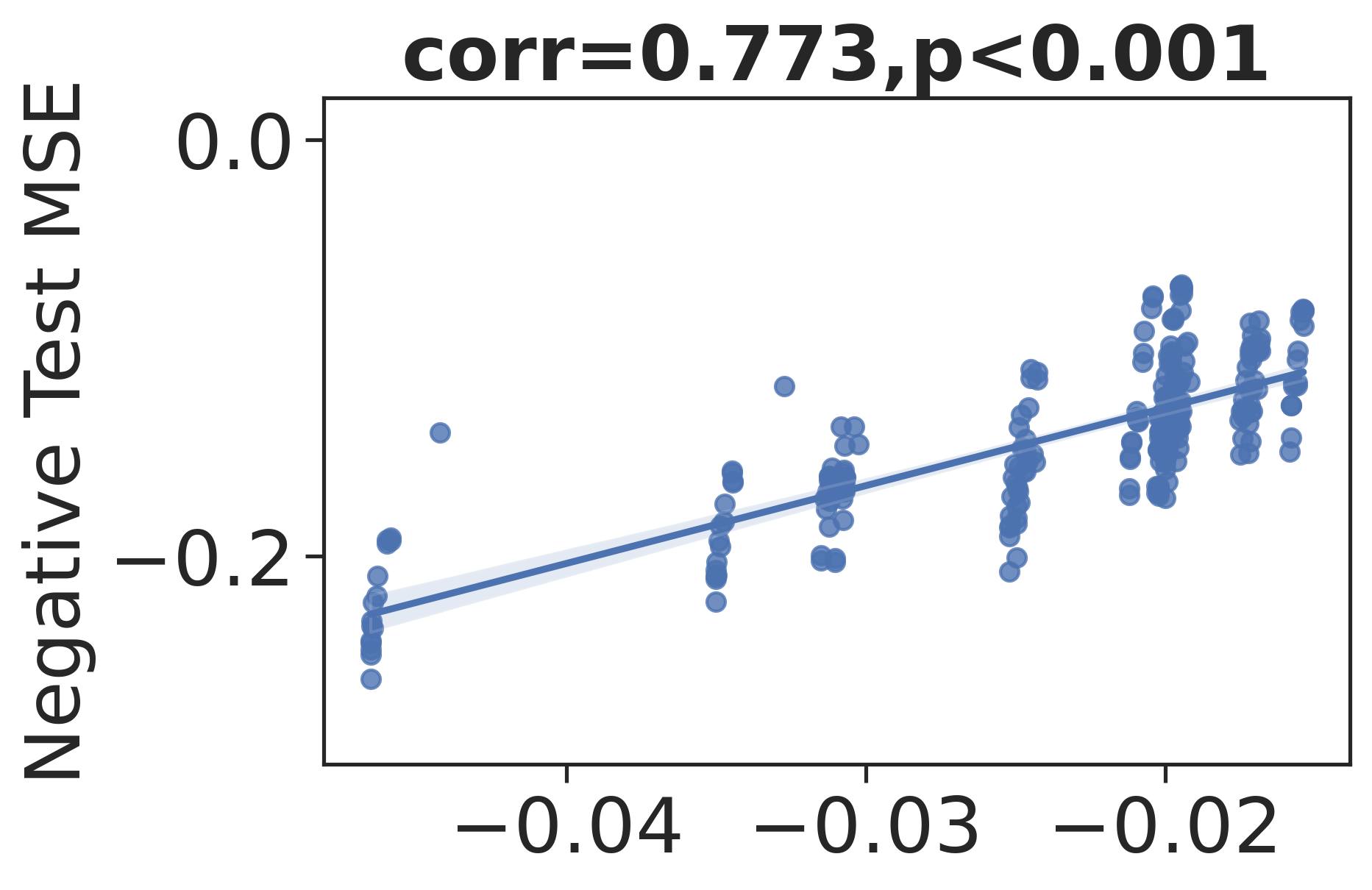}
    \caption{LabMSE1}
    \end{subfigure}
    {\vskip -0.2cm}
    \caption{\textbf{Correlation coefficients and $p$-values between transferability estimators and negative test MSEs} when transferring with head re-training between any two different keypoints (with shared inputs) on OpenMonkey.}
    \label{fig:shared_input_openmonkey_head_rt}
    
    {\vskip 0.4cm}
    
    \begin{subfigure}[b]{0.23\textwidth}
    \includegraphics[width=\textwidth]{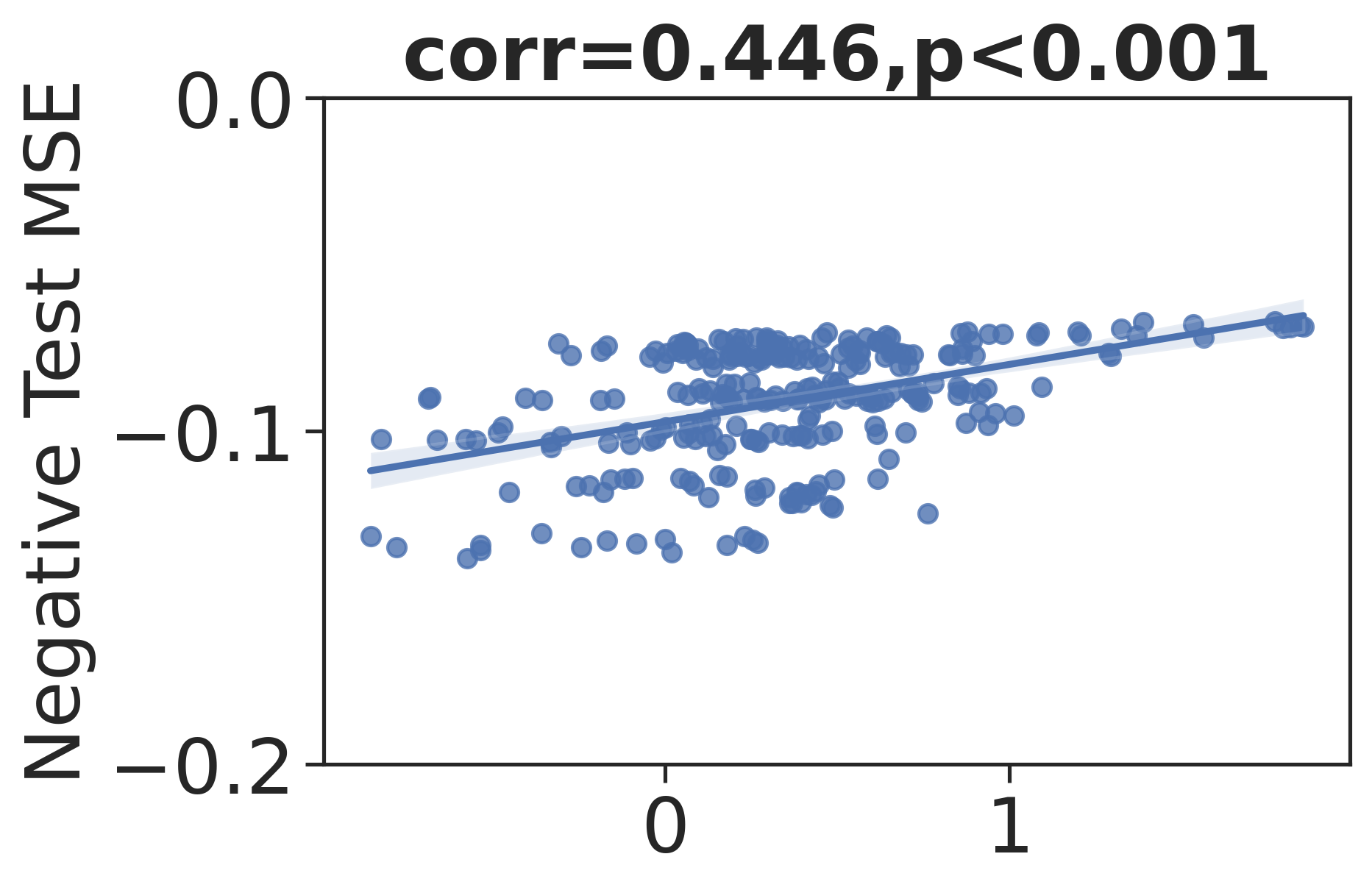}
    \caption{LogME}
    \end{subfigure}
    {\hskip 4pt}
    \begin{subfigure}[b]{0.23\textwidth}
    \includegraphics[width=\textwidth]{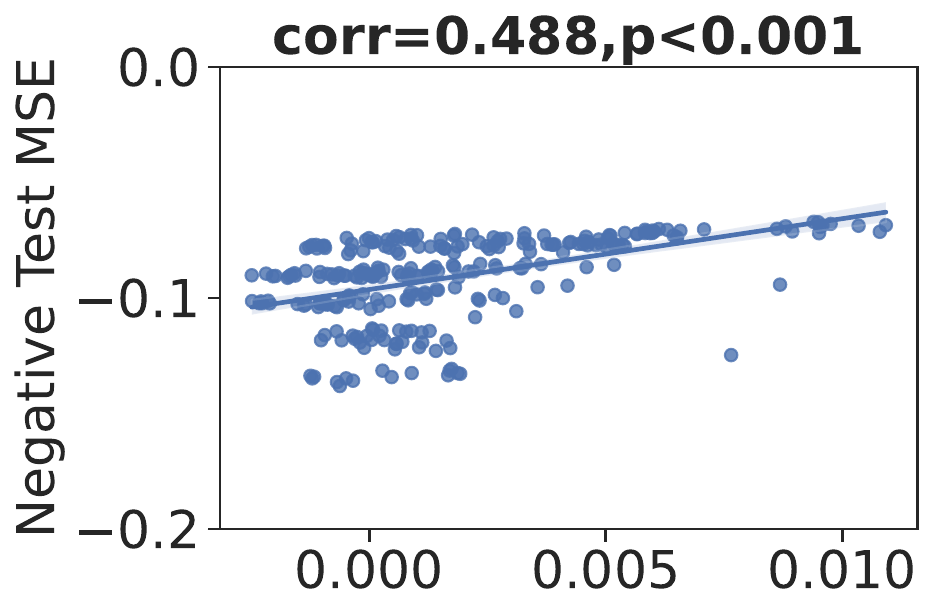}
    \caption{TransRate}
    \end{subfigure}
    {\hskip 4pt}
    \begin{subfigure}[b]{0.23\textwidth}
    \includegraphics[width=\textwidth]{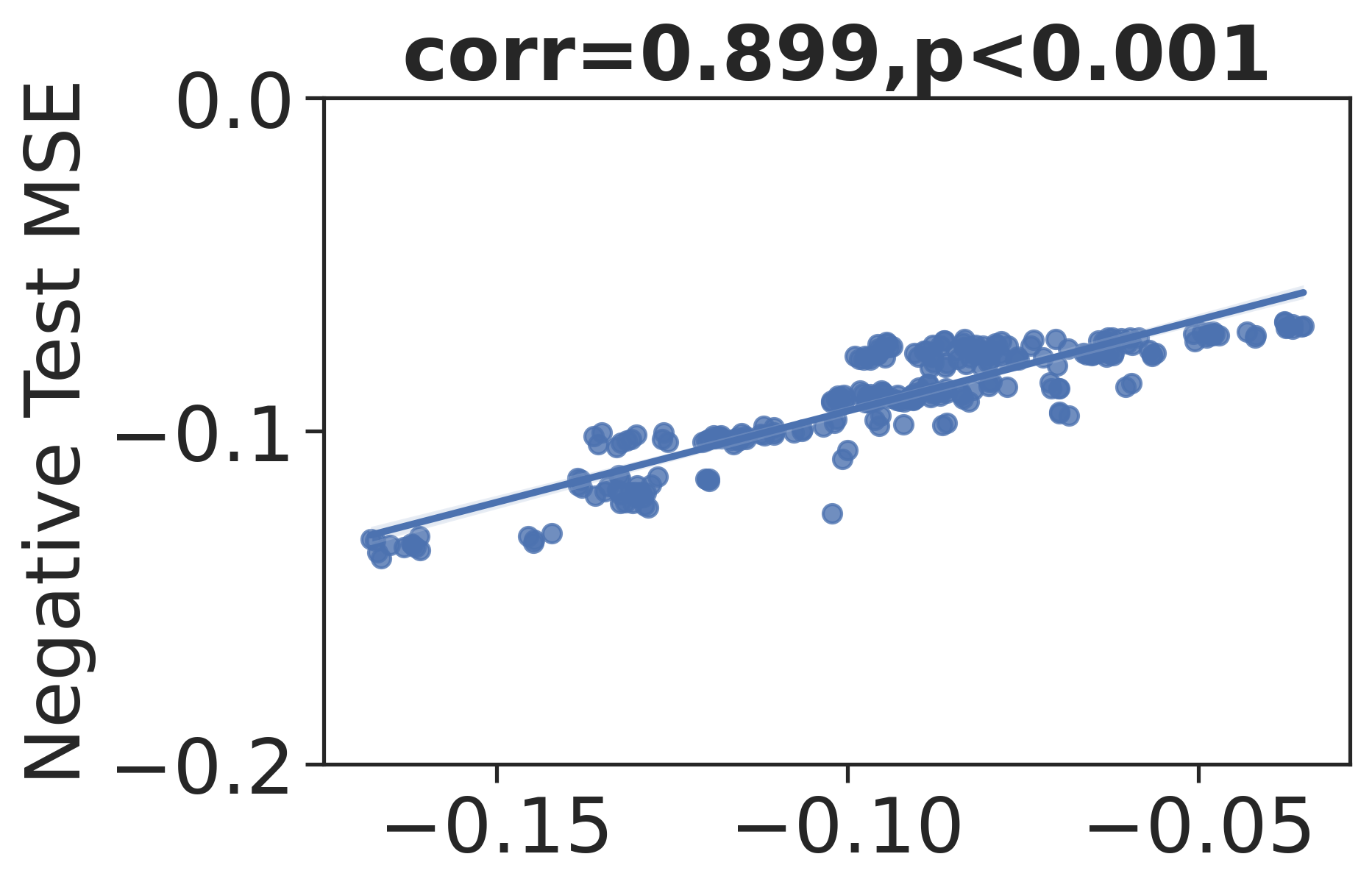}    
    \caption{LinMSE0}
    \end{subfigure}
    {\hskip 4pt}
    \begin{subfigure}[b]{0.23\textwidth}
    \includegraphics[width=\textwidth]{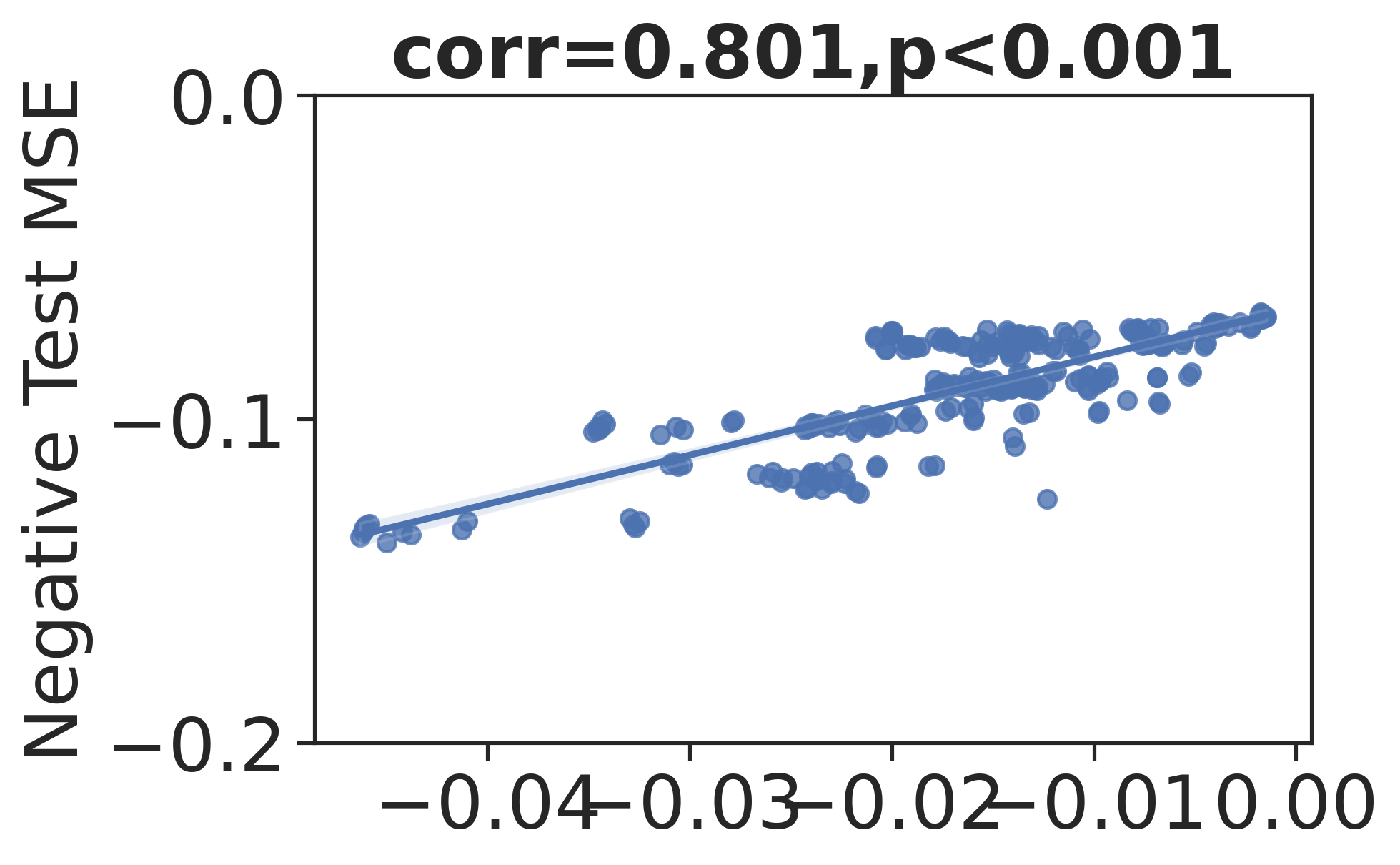}
    \caption{LinMSE1}
    \end{subfigure}
    
    {\vskip 0.2cm}
    
    \begin{subfigure}[b]{0.23\textwidth}
    \includegraphics[width=\textwidth]{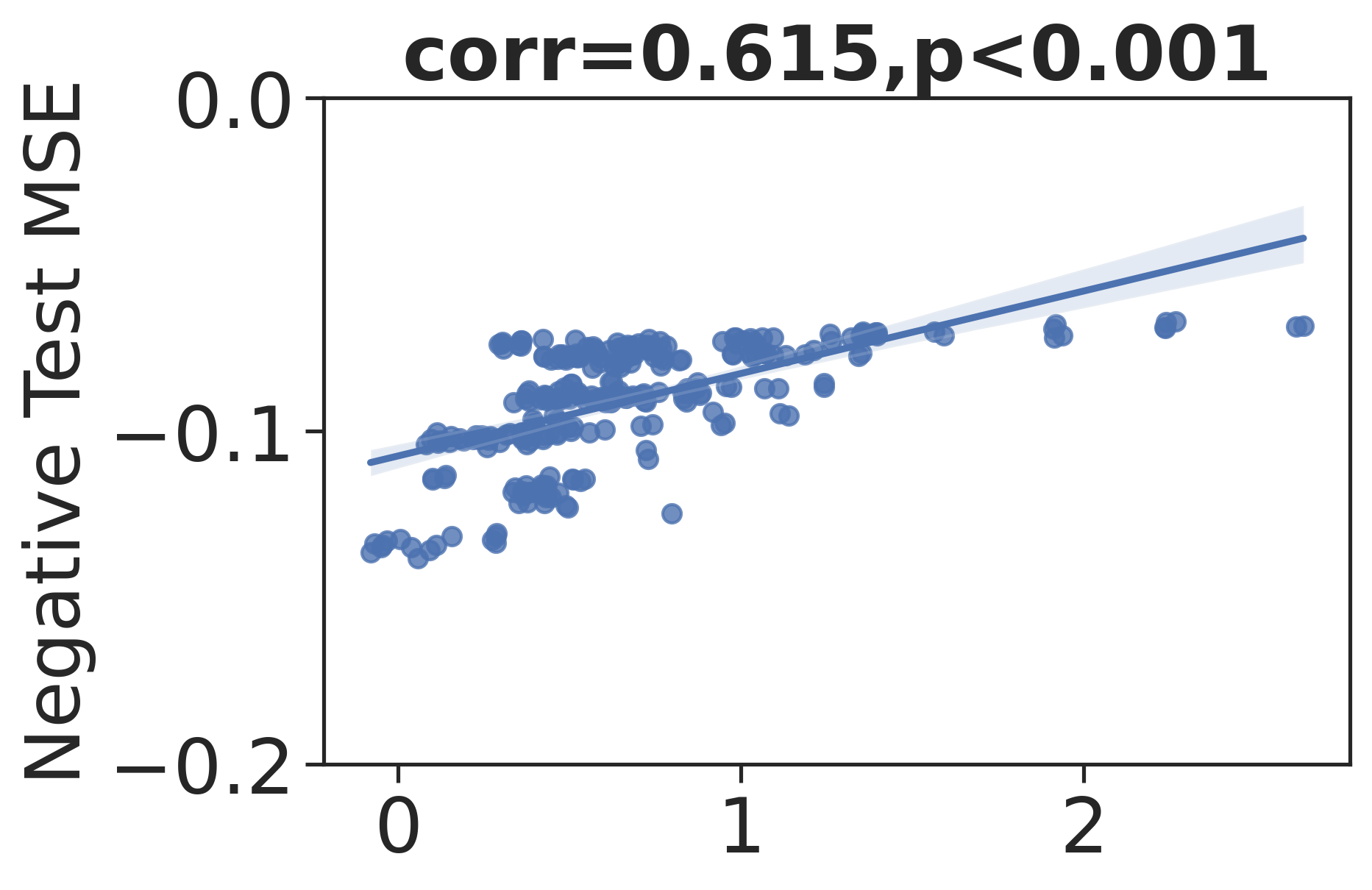}
    \caption{LabLogME}
    \end{subfigure}
    {\hskip 4pt}
    \begin{subfigure}[b]{0.23\textwidth}
    \includegraphics[width=\textwidth]{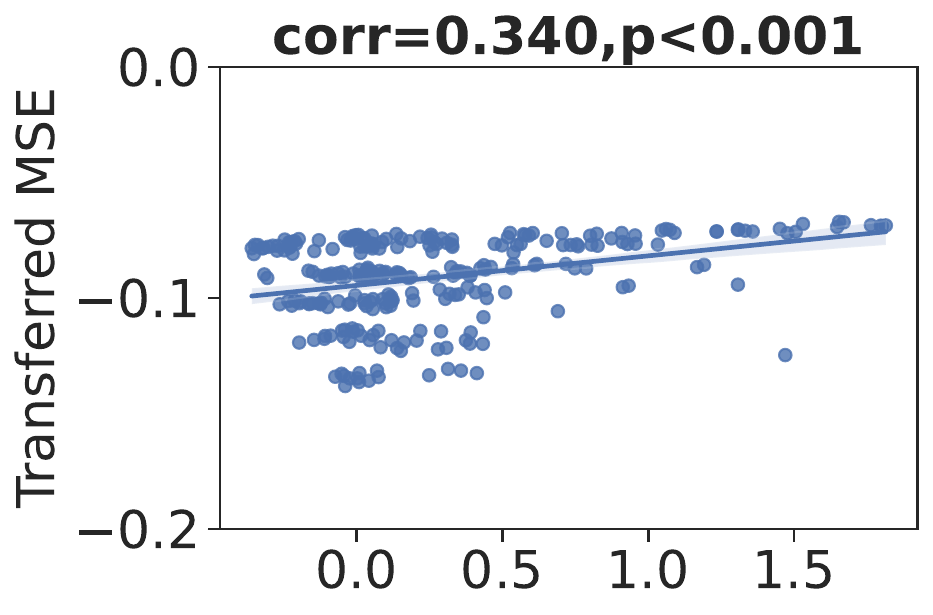}
    \caption{LabTransRate}
    \end{subfigure}
    {\hskip 4pt}
    \begin{subfigure}[b]{0.23\textwidth}
    \includegraphics[width=\textwidth]{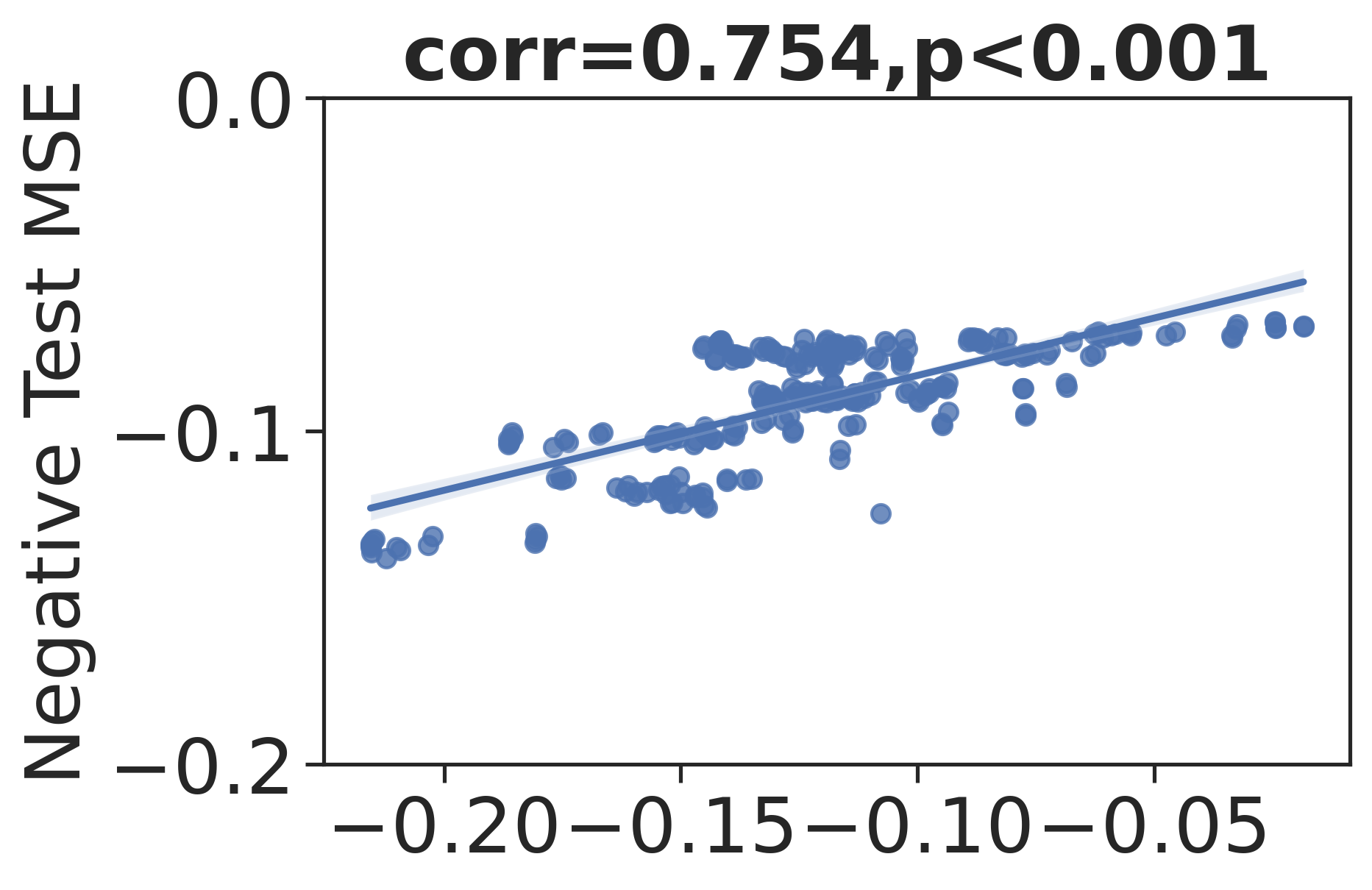}
    \caption{LabMSE0}
    \end{subfigure}
    {\hskip 4pt}
    \begin{subfigure}[b]{0.23\textwidth}
    \includegraphics[width=\textwidth]{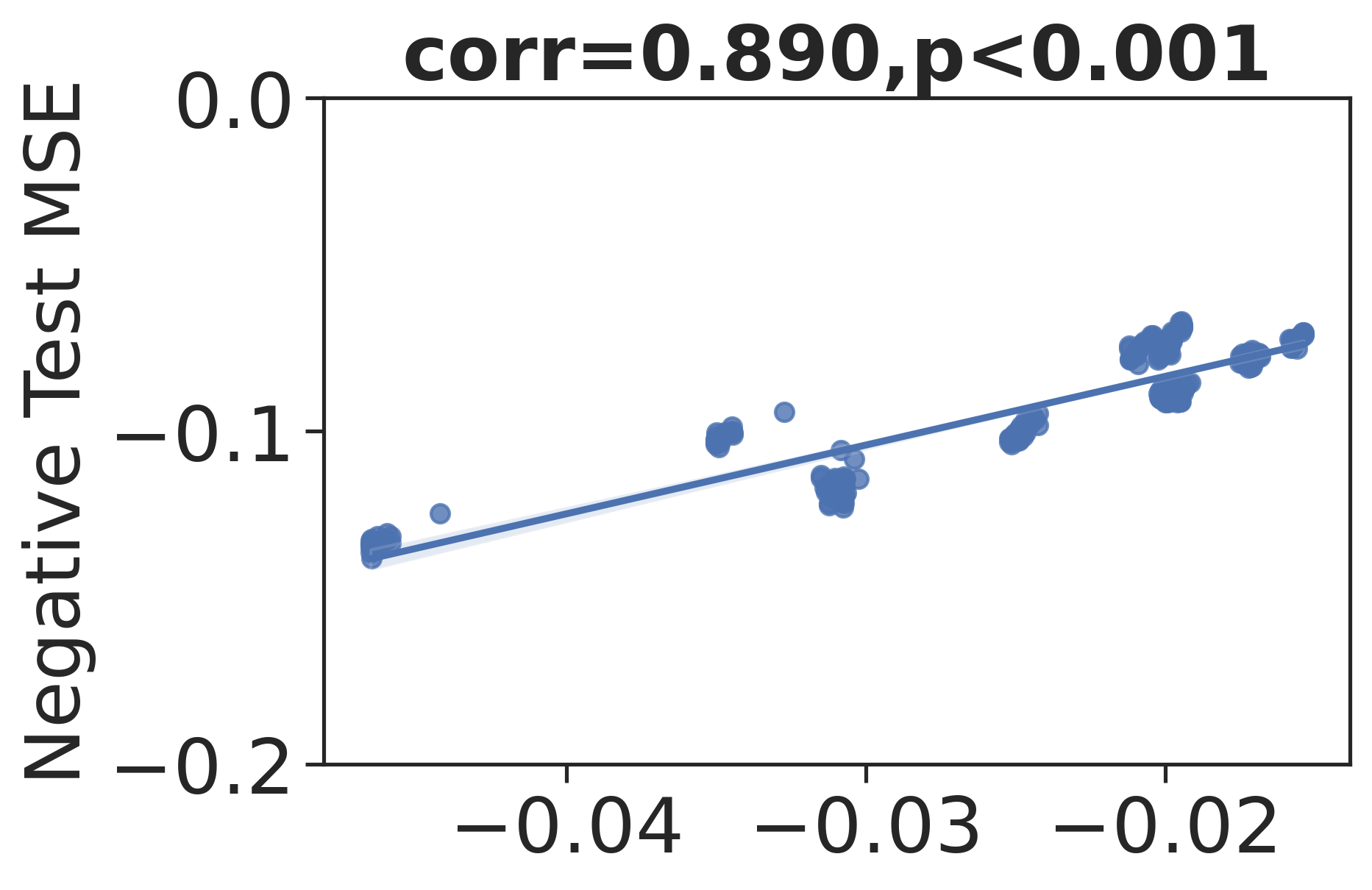}
    \caption{LabMSE1}
    \end{subfigure}
    {\vskip -0.2cm}
    \caption{\textbf{Correlation coefficients and $p$-values between transferability estimators and negative test MSEs} when transferring with half fine-tuning between any two different keypoints (with shared inputs) on OpenMonkey.}
    \label{fig:shared_input_openmonkey_half_ft}
    
    {\vskip 0.4cm}
    
    \begin{subfigure}[b]{0.23\textwidth}
    \includegraphics[width=\textwidth]{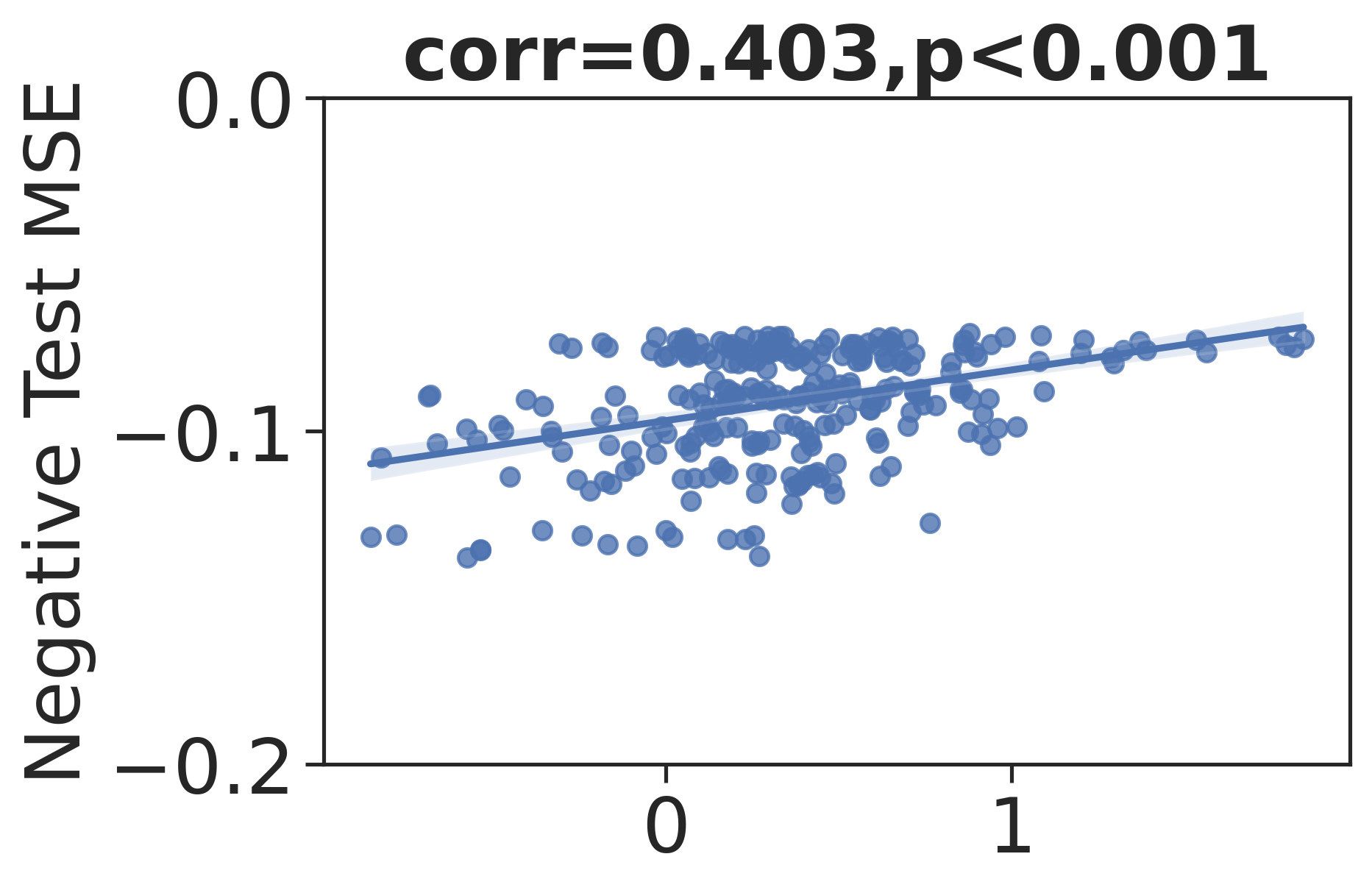}
    \caption{LogME}
    \end{subfigure}
    {\hskip 4pt}
    \begin{subfigure}[b]{0.23\textwidth}
    \includegraphics[width=\textwidth]{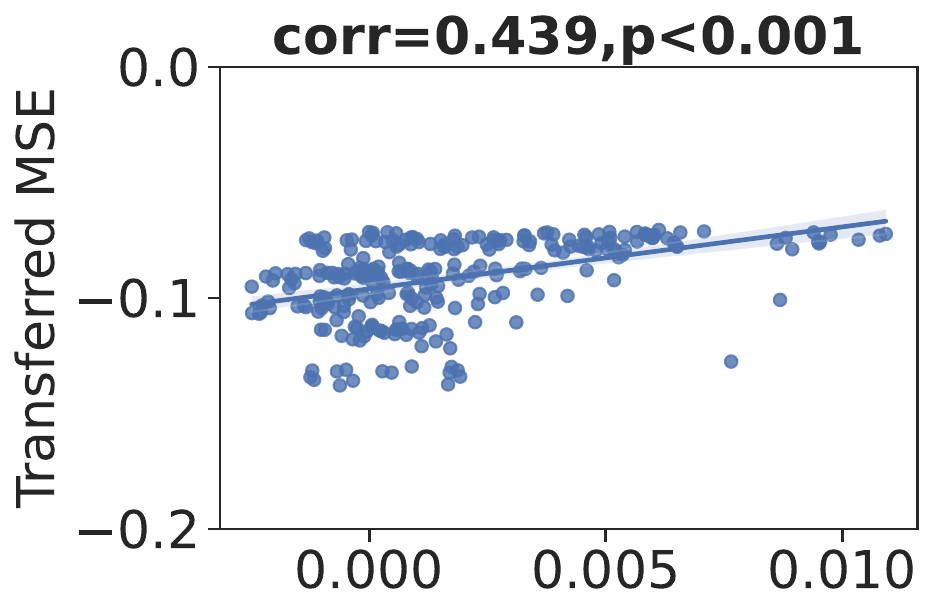}
    \caption{TransRate}
    \end{subfigure}
    {\hskip 4pt}
    \begin{subfigure}[b]{0.23\textwidth}
    \includegraphics[width=\textwidth]{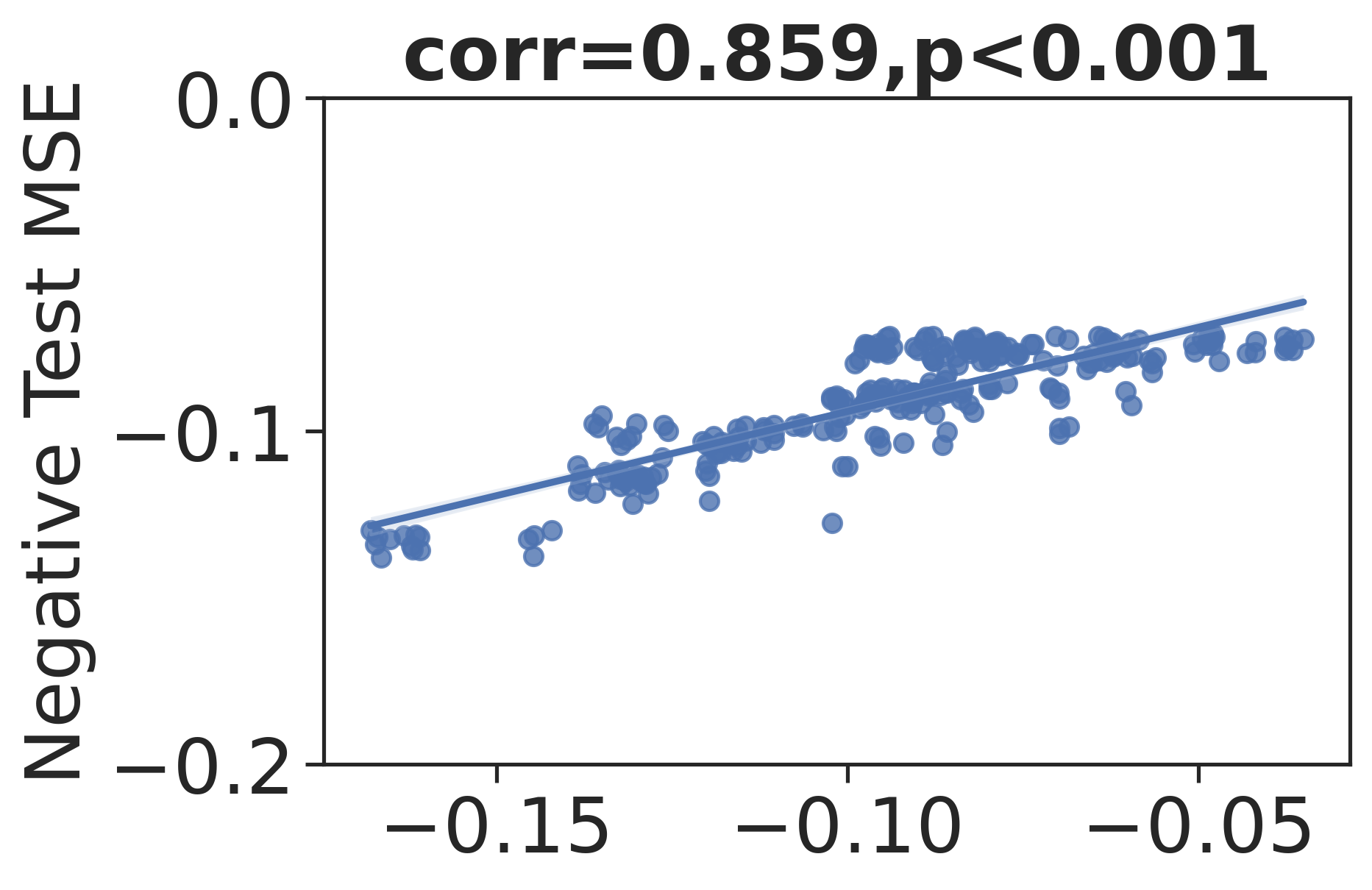}    
    \caption{LinMSE0}
    \end{subfigure}
    {\hskip 4pt}
    \begin{subfigure}[b]{0.23\textwidth}
    \includegraphics[width=\textwidth]{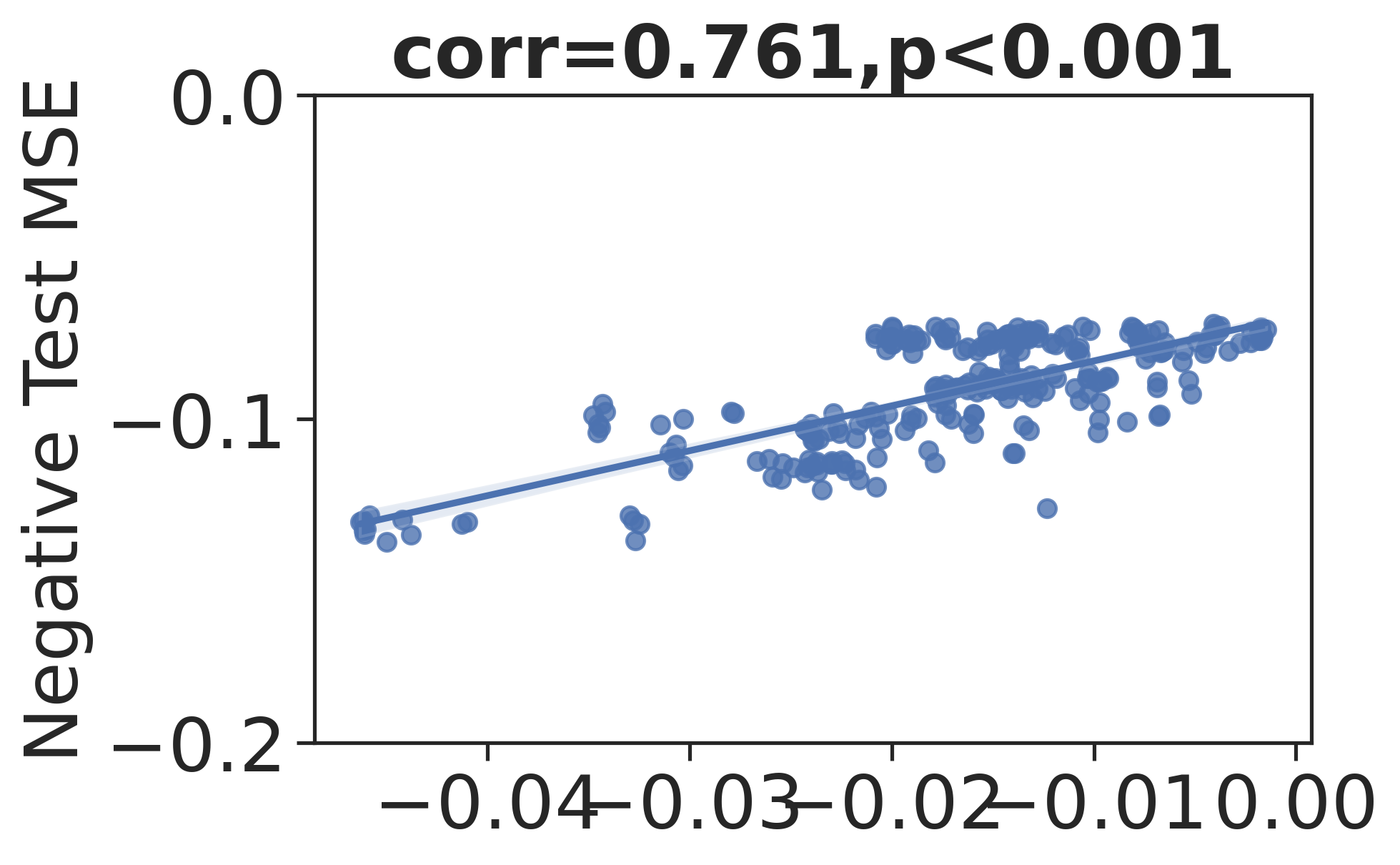}
    \caption{LinMSE1}
    \end{subfigure}
    
    {\vskip 4pt}
    
    \begin{subfigure}[b]{0.23\textwidth}
    \includegraphics[width=\textwidth]{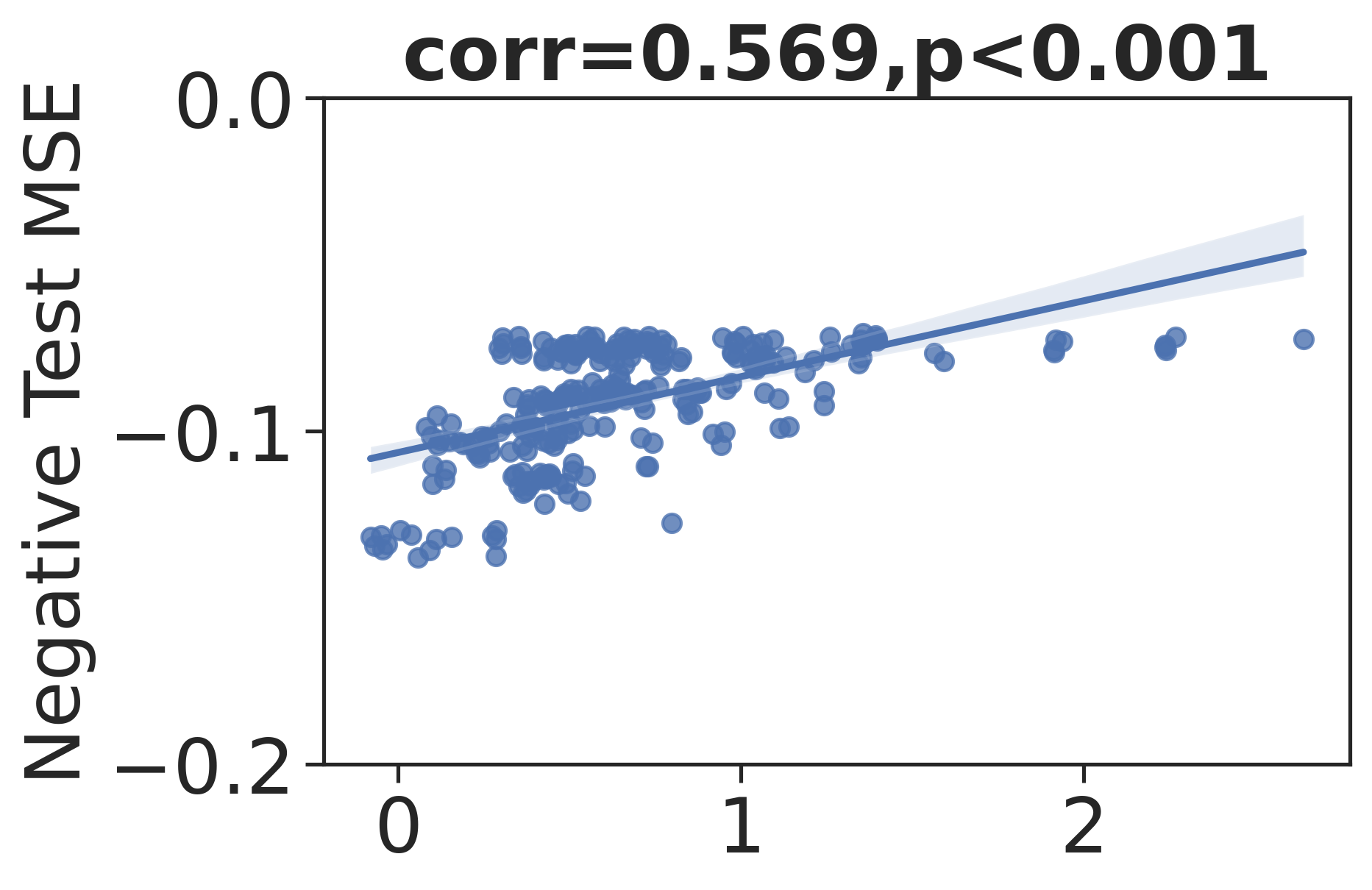}
    \caption{LabLogME}
    \end{subfigure}
    {\hskip 4pt}
    \begin{subfigure}[b]{0.23\textwidth}
    \includegraphics[width=\textwidth]{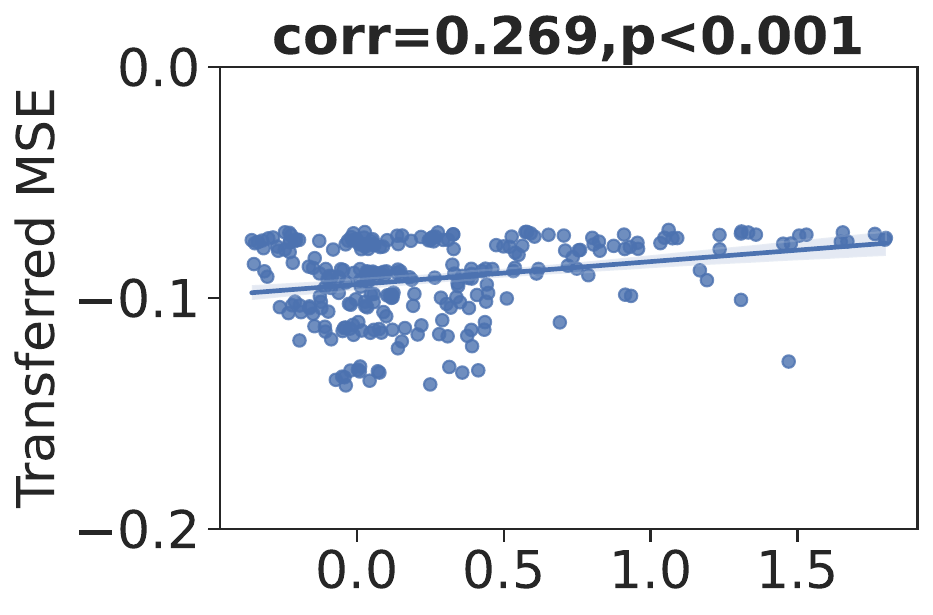}
    \caption{LabTransRate}
    \end{subfigure}
    {\hskip 4pt}
    \begin{subfigure}[b]{0.23\textwidth}
\includegraphics[width=\textwidth]{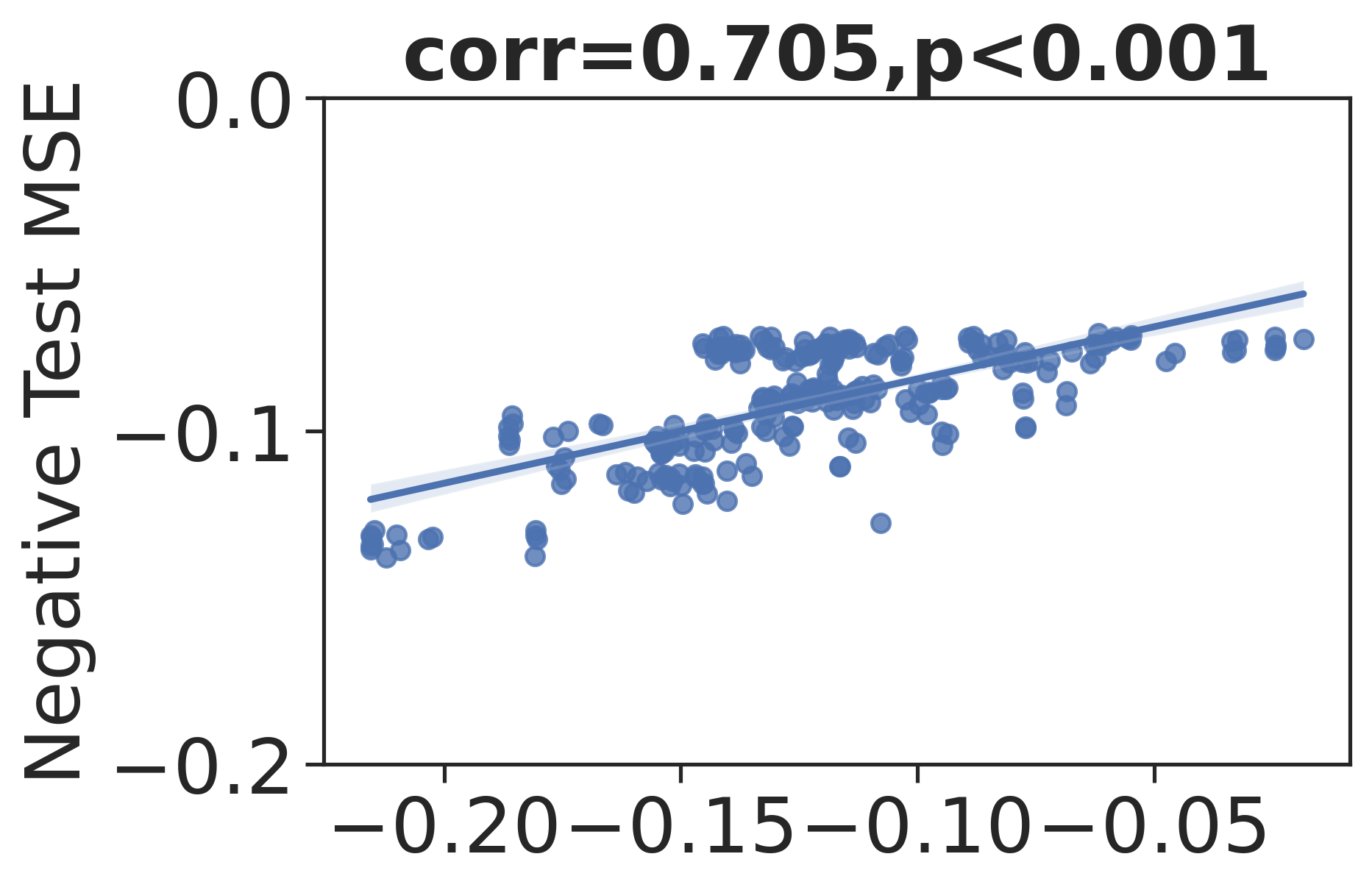}
    \caption{LabMSE0}
    \end{subfigure}
    {\hskip 4pt}
    \begin{subfigure}[b]{0.23\textwidth}
    \includegraphics[width=\textwidth]{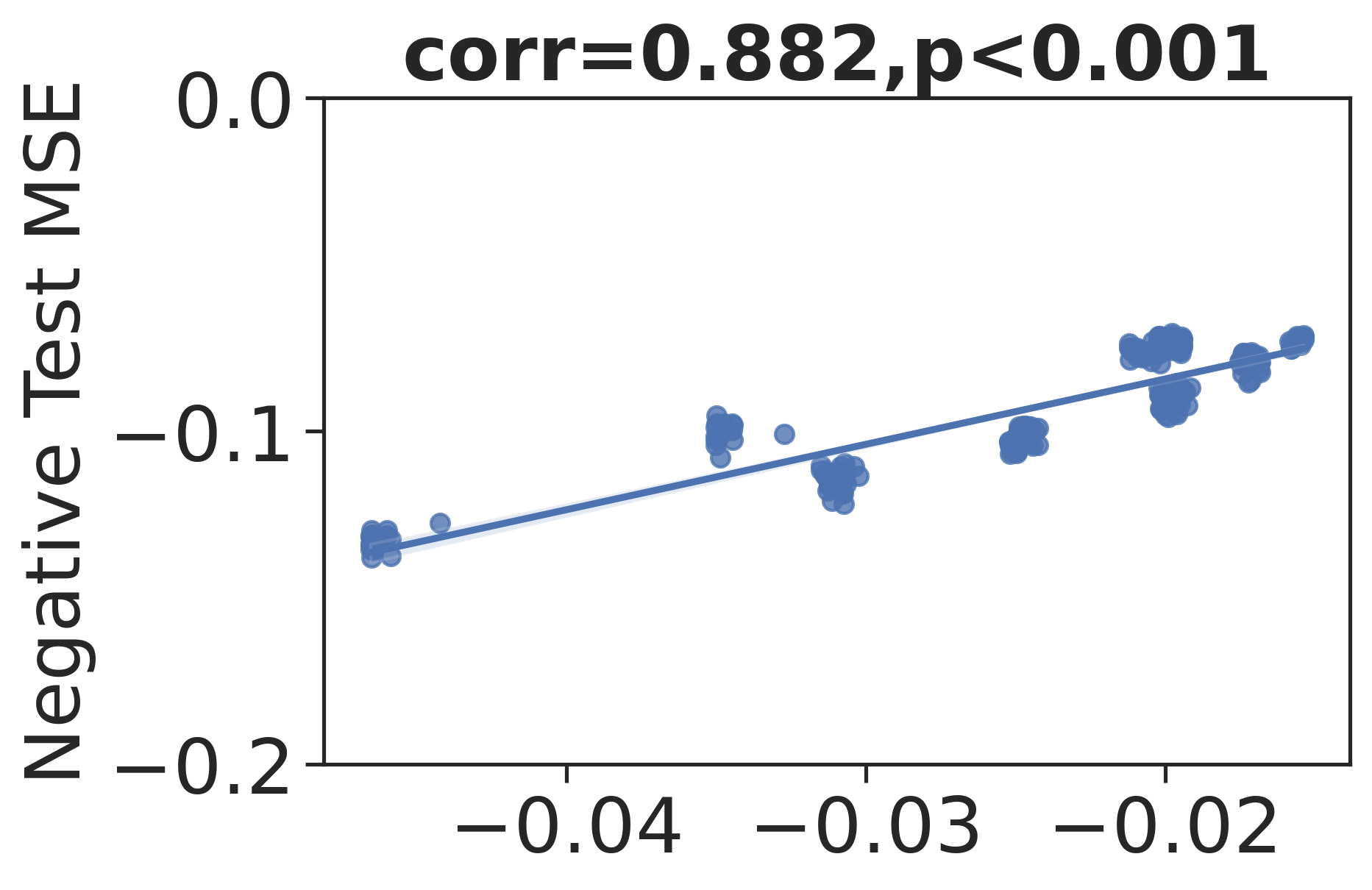}
    \caption{LabMSE1}
    \end{subfigure}
    {\vskip -0.2cm}
    \caption{\textbf{Correlation coefficients and $p$-values between transferability estimators and negative test MSEs} when transferring with full fine-tuning between any two different keypoints (with shared inputs) on OpenMonkey.}
    \label{fig:shared_input_openmonkey_full_ft}
\end{figure*}

\begin{table*}[ht]
\caption{{\bf Correlation coefficients for all source tasks} on CUB-200-2011. Bold numbers indicate best results in each row. Asterisks (*) indicate best results among the corresponding label-based or feature-based methods.}
\resizebox{\textwidth}{!}{%
\centering
\begin{tabular}{cccccccccc}
\toprule
\multirow{3}{*}{\parbox{1.4cm}{Transfer setting}} & \multirow{3}{*}{Source task} & \multicolumn{4}{c}{Label-based method} & \multicolumn{4}{c}{Feature-based method} \\
\cmidrule(lr){3-6} \cmidrule(lr){7-10}
& & LabLogME & LabTransRate & LabMSE0 & LabMSE1 & LogME & TransRate & LinMSE0  & LinMSE1 \\
\midrule
\multirow{15}{*}{\parbox{1.4cm}{Head re-training}} & Back & 0.743 & 0.116 & 0.956 & \textbf{0.966*} & 0.920 & 0.273 & 0.931 & 0.964*\\ 
& Beak & 0.863 & 0.229 & 0.922* & 0.915 & 0.878 & 0.158 & 0.906 & \textbf{0.945*}\\ 
& Belly & 0.892 & 0.097 & 0.970 & \textbf{0.982*} & 0.933 & 0.188 & 0.932 & \textbf{0.982*}\\ 
& Breast & 0.915 & 0.120 & 0.935 & 0.945* & 0.903 & 0.279 & 0.922 & \textbf{0.961*}\\ 
& Crown & 0.917 & 0.041 & 0.962 & 0.966* & 0.913 & 0.251 &0.945 & \textbf{0.979*}\\ 
& Forehead & 0.888 & 0.076 & 0.941* & 0.939 & 0.885 & 0.221 & 0.924 & \textbf{0.966*}\\ 
& Left eye & 0.035 & 0.076 & 0.913 & 0.964* & 0.924 & 0.289 & 0.945 & \textbf{0.969*}\\ 
& Left leg & 0.261 & 0.221 & 0.935 & \textbf{0.975*} & 0.935 & 0.223 & 0.953 & \textbf{0.975*}\\ 
& Left wing & 0.260 & 0.170 & 0.964 & \textbf{0.994*} & 0.980 & 0.173 & \textbf{0.994*} & \textbf{0.994*}\\ 
& Nape& 0.889 & 0.085 & 0.922 & 0.942* & 0.900 & 0.300 & 0.929 & \textbf{0.953}*\\ 
& Right eye & 0.625 & 0.242 & 0.904 & 0.974* & 0.921 & 0.244 & 0.948 & \textbf{0.975*}\\ 
& Right leg & 0.508 & 0.047 & 0.958 & 0.989* & 0.942 & 0.217 & 0.954 & \textbf{0.990*}\\ 
& Right wing & 0.521 & 0.167 & 0.907 & 0.979* & 0.935 & 0.270 & 0.946 & \textbf{0.980*}\\ 
& Tail & 0.591 & 0.392 & 0.900 & \textbf{0.927*} & 0.872 & 0.544 & 0.880 & 0.890*\\ 
& Throat & 0.896 & 0.124 & 0.938 & 0.941* & 0.890 & 0.291 & 0.924 & \textbf{0.956*}\\ 
\midrule
\multirow{15}{*}{\parbox{1.4cm}{Half fine-tuning}} 
& Back & 0.714 & 0.076 & 0.791 & 0.814* & 0.835 & 0.168 & \textbf{0.911*} & 0.873\\ 
& Beak & 0.663 & 0.160 & 0.831* & 0.772 & 0.765 & 0.076 & 0.883 & \textbf{0.899*}\\ 
& Belly & 0.528 & 0.233 & 0.655 & 0.752* &0.758 & 0.309 & \textbf{0.849*} &0.764\\ 
& Breast & 0.730 & 0.100 & 0.802* &0.779 & 0.762 & 0.152 & \textbf{0.867}* & 0.850\\ 
& Crown & 0.644 & 0.068 & 0.752 & 0.776* & 0.714 & 0.165 & \textbf{0.832}* & 0.816\\
& Forehead & 0.654 & 0.032 & 0.804* & 0.786 & 0.727 & 0.120 & 0.859 & \textbf{0.873*} \\ 
& Left eye & 0.420 & 0.046 & \textbf{0.913*} & 0.853 & 0.812 & 0.227 & 0.892* &0.865\\ 
& Left leg & 0.121 & 0.095 & 0.721 & 0.819* & 0.845 & 0.150 & \textbf{0.893}* & 0.832\\ 
& Left wing & 0.352 & 0.150 & \textbf{0.949}* & 0.918 & 0.859 & 0.189 & 0.919* & 0.918 \\ 
& Nape& 0.660 & 0.055 & 0.705 & 0.770* & 0.751& 0.181 & \textbf{0.863*} & 0.802\\ 
& Right eye & 0.561 & 0.221 & \textbf{0.911*} & 0.873 & 0.786 & 0.180 & 0.871 & 0.890*\\ 
& Right leg & 0.268 & 0.125 & 0.690 & 0.804* & 0.810 & 0.069 & \textbf{0.861*} & 0.820\\ 
& Right wing & 0.407 & 0.133 & 0.495 & 0.613* & 0.516 & 0.338 & 0.521 & \textbf{0.617*}\\ 
& Tail & 0.801 & 0.117 & 0.930* & 0.812 & 0.848 & 0.285 & 0.924 & \textbf{0.968*}\\ 
& Throat & 0.767 & 0.013 & 0.870* & 0.810 & 0.811 & 0.253 & \textbf{0.900*} & 0.873\\ 
\midrule
\multirow{15}{*}{\parbox{1.4cm}{Full fine-tuning}} & Back & 0.710 & 0.085 & 0.785 & 0.808* & 0.829 & 0.178 & \textbf{0.906*} & 0.868\\ 
& Beak & 0.659 & 0.161 & 0.826* & 0.780 & 0.758 & 0.073 & 0.877 & \textbf{0.899*}\\ 
& Belly & 0.645 & 0.273 & 0.782 & 0.847* & 0.862 & 0.365 & \textbf{0.926*} & 0.856 \\ 
& Breast & 0.740 & 0.104 & 0.811* & 0.791 & 0.768 & 0.152 & \textbf{0.871*} & 0.859\\ 
& Crown & 0.647 & 0.073 & 0.756 & 0.784* & 0.717 & 0.157 & \textbf{0.834*} & 0.821 \\ 
& Forehead & 0.648 & 0.037 & 0.799* & 0.783 & 0.723 & 0.111 & 0.855 & \textbf{0.869*} \\ 
& Left eye & 0.224 & \textbf{0.456}* & 0.297 & 0.347 & 0.333* &0.246 & 0.282 & 0.326 \\ 
& Left leg & 0.057 & 0.067 & 0.659 & 0.769* & 0.796 & 0.146 & \textbf{0.850}* & 0.783 \\ 
& Left wing & 0.342 & 0.159 & \textbf{0.954}* & 0.915 & 0.860 & 0.195 & 0.920* & 0.914\\ 
& Nape & 0.667 & 0.041 & 0.713 & 0.779* & 0.752 & 0.177 & \textbf{0.864*} & 0.810 \\ 
& Right eye & 0.549 & 0.213 & \textbf{0.915*} & 0.876 & 0.794 & 0.199 & 0.877 & 0.893* \\ 
& Right leg & 0.237 & 0.377 & 0.673 & 0.692* & 0.755 & 0.431 & \textbf{0.766}* &0.693 \\ 
& Right wing& \textbf{0.254}* & 0.046 & 0.237 & 0.223 & 0.225 & 0.093 & 0.227* & 0.220 \\ 
& Tail & 0.803 & 0.122 & 0.930* & 0.818 & 0.846 & 0.288 & 0.923 & \textbf{0.969}*\\ 
& Throat & 0.665 & 0.027 & 0.801* & 0.779 & 0.744 & 0.256 & \textbf{0.850*} & 0.834\\ 
\bottomrule
\end{tabular}
}
\label{tab:full_all_sources_cub}
\end{table*}

\begin{table*}[ht]
\caption{{\bf Correlation coefficients for all source tasks} on OpenMonkey. Bold numbers indicate best results in each row. Asterisks (*) indicate best results among the corresponding label-based or feature-based methods.}
\resizebox{\textwidth}{!}{%
\centering
\begin{tabular}{cccccccccc}
\toprule
\multirow{3}{*}{\parbox{1.4cm}{Transfer setting}} & \multirow{3}{*}{Source task} & \multicolumn{4}{c}{Label-based method} & \multicolumn{4}{c}{Feature-based method} \\
\cmidrule(lr){3-6} \cmidrule(lr){7-10}
& & LabLogME & LabTransRate & LabMSE0 & LabMSE1 & LogME & TransRate & LinMSE0 & LinMSE1 \\
\midrule
\multirow{17}{*}{\parbox{1.4cm}{Head re-training}} 
& Right eye & 0.894 & 0.859 & \textbf{0.986*} & 0.835 & 0.918 & 0.846 & 0.978 & \textbf{0.986*}\\ 
&Left eye& 0.895 & 0.854 &\textbf{0.987}* & 0.838 & 0.868 & 0.858 & 0.981 & \textbf{0.987*}\\ 
& Nose & 0.908 & 0.849 & 0.988* & 0.849 & 0.818 & 0.837 & 0.978 & \textbf{0.989*}\\ 
& Head & 0.941 & 0.881 & \textbf{0.992*} & 0.821 & 0.897 & 0.884 & 0.983* & 0.978\\ 
& Neck & 0.972 & 0.862 & \textbf{0.998}* & 0.887 & 0.932 & 0.839 & 0.982 & 0.987*\\ 
& Right shoulder & 0.977 & 0.837 & \textbf{0.994*} & 0.891 & 0.842 & 0.811 & 0.982* & 0.980\\ 
& Right elbow & 0.963 & 0.529 & \textbf{0.994*} & 0.940 & 0.469 & 0.564 & 0.969 & 0.990*\\ 
& Right wrist & 0.970 & 0.753 & \textbf{0.993}* & 0.939 & 0.615 & 0.446& 0.963 & 0.990*\\ 
& Left shoulder & 0.972 & 0.800 & \textbf{0.997}* & 0.915 & 0.823 & 0.808 & 0.988* & 0.988*\\ 
& Left elbow & 0.960 & 0.546 & \textbf{0.994}* & 0.948 & 0.711 & 0.572 & 0.969 & 0.989*\\ 
& Left wrist & 0.975 & 0.597 & \textbf{0.993*} & 0.951 & 0.964 & 0.544 & 0.963 & \textbf{0.993*}\\ 
& Hip & 0.922 & 0.540 & 0.989* & 0.325 & 0.874 & 0.557 & 0.800 & \textbf{0.991*}\\ 
& Right knee & 0.925 & 0.080 & 0.975* & 0.850 & 0.766 & 0.331 & 0.945 & \textbf{0.993*}\\ 
&Right ankle & 0.931 & 0.411 & 0.989* & 0.770 & 0.737 & 0.371 & 0.930 & \textbf{0.997*}\\ 
&Left knee & 0.923 & 0.160 & 0.978* & 0.848 & 0.692 & 0.209 & 0.936 & \textbf{0.994*}\\ 
&Left ankle & 0.916 & 0.416 & 0.986* & 0.775 & 0.852 & 0.329 & 0.925 & \textbf{0.998*}\\ 
&Tail & 0.936 & 0.712 & \textbf{0.993*} & 0.312 & 0.821 & 0.662 & 0.897 & 0.990*\\ 
\midrule
\multirow{17}{*}{\parbox{1.4cm}{Half fine-tuning}} 
& Right eye & 0.795 & 0.734 & 0.906* & 0.883 & 0.835 & 0.709 & \textbf{0.963*} & 0.923 \\ 
&Left eye & 0.797 & 0.731 & 0.905* & 0.879 & 0.771 & 0.719 & \textbf{0.960*} & 0.918 \\ 
& Nose & 0.829 & 0.736 &0.914* &0.872 & 0.649 & 0.721 & \textbf{0.968*} & 0.916 \\ 
& Head & 0.835 & 0.759 & 0.921* & 0.882 & 0.804 & 0.751 & \textbf{0.964*} & 0.928\\ 
& Neck & 0.902 & 0.793 & 0.929* & 0.871 & 0.745 & 0.765 & \textbf{0.969*} &0.915\\ 
& Right shoulder & 0.887 & 0.725 & 0.924* & 0.890 & 0.751 & 0.758 & \textbf{0.972*} & 0.924\\ 
& Right elbow & 0.764 & 0.250 & 0.806 & 0.914* & 0.048 & 0.602 & \textbf{0.931}* & 0.821\\ 
&Right wrist & 0.806 & 0.501 & 0.823 & 0.903* &0.172 & 0.643 & 
\textbf{0.929*} & 0.819\\ 
& Left shoulder & 0.893 & 0.718 & 0.927* &0.899 & 0.702 & 0.774 & \textbf{0.972*} & 0.930\\ 
&Left elbow& 0.782 & 0.369 & 0.824 & 0.919* & 0.366 & 0.594 & \textbf{0.946*} & 0.839\\ 
&Left wrist & 0.822 & 0.523 & 0.828 & 0.902* & 0.765 & 0.663 & \textbf{0.932*} & 0.824\\ 
&Hip& 0.030 & 0.487 & 0.233 & \textbf{0.910*} & 0.006 & 0.359 & 0.800* & 0.305\\ 
&Right knee& 0.481 & 0.429 & 0.598 & \textbf{0.906*} & 0.186 & 0.067&  0.831* & 0.687\\ 
&Right ankle & 0.357 & 0.275 & 0.534 & \textbf{0.910*} & 0.286 & 0.226 & 0.806* & 0.632\\ 
&Left knee & 0.467 & 0.355 & 0.601 & \textbf{0.899}* & 0.172 & 0.215 & 0.855* & 0.692\\ 
&Left ankle& 0.331 & 0.242 & 0.530 & \textbf{0.904*} & 0.197 & 0.303 & 0.822* & 0.632\\ 
&Tail & 0.231 & 0.196 & 0.434 & \textbf{0.829*} & 0.160 & 0.121 & 0.729* & 0.494\\ 
\midrule
\multirow{17}{*}{\parbox{1.4cm}{Full fine-tuning}}
&Right eye & 0.796 & 0.711 & 0.905* & 0.894 & 0.821&0.694&\textbf{0.959*}&0.927\\ 
&Left eye& 0.790 & 0.734 & 0.904* & 0.882 & 0.763 & 0.714 & \textbf{0.957*} & 0.921\\ 
& Nose & 0.810 & 0.731 & 0.912* & 0.892 & 0.642 & 0.709 & \textbf{0.960*} &0.932\\ 
&Head& 0.801 & 0.737 & 0.900* & 0.892&0.772&0.718&\textbf{0.947*}&0.920\\ 
&Neck&0.893&0.782&0.930*&0.886&0.755&0.743&\textbf{0.962*}&0.926\\ 
&Right shoulder&0.896&0.722&0.936*&0.908&0.759&0.750&\textbf{0.975*}&0.940\\ 
&Right elbow&0.689&0.168&0.736&0.878*&0.047&0.562&\textbf{0.888*}&0.761\\ 
&Right wrist&0.796&0.505&0.805&0.876*&0.199&0.644&\textbf{0.910*}&0.803\\ 
&Left shoulder&0.872&0.690&0.901*&0.882&0.670&0.762&\textbf{0.955*}&0.903\\ 
&Left elbow&0.726&0.282&0.774&0.904*&0.326&0.538&\textbf{0.914*}&0.797\\ 
&Left wrist&0.787&0.488&0.787&0.868*&0.725&0.672&\textbf{0.903*}&0.785\\ 
&Hip&0.016&0.518&0.173&\textbf{0.894*}&0.038&0.382&0.757*&0.238\\ 
&Right knee&0.391&0.518&0.516&\textbf{0.891*}&0.096&0.141&0.763*&0.614\\ 
&Right ankle&0.246&0.396&0.437&\textbf{0.889*}&0.185&0.340&0.726*&0.546\\ 
&Left knee&0.381&0.448&0.521&\textbf{0.891*}&0.149&0.303&0.789*&0.618\\ 
&Left ankle&0.244&0.297&0.444&\textbf{0.871*}&0.098&0.357&0.751*&0.551\\ 
&Tail&0.105&0.299&0.309&\textbf{0.824*}&0.047&0.212&0.628*&0.372\\ 
\bottomrule
\end{tabular}
}
\label{tab:full_all_sources_openmonkey}
\end{table*}

\end{document}